\def\eqref#1{equation~\ref{#1}}
\def\1{\bm{1}}
\DeclareMathAlphabet{\mathsfit}{\encodingdefault}{\sfdefault}{m}{sl}
\SetMathAlphabet{\mathsfit}{bold}{\encodingdefault}{\sfdefault}{bx}{n}
\crefname{section}{Sec.}{Secs.}
\Crefname{section}{Section}{Sections}
\Crefname{table}{Table}{Tables}
\crefname{table}{Tab.}{Tabs.}
\newtheorem{theorem}{Theorem}
\newtheorem{lemma}[theorem]{Lemma}
\newtheorem{corollary}[theorem]{Corollary}
\newtheorem{proposition}{Proposition}[section]
\newtheorem*{remark}{Remark}
\newtheorem{definition}{Definition}[section]
\newcommand{\testx}{x_{n+1}}
\newcommand{\testadv}{\tilde{x}_{n+1}}
\newcommand{\testy}{y_{n+1}}
\newcommand{\predSet}[1]{C(#1)}
\newcommand{\prob}{\mathbb{P}}
\newcommand{\posttrans}{\mathcal{Q}}
\newcommand{\smoothscorebs}{\tilde{S}}
\newcommand{\scorers}{S_{\text{RS}}}
\newcommand{\scoretrans}{S_{\text{PTT}}}
\newcommand{\smoothscore}[1]{\tilde{S}_{\text{#1}}}
\newcommand{\indicator}[1]{\mathds{1}_{\{#1\}}}
\newcommand{\scoreset}[2]{\{S(x, y)\}_{(x, y) \in #2_{\text{#1}}}}
\newcommand{\add}[1]{\textcolor{brown}{#1}}
\newcommand{\algoname}{\texttt{RSCP+}}
\newcommand{\covgap}{\alpha_{\text{gap}}}
\newcommand{\tauRSCPplus}{\tau_{\text{MC}}}
\newcommand{\conserve}[0]{cons.}
\renewcommand{\cite}{\citep}
\let\oldproof=\proofname
\title{Provably Robust Conformal Prediction \\with Improved Efficiency}
\author{Ge Yan\\
CSE, UCSD\\
\texttt{geyan@ucsd.edu}\\
\And
Yaniv Romano\\
ECE, Technion\\
\texttt{yromano@technion.ac.il}
\\
\And
Tsui-Wei Weng\\
HDSI, UCSD\\
\texttt{lweng@ucsd.edu}
}
\begin{document}

\maketitle

\begin{abstract}
Conformal prediction is a powerful tool to generate uncertainty sets with guaranteed coverage using any predictive model, under the assumption that the training and test data are i.i.d.. Recently, it has been shown that adversarial examples are able to manipulate conformal methods to construct prediction sets with invalid coverage rates, as the i.i.d. assumption is violated. To address this issue, a recent work, Randomized Smoothed Conformal Prediction (RSCP), was first proposed to certify the robustness of conformal prediction methods to adversarial noise. However, RSCP has two major limitations: (i) its robustness guarantee is flawed when used in practice and (ii) it tends to produce large uncertainty sets. To address these limitations, we first propose a novel framework called \algoname~to provide provable robustness guarantee in evaluation, which fixes the issues in the original RSCP method. Next, we propose two novel methods, Post-Training Transformation (PTT) and Robust Conformal Training (RCT), to effectively reduce prediction set size with little computation overhead. Experimental results in CIFAR10, CIFAR100, and ImageNet suggest the baseline method only yields trivial predictions including full label set, while our methods could boost the efficiency by up to $4.36\times$, $5.46\times$, and $16.9\times$ respectively and provide practical robustness guarantee. Our codes are available at \url{https://github.com/Trustworthy-ML-Lab/Provably-Robust-Conformal-Prediction}.
\end{abstract}

\section{Introduction}
% \lily{1. conformal prediction; 2.adversarial example and conformal prediction; 3. motivation and contribution}
% In classification tasks, the standard procedure is to train a classifier 

% $\hat{\pi}_y(x)$ to estimate the probability of given input example $x$ belongs to class $y$. In the testing stage, the most probable label can then be predicted.\gecmt{Platt scaling} But for some safety-critical applications, we want some approaches to quantify the uncertainty of prediction, for example Platt scaling \cite{platt1999probabilistic}. In this work, we study 
Conformal prediction \cite{lei2014distribution, papadopoulos2002inductive, vovk2005algorithmic} has been a powerful tool to quantify prediction uncertainties of modern machine learning models. 
% The key idea of conformal prediction is to first train a classifier on a training set and then define a non-conformity score of the classifier of interest on a hold-out set (a.k.a. calibration set) which would reflect the errors of the underlying classifier. Suppose we are given $m$ training samples $D_{\text{train}} = \{(x_i, y_i)\}_{i=1}^m$ with feature vectors $x_{i} \in \mathbb{R}^p$ and discrete and unordered class label $y_{i} \in [K]$, where $[K] := \{1, \cdots, K\}$. Let $\testx$ be the given test sample and $\testy$ be the unknown test label. Under the assumption that the training and test samples are drawn i.i.d and sampled from an unknown data distribution $P_{xy}$, conformal prediction algorithms construct a prediction set $C(\testx) \subseteq [K]$ which is guaranteed to contain the test label $\testy$ at any desired coverage probability $1-\alpha \in (0,1)$: \gecmt{to rewrite: too cumbersome for introduction maybe. }
% \begin{equation}
%     \prob[\testy \in C(\testx)] \geq 1-\alpha.
%     \label{eq:rcptarget}
% \end{equation}
For classification tasks, given a test input $x_{n+1}$, it could generate a prediction set $\predSet \testx$ with coverage guarantee:
\begin{equation}
    \prob[\testy \in \predSet\testx] \geq 1-\alpha,
    \label{eq:cleanCoverageGuarantee}
\end{equation}
where $\testy$ is the ground truth label and $1-\alpha$ is user-specified target coverage. This property is desirable in safety-critical applications like autonomous vehicles and clinical applications. 
In general, it is common to set the coverage probability $1-\alpha$ to be high, e.g. 90\% or 95\%, as we would like the ground truth label to be contained in the prediction set with high probability. It is also desired to have the smallest possible prediction sets $C(\testx)$ as they are more informative. In this paper, we use the term "efficiency" to compare conformal prediction methods: we say a conformal prediction method is more efficient if the size of the prediction set is smaller.

Despite the power of conformal prediction, recent work~\cite{gendler2021adversarially} showed that conformal prediction is unfortunately prone to adversarial examples -- that is, the coverage guarantee in \cref{eq:cleanCoverageGuarantee} may not hold anymore because adversarial perturbation on test data breaks the i.i.d. assumption and thus the prediction set constructed by vanilla conformal prediction becomes invalid.
To solve this problem, \citet{gendler2021adversarially} proposes a new technique, named Randomized Smoothed Conformal Prediction (RSCP), which is able to construct new prediction sets $C_\epsilon(\testadv)$ that is robust to adversarial examples:
\begin{equation}
    \prob[y_{n+1} \in C_\epsilon(\testadv)] \geq 1-\alpha,
    \label{eq:rscp_coverage}
\end{equation}
where $\testadv$ denotes a perturbed example that satisfies $\|\testadv-\testx\|_2 \leq \epsilon$ and $\epsilon > 0$ is the perturbation magnitude. The key idea of RSCP is to modify the vanilla conformal prediction procedure  with randomized smoothing~\cite{cohen2019certified, duchi2012randomized, salman2019provably} so that the impact of adversarial perturbation could be bounded and compensated. 

However, RSCP has two major limitations: 
(1) \textit{the robustness guarantee of RSCP is flawed}: RSCP introduces randomized smoothing to provide robustness guarantee. Unfortunately, the derived guarantee is invalid when Monte Carlo sampling is used for randomized smoothing, which is how randomized smoothing is implemented in practice \cite{cohen2019certified}. Therefore, their robustness certification is invalid, despite empirically working well. 
(2) \textit{RSCP has low efficiency}: The average size of prediction sets of RSCP is much larger than the vanilla conformal prediction, as shown in our experiments (\cref{fig:sec1_motivation}).
% \gecmt{We don't compare RSCP and vanilla CP in the experiments in main text.}
% This resembles the well-known robustness-accuracy trade-off in adversarial training literature \cite{raghunathan2020understanding, yang2020closer}, but the difference here is that the trade-off reflects on the size of the conformal prediction set.

In this paper,
% \delete{we take a first step to study the trade-off between robustness and efficiency in robust conformal prediction and further alleviate this trade-off. We propose two generic frameworks to effectively improve the efficiency of robust conformal prediction with little computation overhead}
we will address these two limitations of RSCP to allow \textit{efficient} and \textit{provably robust} conformal prediction by proposing a new theoretical framework \algoname~in \cref{sec:chanllenge1Guarantee} to guarantee robustness, along with two new methods (PTT \& RCT) in \cref{sec:challenge2Efficiency} to effectively decrease the prediction set size. 
We summarize our contributions below:
\begin{enumerate}
    \item  
    We first identify the major issue of RSCP in robustness certification and address this issue by proposing a new theoretical framework called \algoname. 
    The main difference between \algoname~and RSCP is that our \algoname~ uses the Monte Carlo estimator directly as the base score for RSCP, and amends the flaw of RSCP with simple modification on the original pipeline. 
    To our best knowledge, \algoname~is the first method to provide \textit{practical certified robustness} for conformal prediction.
    \item
    We further propose two methods to improve the efficiency of \algoname: a scalable, training-free method called PTT and a general robust conformal training framework called RCT. Empirical results suggest PTT and RCT are necessary for providing guaranteed robust prediction sets.
    \item We conduct extensive experiments on CIFAR10, CIFAR100 and ImageNet with \algoname, PTT and RCT. Results show that without our method the baseline only gives trivial predictions, which are uninformative and useless. In contrast, our methods provide practical robustness certification and boost the efficiency of the baseline by up to $4.36\times$ on CIFAR10, $5.46\times$ on CIFAR100, and $16.9\times$ on ImageNet. 
\end{enumerate}

% \add{\paragraph{Outline.} The key goal of our paper is to build a prediction set with a certified robustness guarantee. For this goal, in \cref{sec:chanllenge1Guarantee} we first proposed RSCP+ to amend the theoretical flaw in previous works. In \cref{sec:challenge2Efficiency} we then proposed PTT/RCT  =to address additional challenges to improve the efficiency of robust conformal prediction.}

\begin{figure}[t!]
    \centering
    \includegraphics[scale=0.7]{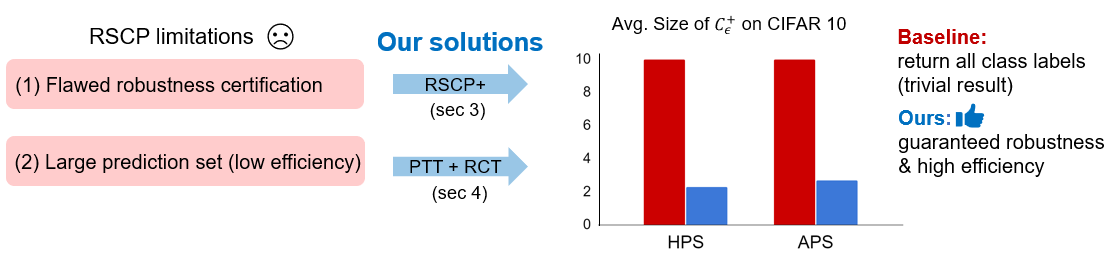}
    \caption{An overview of this work: We address two limitations of RSCP~\cite{gendler2021adversarially} by proposing \algoname~ (\cref{sec:chanllenge1Guarantee}) \& PTT + RCT (\cref{sec:challenge2Efficiency}), which enables the first \textit{provable} and \textit{efficient} robust conformal prediction. As we show in the experiments in \cref{Sec: Exp}, our proposed method could provide useful robust prediction sets information while the baseline failed.}
    \label{fig:overview}
\end{figure}

% The post training transformation only introduce around 10\% additional computational cost.

% \lily{TODO: add an overview figure at the top of page 2}

% \lily{TODO: remove the labels of equations if they are not referred anywhere. }

% Randomized Smoothed Conformal Prediction (RSCP),  with conformal prediction. RSCP constructs a Lipschitz continuous (non-)conformity score by randomized smoothing and ensures the robustness against adversarial attacks by increasing the threshold $\tau$. 

%The calibration scores measures the goodness-of-fit of a predictor $\hat{\pi}_y(x)$ on an independent calibration set of holdout points $D_{\text{cal}}$, and the threshold $\tau$ is the $\alpha(1 + 1 / |D_{\text{cal}}|)$-quantile of the calibration scores. 

% The basic idea of RSCP is: first it constructs a Lipschitz continuous score $\smoothscorebs$ by performing randomized smoothing on base score $S$, then it raise the threshold to account for adversarial examples.

% because it needs to increase the threshold $\tau$ in order to take adversarial examples into account. 

\section{Background and related works}
\subsection{Conformal prediction}
\label{subsec:RCP}
Suppose $D = \{(x_i, y_i)\}_{i=1}^n$ is an i.i.d. dataset, where $x_i \in \mathbb{R}^p $ denotes the features of $i$th sample and $y_i \in [K]:= \{1, \ldots, K\}$ denotes its label. Conformal prediction method divides $D$ into two parts: a training set $D_{\text{train}} = \{(x_i, y_i)\}_{i=1}^m$ and a calibration set $D_{\text{cal}} = D \setminus D_{\text{train}}$. 
The training set $D_{\text{train}}$ is utilized to train a classifier function $\hat{\pi}(x): \mathbb{R}^p \rightarrow  [0,1]^K$. Given classifier $\hat{\pi}$, a non-conformity score function 
$S(x, y): \mathbb{R}^p \times [K] \rightarrow \mathbb{R}$ is defined for each class $y$ based on classifier's prediction $\hat{\pi}(x)$. Next, the calibration set $D_{\text{cal}}$ is utilized to calculate threshold $\tau$, which is the $(1-\alpha)(1 + 1 / |D_{\text{cal}}|)$ empirical quantile of calibration scores $\{S(x, y)\}_{(x, y) \in D_{\text{cal}}}$.
Given a test sample $\testx$, conformal prediction construct a prediction set $C(\testx; \tau)$ as: 
\begin{equation}
    C(\testx; \tau) = \{k \in [K] \mid S(\testx, k) \leq \tau \},
    \label{eq:vanillaCPPredSet}
\end{equation} 
where 
\begin{equation}
    \tau = Q_{1-\alpha}(\{S(x, y)\}_{(x, y) \in D_{\text{cal}}})
    \label{eq:vanillaCPThresholdDefinition}
\end{equation}
and $Q_p(D_{\text{cal}})$ denote the $p(1 + 1 / |D_{\text{cal}}|)$-th empirical quantile of the calibration scores.
% where $\tau$ is a cut-off threshold related to coverage probability $1-\alpha$. Specifically, $\tau$ is the $(1-\alpha)(1 + 1 / |D_{\text{cal}}|)$-th empirical quantile of calibration scores $\{S(x, y)\}_{(x, y) \in D_{\text{cal}}}$. 
In the remainder of the paper, we may omit the parameter $\tau$ and write the prediction set simply as $C(x)$ when the context is clear. Conformal prediction ensures the coverage guarantee in \cref{eq:cleanCoverageGuarantee} by showing that the score corresponding to the ground truth label is bounded by $\tau$ with probability $1-\alpha$, i.e. $\prob(S(\testx, \testy) \leq \tau) \geq 1-\alpha$.

Note that the above conformal prediction pipeline works for any non-conformity score $S(x, y)$, but the statistical efficiency of conformal prediction is affected by the choice of non-conformity score. % but different non-conformity scores will result in different threshold $\tau$ and hence different prediction set $C(x; \tau)$.
Common non-conformity scores include HPS~\cite{lei2013distribution, sadinle2019least} and APS \cite{romano2020classification}: \begin{align}
    S_{\text{HPS}}(x, y)  = 1 - \hat{\pi}_y(x), \;S_{\text{APS}}(x, y)  = \sum_{y^{\prime} \in [K]} \hat{\pi}_{y^{\prime}}(x) \indicator{\hat{\pi}_{y^{\prime}}(x) > \hat{\pi}_y(x)} + \hat{\pi}_y(x) \cdot u,
\end{align}
where $u$ is a random variable sampled from a uniform distribution over $[0, 1]$. 

% To further improve APS, \cite{angelopoulos2020uncertainty} proposed regularized APS (RAPS), which is based on APS with an additional regularization term:
% \begin{equation*}
%     S_{\text{RAPS}}(x, y) = S_{\text{APS}}(x, y) + \lambda \cdot (o_y(x) - k_{\text{reg}})^+,
% \end{equation*}
% where $o_y(x) = |\{y^{\prime} \in [K] \mid \hat{\pi}_{y^{\prime}}(x) \geq \hat{\pi}_y(x)\}|$ denotes the ranking of $y$ among the labels according to the predicted probability $\hat{\pi}_{y}(x)$ and $\lambda,  k_{\text{reg}} \geq 0$ are hyperparameters.

\subsection{Randomized smoothed conformal prediction}
\label{subsec:rscp}
To ensure the coverage guarantee still holds under adversarial perturbation, \citet{gendler2021adversarially} proposed \emph{Randomized Smoothed Conformal Prediction (RSCP)}, which defines a new non-conformity score $\smoothscorebs$ that can construct new prediction sets that are robust against adversarial attacks. The key idea of RSCP is to consider the worst-case scenario that $\smoothscorebs$ may be affected by adversarial perturbations:  
\begin{equation}
    \smoothscorebs(\testadv, y) \leq \smoothscorebs(\testx, y) + M_\epsilon, \forall y \in [K],
    \label{eq:scotarget}
\end{equation}
where $\testx$ denotes the clean example, $\testadv$ denotes the perturbed example that satisfies $\|\testadv - \testx\|_2 \leq \epsilon$ and $M_\epsilon$ is a non-negative constant. \cref{eq:scotarget} indicates that the new non-conformity score $\smoothscorebs$ on adversarial examples may be inflated, but fortunately the inflation can be bounded. Therefore, to ensure the guarantee in \cref{eq:rscp_coverage} is satisfied, the threshold $\tau$ in the new prediction set needs to be adjusted to $\tau_{\text{adj}}$ defined as $\tau_{\text{adj}} = \tau +  M_\epsilon$ to compensate for potential adversarial perturbations,
% \begin{equation*}
%     \tau_{\text{adj}} = \tau +  M_\epsilon,
%     \label{eq:RSCP_threshold}
% \end{equation*}
and then $C_{\epsilon}$ can be constructed as follows:
\begin{equation}
    C_{\epsilon}(x; \tau_{\text{adj}}) = \{k \in [K] \mid \smoothscorebs(x, k) \leq \tau_{\text{adj}} \},
    \label{eq:RSCPPredSet}
\end{equation}
where $x$ is any test example. From \cref{eq:scotarget}, the validity of $C_\epsilon$ could be verified by following derivation:
\begin{equation}
    y_{n+1}\in C(x_{n+1}) \Rightarrow \smoothscorebs(\testx, y_{n+1}) \leq \tau \Rightarrow \smoothscorebs(\testadv, y_{n+1}) \leq \tau_{\text{adj}} \Rightarrow y_{n+1}\in C_\epsilon(\testadv).
\end{equation}
Thus, the coverage guarantee in \cref{eq:rscp_coverage} is satisfied. To obtain a valid $M_\epsilon$, \citet{gendler2021adversarially} proposed to leverage randomized smoothing \cite{cohen2019certified, duchi2012randomized} to construct $\smoothscorebs$. Specifically, define 
\begin{align}
     \smoothscore{}(x, y) = \Phi^{-1}\left[\scorers(x, y)\right] \text{ and } \scorers(x, y) = \mathbb{E}_{\delta \sim \mathcal{N}(0, \sigma^2 I_p)} S(x + \delta, y),
     \label{eq:scorers}
\end{align}
where $\delta$ is a Gaussian random variable, $\sigma$ is the standard deviation of $\delta$ which controls the strength of smoothing, and $\Phi^{-1}(\cdot)$ is Gaussian inverse cdf. We call $\scorers(x, y)$ the randomized smoothed score from a base score $S(x, y)$, as $\scorers(x, y)$ is the smoothed version of $S(x, y)$ using Gaussian noise on the input $x$.
Since $\Phi^{-1}$ is defined on the interval $[0, 1]$, the base score $S$ must satisfy $S(x, y) \in [0, 1]$. One nice property from randomized smoothing \cite{cohen2019certified} is that it guarantees that $\smoothscore{}$ is Lipschitz continuous with Lipschitz constant $\frac{1}{\sigma}$, 
i.e. $\frac{|\smoothscorebs(\testadv, y_{n+1}) - \smoothscorebs(x_{n+1}, y_{n+1})|}{\|\testadv - x_{n+1}\|_2} \leq \frac{1}{\sigma}$. Hence, we have
\begin{equation}
    \|\testadv - x_{n+1}\|_2 \leq \epsilon \Longrightarrow \smoothscorebs(\testadv, y_{n+1}) \leq \smoothscorebs(x_{n+1}, y_{n+1}) + \frac{\epsilon}{\sigma},
    \label{eq:RScontinuity}
\end{equation}
which is exactly \cref{eq:scotarget} with $M_\epsilon=\frac{\epsilon}{\sigma}$. 
Therefore, when using $\smoothscorebs$ in conformal prediction, the threshold should be adjusted by:
\begin{equation}
    \tau_\text{adj} = \tau + \frac{\epsilon}{\sigma}.
    \label{eq:RSCP_threshold}
\end{equation}

\section{Challenge 1: robustness guarantee}
\label{sec:chanllenge1Guarantee}
In this section, we point out a flaw in the robustness certification of RSCP~\cite{gendler2021adversarially} and propose a new scheme called \algoname~to provide provable robustness guarantee in practice. As we discuss in \cref{subsec:rscp}, the key idea of RSCP is introducing a new conformity score $\smoothscorebs$ that satisfies \cref{eq:RScontinuity}, which gives an upper bound to the impact of adversarial perturbation. However, in practice, $\smoothscorebs$ is intractable due to expectation calculation in $\scorers$. A common practice in randomized smoothing literature is:
\begin{itemize}
    \item \textbf{Step 1:} Approximate $\scorers$ by Monte Carlo estimator: 
    \begin{equation}
        \hat{S}_{\text{RS}}(x, y) = \frac{1}{N_{\text{MC}}} 
  \sum_{i=1}^{N_{\text{MC}}} S(x + \delta_i, y), \delta_i \sim \mathcal{N}(0, \sigma^2 I_p).
  \label{eq:DefinitionMCScore}
    \end{equation}
  \item \textbf{Step 2:} Bound the estimation error via some concentration inequality.
\end{itemize}
In RSCP, however, \textbf{Step 2} is missing, because bounding the error simultaneously on the calibration set is difficult, as discussed in \cref{sec:DifficultyErrorBound}. We argue that the missing error bound makes the robustness guarantee of RSCP invalid in practice. 

  To address this issue, we propose an elegant and effective approach, \algoname,  to fill in the gap and provide the guarantee. In particular, the intrinsic difficulty in bounding Monte Carlo error inspires us to avoid the estimation.
  Thus, in \algoname~we propose a new approach to incorporate the Monte Carlo estimator $\hat{S}_{\text{RS}}$ directly as the (non-)conformity score, which could be directly calculated, unlike $\scorers$. Here, 
  one question may arise is: Can a randomized score (e.g. $\hat{S}_{\text{RS}}$) be applied in conformal prediction and maintain the coverage guarantee? The answer is yes: as we discuss in \cref{sec:proofRandomScore}, many classical (non-)conformity scores (e.g. APS \cite{romano2020classification}) are randomized scores, and the proofs for them are similar to the deterministic scores, as long as the i.i.d. property between calibration and test scores is preserved. Therefore, our $\hat{S}_{\text{RS}}$ is a legit (non-)conformity score. 
  
  The challenge of using $\hat{S}_{\text{RS}}$ is to derive an inequality similar to \cref{eq:RScontinuity}, i.e. connect $\hat{S}_{\text{RS}}(\testadv, y)$ and $\hat{S}_{\text{RS}}(\testx, y)$ (the grey dotted line in \cref{fig:proof_diagram}), so that we can bound the impact from adversarial noises and compensate for it accordingly.
  To achieve this, we use $\scorers$ as a bridge (as shown in \cref{fig:proof_diagram}), and present the result in \cref{thm:MCScoreContinuity_Hoef}.
\begin{figure}[!hbp]
    \centering
    \includegraphics[scale=0.35]{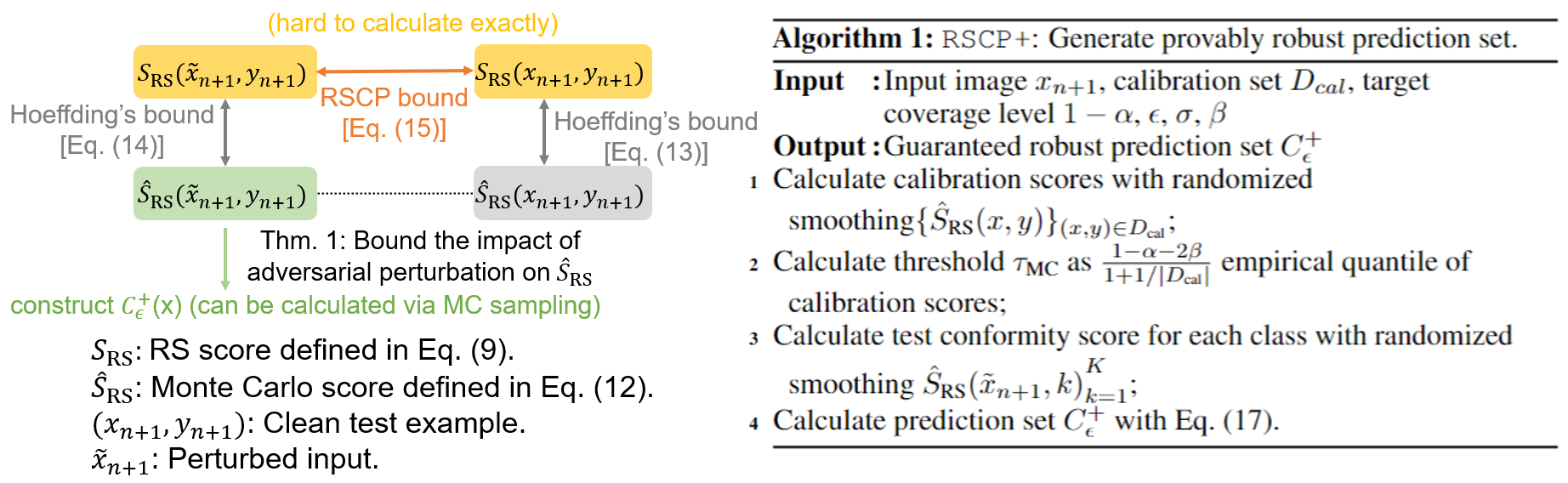}
    \caption{Diagram illustrating our \algoname. (Left) (1) The dotted line shows our target: bound Monte-Carlo estimator score $\hat{S}_{\text{RS}}$ under perturbation; (2) The orange arrow denotes the bound of the randomized smoothed score $\scorers$ under perturbation, given by \cite{gendler2021adversarially}; (3) The grey arrows denote
    % Hoeffding bound (\cref{Eq:HoefBoundResult}) \&  Empirical Bernstein's bound (\cref{Eq:BernBoundResult}): 
    Hoeffding's inequality connecting randomized smoothed score $\scorers$ and Monte Carlo estimator score $\hat{S}_{\text{RS}}$. The target (1) could be derived by (2) + (3). (Right) \algoname~algorithm.}
    \label{fig:proof_diagram}
\end{figure}
\begin{theorem}
    \label{thm:MCScoreContinuity_Hoef}
    Let $(x_{n+1}, y_{n+1})$ be the clean test sample and $\testadv$ be perturbed input data that satisfies $\|\testadv - x_{n+1}\|_2 \leq \epsilon$. Then, with probability $1-2\beta$:
    $$\hat{S}_{\text{RS}}(\tilde{x}_{n+1}, y_{n+1}) - b_{\textrm{Hoef}}(\beta) \leq \Phi\left[\Phi^{-1}[\hat{S}_{\text{RS}}(x_{n+1}, y_{n+1}) + b_{\textrm{Hoef}}(\beta)] + \frac{\epsilon}{\sigma}\right],$$
    where $b_{\textrm{Hoef}}(\beta) = \sqrt{\frac{-ln \beta}{2N_{\text{MC}}}}$, $N_{\text{MC}}$ is the number of Monte Carlo examples, $\Phi$ is standard Gaussian cdf, $\sigma$ is smoothing strength and $\hat{S}_{\text{RS}}$ is the Monte Carlo score defined in \cref{eq:DefinitionMCScore}. 
\end{theorem}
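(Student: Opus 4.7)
The strategy is exactly the one suggested by the diagram in \cref{fig:proof_diagram}: interpolate between the two Monte Carlo estimators $\hat{S}_{\text{RS}}(\testadv, y_{n+1})$ and $\hat{S}_{\text{RS}}(x_{n+1}, y_{n+1})$ using the true smoothed score $\scorers$ as an intermediary, applying Hoeffding twice (once at each input) to pass between the Monte Carlo estimator and its expectation, and applying the existing Gendler--Romano Lipschitz bound \cref{eq:RScontinuity} in the middle.

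The plan is as follows. First, since the base score satisfies $S(\cdot,\cdot) \in [0,1]$, every summand in \cref{eq:DefinitionMCScore} is bounded, so Hoeffding's inequality applies to $\hat{S}_{\text{RS}}(x,y) = \frac{1}{N_{\text{MC}}}\sum_{i=1}^{N_{\text{MC}}} S(x+\delta_i, y)$, whose mean is $\scorers(x,y)$. Setting the deviation $t = b_{\textrm{Hoef}}(\beta) = \sqrt{-\ln\beta/(2N_{\text{MC}})}$ gives the one-sided tail bound $\exp(-2N_{\text{MC}}t^2) = \beta$, yielding the two events
\begin{align*}
\mathcal{E}_1:\ &\scorers(x_{n+1}, y_{n+1}) \le \hat{S}_{\text{RS}}(x_{n+1}, y_{n+1}) + b_{\textrm{Hoef}}(\beta),\\
\mathcal{E}_2:\ &\hat{S}_{\text{RS}}(\testadv, y_{n+1}) \le \scorers(\testadv, y_{n+1}) + b_{\textrm{Hoef}}(\beta),
\end{align*}
each holding with probability at least $1-\beta$. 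A union bound gives $\Pr[\mathcal{E}_1 \cap \mathcal{E}_2] \ge 1 - 2\beta$.

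Second, on this joint event I chain the two Hoeffding bounds through the Lipschitz result \cref{eq:RScontinuity} applied to $\smoothscorebs = \Phi^{-1}[\scorers]$. Starting from $\mathcal{E}_2$ and using monotonicity of $\Phi^{-1}$, then adding $\epsilon/\sigma$ and applying the Lipschitz inequality in the form $\Phi^{-1}[\scorers(\testadv,y_{n+1})] \le \Phi^{-1}[\scorers(x_{n+1},y_{n+1})] + \epsilon/\sigma$, then applying $\Phi$ (which is monotonic) and finally substituting $\mathcal{E}_1$ inside $\Phi^{-1}$ yields
\[
\hat{S}_{\text{RS}}(\testadv, y_{n+1}) - b_{\textrm{Hoef}}(\beta) \le \scorers(\testadv, y_{n+1}) \le \Phi\!\left[\Phi^{-1}[\scorers(x_{n+1},y_{n+1})] + \tfrac{\epsilon}{\sigma}\right] \le \Phi\!\left[\Phi^{-1}[\hat{S}_{\text{RS}}(x_{n+1},y_{n+1}) + b_{\textrm{Hoef}}(\beta)] + \tfrac{\epsilon}{\sigma}\right],
\]
which is the claim.

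The only real subtlety, not a deep obstacle but worth flagging, is boundary handling: the inner $\Phi^{-1}$ is only defined on $[0,1]$, so one must check that $\hat{S}_{\text{RS}}(x_{n+1},y_{n+1}) + b_{\textrm{Hoef}}(\beta) \le 1$ or else extend by the convention $\Phi^{-1}(s)=+\infty$ for $s\ge 1$ (making the right-hand side $\Phi(+\infty)=1$ and the inequality trivially true). A second minor point is that Hoeffding at $x_{n+1}$ and at $\testadv$ may use correlated noise draws (if the same $\delta_i$ are reused across inputs), but the union bound does not require independence of $\mathcal{E}_1, \mathcal{E}_2$, so the $1-2\beta$ conclusion is unaffected. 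Everything else is a direct monotonicity chain through $\Phi^{-1}$ and $\Phi$, so no heavier machinery is needed.
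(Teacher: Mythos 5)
Your proof is correct and follows essentially the same route as the paper's: two one-sided Hoeffding bounds linking $\hat{S}_{\text{RS}}$ to $\scorers$ at the clean and perturbed inputs, the Lipschitz bound \cref{eq:RScontinuity} as the bridge, and a union bound giving $1-2\beta$. Your added remarks on the $\Phi^{-1}$ boundary convention and on the union bound not requiring independence of the two Hoeffding events are careful touches the paper leaves implicit, but they do not change the argument.
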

\renewcommand*{\proofname}{Proof of Theorem 1}
\begin{proof}
The main idea of the proof is connecting $\hat{S}_{\text{RS}}(x_{n+1}, y_{n+1})$ and $\hat{S}_{\text{RS}}(\testadv, y_{n+1})$ via the corresponding $S_{\text{RS}}$, as shown in \cref{fig:proof_diagram}. By Hoeffding's inequality (See \cref{app:BernHoeffLemma} for further discussion), we have
\begin{equation}
    S_{\text{RS}}(x_{n+1}, y_{n+1}) \leq \hat{S}_{\text{RS}}(x_{n+1}, y_{n+1}) + b_{\textrm{Hoef}}(\beta)
    \label{Eq:HoefBoundResultClean}
\end{equation}
by \cref{eq:HoefBoundResultCleanInApp} and
\begin{equation}
     S_{\text{RS}}(\tilde{x}_{n+1}, y_{n+1}) \geq \hat{S}_{\text{RS}}(\tilde{x}_{n+1}, y_{n+1}) - b_{\textrm{Hoef}}(\beta)
     \label{Eq:HoefBoundResultPerturbed}
\end{equation}
by \cref{eq:HoefBoundResultPerturbedInApp}, both with probability $1-\beta$.
Meanwhile, by plugging in the definition of $\smoothscorebs$, \cref{eq:RScontinuity} is equivalent to
\begin{equation}
\Phi^{-1}[S_{\text{RS}}(\tilde{x}_{n+1}, \testy)] \leq \Phi^{-1}[S_{\text{RS}}(x_{n+1}, \testy)] + \frac{\epsilon}{\sigma}.
\label{eq:S_RSContinuity_Hoef}
\end{equation}
Combining the three inequalities above and applying union bound gives:
\begin{equation}
    \begin{aligned}
       & S_{\text{RS}}(\tilde{x}_{n+1}, \testy) \leq \Phi\left[\Phi^{-1}[S_{\text{RS}}(x_{n+1}, \testy)] + \frac{\epsilon}{\sigma}\right] \\
       \xrightarrow[\text{with prob. $1-\beta$}]{\text{\cref{Eq:HoefBoundResultClean}}} & S_{\text{RS}}(\tilde{x}_{n+1}, \testy) \leq \Phi\left[\Phi^{-1}[\hat{S}_{\text{RS}}(x_{n+1}, y_{n+1}) + b_{\textrm{Hoef}}] + \frac{\epsilon}{\sigma}\right] \\
      & \\
       \xrightarrow[\text{with prob. $1-2\beta$}]{\text{\cref{Eq:HoefBoundResultPerturbed}}} &\hat{S}_{\text{RS}}(\tilde{x}_{n+1}, y_{n+1}) - b_{\textrm{Hoef}}(\beta) \leq \Phi\left[\Phi^{-1}[\hat{S}_{\text{RS}}(x_{n+1}, y_{n+1}) + b_{\textrm{Hoef}}(\beta)] + \frac{\epsilon}{\sigma}\right],\\
        &
    \end{aligned}
     \label{eq:conclusionThmOne}
\end{equation}
with probability $1-2\beta$, which proves \cref{thm:MCScoreContinuity_Hoef}. 
\end{proof}
\begin{remark}
    The bound in \cref{thm:MCScoreContinuity_Hoef} could be further improved using Empirical Bernstein's inequality \cite{maurer2009empirical}. We found in our experiments that the improvement is light on CIFAR10 and CIFAR100, but could be significant on ImageNet. For more discussion see \cref{sec:ImprovementWithBern}.
\end{remark}

With \cref{thm:MCScoreContinuity_Hoef}, we could construct the prediction set accordingly and derive the robustness guarantee in Corollary \ref{thm:MCScoreMainResultHoef} in the following. 
\begin{corollary}
(Robustness guarantee for \algoname)
    The \algoname~prediction set
    \begin{equation}
     C_\epsilon^+(\testadv; \tauRSCPplus) = \left\{k \in [K] \mid \hat{S}_{\text{RS}}(\tilde{x}_{n+1}, k) - b_{\textrm{Hoef}}(\beta) \leq \Phi\left[\Phi^{-1}[\tauRSCPplus + b_{\textrm{Hoef}}(\beta)] + \frac{\epsilon}{\sigma}\right]\right\}   
     \label{eq:RSCP+Pred}
    \end{equation}
    satisfies robust coverage guarantee in \cref{eq:rscp_coverage}, i.e.
    $
    \prob(y_{n+1}\in C_\epsilon^+(\testadv; \tauRSCPplus)) \geq 1-\alpha
    $. Here, the threshold $\tauRSCPplus$ is calculated according to \cref{eq:vanillaCPThresholdDefinition} with $S=\hat{S}_{RS}$ and $1-\alpha$ replaced by $1-\alpha + 2\beta$, i.e. $\tauRSCPplus = Q_{1-\alpha + 2\beta} (\{\hat{S}_{\text{RS}}(x, y)\}_{(x, y) \in D_{\text{cal}}})$.
    \label{thm:MCScoreMainResultHoef}
\end{corollary}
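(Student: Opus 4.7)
The plan is to combine the conformal coverage guarantee at an inflated miscoverage level $\alpha - 2\beta$ with the bound from \cref{thm:MCScoreContinuity_Hoef}, so that a union bound recovers the target level $1-\alpha$. Concretely, I will first argue that $\hat{S}_{\text{RS}}$ qualifies as a legitimate (possibly randomized) non-conformity score, since as remarked in the paragraph preceding \cref{thm:MCScoreContinuity_Hoef}, standard conformal coverage only requires that the calibration scores and the clean test score be i.i.d.; the independent Gaussian noise drawn for each Monte Carlo evaluation preserves this property. Hence standard split conformal prediction applied to the calibration set with miscoverage level $\alpha - 2\beta$ gives
\begin{equation}
\prob\bigl(\hat{S}_{\text{RS}}(x_{n+1}, y_{n+1}) \leq \tauRSCPplus\bigr) \geq 1-\alpha+2\beta,
\label{eq:cov_inflate}
\end{equation}
where $\tauRSCPplus = Q_{1-\alpha+2\beta}(\{\hat{S}_{\text{RS}}(x,y)\}_{(x,y)\in D_{\text{cal}}})$.

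Next I will invoke \cref{thm:MCScoreContinuity_Hoef} to control the perturbed test score: with probability at least $1-2\beta$ over the Monte Carlo noise,
\begin{equation}
\hat{S}_{\text{RS}}(\tilde{x}_{n+1}, y_{n+1}) - b_{\text{Hoef}}(\beta) \leq \Phi\!\left[\Phi^{-1}\!\bigl[\hat{S}_{\text{RS}}(x_{n+1}, y_{n+1}) + b_{\text{Hoef}}(\beta)\bigr] + \tfrac{\epsilon}{\sigma}\right].
\label{eq:plan_thm1}
\end{equation}
On the intersection of the events in \cref{eq:cov_inflate} and \cref{eq:plan_thm1}, the monotonicity of $\Phi$ and $\Phi^{-1}$ lets me substitute $\tauRSCPplus$ for $\hat{S}_{\text{RS}}(x_{n+1}, y_{n+1})$ on the right-hand side of \cref{eq:plan_thm1}, yielding
\begin{equation}
\hat{S}_{\text{RS}}(\tilde{x}_{n+1}, y_{n+1}) - b_{\text{Hoef}}(\beta) \leq \Phi\!\left[\Phi^{-1}\!\bigl[\tauRSCPplus + b_{\text{Hoef}}(\beta)\bigr] + \tfrac{\epsilon}{\sigma}\right],
\end{equation}
which is exactly the membership condition $y_{n+1} \in C_\epsilon^+(\tilde{x}_{n+1}; \tauRSCPplus)$ from \cref{eq:RSCP+Pred}.

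To finish, I apply a union bound: the complementary events have probability at most $(\alpha - 2\beta)$ and $2\beta$ respectively, so their union has probability at most $\alpha$, giving $\prob(y_{n+1}\in C_\epsilon^+(\tilde{x}_{n+1}; \tauRSCPplus)) \geq 1-\alpha$ as required. I should briefly verify that $(1-\alpha+2\beta)(1+1/|D_{\text{cal}}|) \leq 1$ so that the inflated quantile is well defined, and note that this is the only structural constraint on $\beta$ (any $\beta \in (0, \alpha/2)$ below this ceiling works).

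The only subtle point — and the main thing to get right — is the probabilistic bookkeeping: the conformal statement in \cref{eq:cov_inflate} is over the joint law of the calibration set, the test sample, and the Monte Carlo noise at the test point, whereas \cref{eq:plan_thm1} is (conditionally) over Monte Carlo noise. Because the union bound is valid for any two events on a common probability space regardless of dependence, the argument goes through, but I will spell this out to make clear that no independence between the Hoeffding event and the conformal event is being assumed.
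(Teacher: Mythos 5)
Your proposal is correct and follows essentially the same route as the paper's proof: establish the clean-score conformal coverage at the inflated level $1-\alpha+2\beta$ (justified by the i.i.d./exchangeability of the Monte Carlo scores), plug it into the bound of Theorem~\ref{thm:MCScoreContinuity_Hoef} via monotonicity of $\Phi$ and $\Phi^{-1}$, and close with a union bound over the two failure events. Your explicit remarks on the monotone substitution, the well-definedness of the inflated quantile, and the fact that the union bound needs no independence are careful clarifications of steps the paper leaves implicit, not a different argument.
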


\renewcommand*{\proofname}{Proof of Corollary 2}
\begin{proof}
Since we have $\tauRSCPplus = Q_{1-\alpha + 2\beta} (\{\hat{S}_{\text{RS}}(x, y)\}_{(x, y) \in D_{\text{cal}}})$, conformal prediction guarantees coverage on clean examples:
\begin{equation}
\mathbb{P}[\hat{S}_{\text{RS}}(x_{n+1}, y_{n+1}) \leq \tauRSCPplus] \geq 1-\alpha + 2\beta. 
\label{eq:CleanCoverageForRSCPPlus_Hoef}
\end{equation}
Plug \cref{eq:CleanCoverageForRSCPPlus_Hoef} into \cref{eq:conclusionThmOne} in \cref{thm:MCScoreContinuity_Hoef} and apply union bound, we get
\begin{equation}
    \label{eq:mcCriteria}
    \prob\left\{\hat{S}_{\text{RS}}(\tilde{x}_{n+1}, y_{n+1}) - b_{\textrm{Hoef}}(\beta) \leq \Phi\left[\Phi^{-1}[\tauRSCPplus + b_{\textrm{Hoef}}(\beta)] + \frac{\epsilon}{\sigma}\right]\right\} \geq 1-\alpha.
\end{equation}
\end{proof}
% \input{src/algorithm}
% \begin{figure*}[!ht]
%     \centering
%     \includegraphics[scale=0.35]{}
%     \caption{A demonstration of our idea. The figure illustrates the distribution of smoothed scores $\smoothscorebs$ for (Left) base score $S$ and (Right) transformed score $\posttrans \circ S$ we desire. The yellow bars denotes additional coverage $g_{gap}$ introduced by RSCP. Since the threshold increase $\frac{\epsilon}{\sigma}$ is a constant, it motivates us to regularize the distribution of smoothed scores $\smoothscorebs$ to reduce the density in the yellow area.}
%     \label{fig:den_compare}
% \end{figure*}
\section{Challenge 2: improving efficiency}
\label{sec:challenge2Efficiency}
% \begin{figure*}[htpb]
%     \centering
%     \includegraphics[scale=0.4]{}
%     \caption{\textbf{An overview of our methods.} Based on the observation that RSCP\cite{gendler2021adversarially} is less efficient compared with vanilla conformal prediction, we propose two methods: post-training transformation (PCT) and robust conformal prediction (RCT) to alleviate the inefficiency of RSCP. With these two methods, we successfully reduce the prediction set generated by RSCP and keep the robustness guarantee of it. \gecmt{This diagram is not very clear for our updates. Maybe we should modify it?}}
%     \label{fig:overview}
% \end{figure*}
So far, we have modified RSCP to \algoname~that can provide a certified guarantee in \cref{sec:chanllenge1Guarantee}. However, there exists another challenge -- directly applying \algoname~often leads to trivial prediction sets that give the entire label set, as shown in our experiment  \cref{tab:RSCPplusSplitCifar,tab:RSCPplusSplitInet}. The reason is that RSCP is \textit{conservative}: instead of giving an accurate coverage as vanilla CP, RSCP attains a higher coverage due to its threshold inflation (\cref{eq:RSCP_threshold}), and thus gives a larger prediction set on both clean and perturbed data.
We define \textit{conservativeness} of RSCP as the increase in the average size of prediction sets after threshold inflation: see \cref{sec:conservertivenessDef} where we give a formal definition. 
Since \algoname~is modified from RSCP, it's expected to inherit the conservativeness, leading to trivial predictions.
To address this challenge and make \algoname~useful, in this section, we propose to address this problem by modifying the base score $S$ with two new methods: Post Training Transformation (PTT) and Robust Conformal Training (RCT).
\subsection{Post-training transformation (PTT)}
\label{subset:quanAnalysisGap}
\paragraph{Intuition.} We first start with a quantitative analysis of the conservativeness by threshold inflation.
% \paragraph{Overview.}
% In this section, we study how to design a robust conformal prediction method with high efficiency. Recall that the goals of robust conformal prediction are two-fold:
% \begin{enumerate}
%     \item Ensuring robustness: For any perturbed example $\tilde{x}$ that $\|\tilde x-x\|_2 \leq \epsilon$, its prediction set $C_\epsilon(\tilde{x})$ is guaranteed to achieve desired marginal coverage $1-\alpha$, i.e. \cref{eq:rscp_coverage};
%     \item Achieving high efficiency: It is desired to have smaller and informative prediction set. I.e. we would like to minimize the expected size of the prediction set, $\mathbb{E}_{x \sim P_x} |C_{\epsilon}(x)|$, where $P_x$ denotes the test distribution of example $x$.
% \end{enumerate}
% Note that the first goal has been achieved by RSCP \cite{gendler2021adversarially}, but RSCP tends to be conservative as shown in Figure~\ref{fig:sec1_motivation}. Thus, in this work, we propose two methods to improve the efficiency of robust conformal prediction building on top of RSCP. Specifically, we propose a post-training transformation (PTT) and a robust conformal training method (RCT) for training the model $\hat{\pi}_y(x)$. In this section, we focus on our proposed first approach, post-training transformation (PTT), and we will describe our second approach, robust conformal training (RCT) in \cref{subsec:robusttr}.
As an approximation to the conservativeness, we measure the coverage gap between inflated coverage $1-\alpha_{\text{adj}}$ and target coverage $1 - \alpha$:
\begin{equation}
    \covgap = (1 - \alpha_{\text{adj}}) - (1 - \alpha) = \alpha - \alpha_{\text{adj}}.
\end{equation}
Next, we conduct a theoretical analysis on $\covgap$.
% As we mentioned in \cref{sec:rscp}, RSCP requires non-decreasing the threshold $\tau$, evaluated on the clean calibration points. Hence, it generates a larger prediction set compared to vanilla conformal prediction when applied to clean data, since the increased threshold $\tau_{\text{adj}}$ will have higher coverage than desired. Let $1-\alpha$ be the desired coverage, and $1-\alpha_{\text{adj}}$ be the coverage of the threshold $\tau_{\text{adj}}$ defined in \cref{eq:RSCP_threshold}. We define the coverage gap which is introduced by the increased threshold as $\covgap$:
% \begin{equation*}
%     \covgap = (1 - \alpha_{\text{adj}}) - (1 - \alpha) = \alpha - \alpha_{\text{adj}}.
% \end{equation*}
% Here, $\covgap$ can be viewed as a metric that reflects the inefficiency brought by RSCP: since the coverage gap $\covgap$ is due to the increased threshold $\tau_{\text{adj}}$, a larger gap means that the prediction set would be larger, which hurts efficiency. Therefore, to make RSCP more efficient, we must reduce the coverage gap $\covgap$. Below, we present a theoretical analysis on how to compute and further reduce $\covgap$ in \cref{subsec:theory_ggap}, and motivate our proposed solution via post-training transformation (PTT) in \cref{subsec:PTT}. We demonstrate the effectiveness of PTT on a toy example where the analytical solution can be derived in \cref{subsec:example}.      
% \subsection{Theoretical analysis on $\covgap$}
% \label{subsec:theory_ggap}
Let $\Phi_{\smoothscorebs}(t)$ be the cdf of score $\smoothscorebs(x, y)$, where $(x, y) \sim P_{xy}$. For simplicity, suppose $\Phi_{\smoothscorebs}(t)$ is known. Recall that in conformal prediction, the threshold $\tau$ is the minimum value that satisfies the coverage condition:
\begin{equation}
   \tau = \underset{t\in \mathbb{R}}{\mathrm{argmin}}\: \left\{\prob_{(x, y) \sim P_{xy}}[\smoothscorebs(x, y) \leq t] \geq (1 - \alpha).\right\}
    \label{eq:coverage_guarantee}
\end{equation}
Notice that $\prob_{(x, y) \sim P_{xy}}[\smoothscorebs(x, y) \leq t]$ is exactly $\Phi_{\smoothscorebs}(t)$, we have:
\begin{equation}
    \Phi_{\smoothscorebs}(\tau) = 1-\alpha.
    \label{eq:threshold_inf}
\end{equation}
Suppose the threshold is inflated as $\tau_{\text{adj}} = \tau + M_{\epsilon}$. Similarly, we could derive
$
    1 - \alpha_{\text{adj}} =  \Phi_{\smoothscorebs}(\tau_{\text{adj}}) = \Phi_{\smoothscorebs}(\tau + M_{\epsilon})
$
by \cref{eq:RSCP_threshold}. Now the coverage gap $\covgap$ can be computed as:
\begin{equation}
    \covgap = \alpha - \alpha_{\text{adj}} 
    = \Phi_{\smoothscorebs}(\tau + M_{\epsilon}) - \Phi_{\smoothscorebs}(\tau)
    \approx \Phi_{\smoothscorebs}^{\prime}(\tau) \cdot M_{\epsilon}
    \label{eq:linearApprox}
\end{equation}
The last step is carried out by the linear approximation of $\Phi_{\smoothscorebs}$: $g(x+z) - g(x) \approx g^{\prime}(x) \cdot z$.
% and here let $g = \Phi_{\smoothscorebs}$, $z = \frac{\epsilon}{\sigma}$. In the above equation, $\epsilon$ is a fixed constant depending on the perturbation magnitude of concern and $\sigma$ is another hyper-parameter controls smoothing strength. Previous work \cite{gendler2021adversarially, cohen2019certified} showed that we cannot set $\sigma$ to be arbitrarily large, as a large $\sigma$ reduces the classification accuracy.
% Hence, to reduce $\covgap$, we need to reduce $\Phi_{\smoothscorebs}^{\prime}(\tau)$, which is the density of $\smoothscorebs$ at the threshold $\tau$. 

% By plugging in \cref{eq:threshold_inf}, we can derive
% \begin{equation}
%     \Phi_{\smoothscorebs}^{\prime}(\tau) = \Phi_{\smoothscorebs}^{\prime}(\Phi^{-1}_{\smoothscorebs}(1-\alpha))
%     = \frac{1}{(\Phi_{\smoothscorebs}^{-1})^{\prime}(1-\alpha)}.
%     \label{eq:inv_linearApprox}
% \end{equation}
% In other words, above equation shows that to decrease the density of the $\smoothscorebs$ at $\tau$, one can also increase the derivative of inverse cdf $\Phi_{\smoothscorebs}^{-1}$ at $1-\alpha$.
\paragraph{Key idea.}
\label{subsec:PTT}
\cref{eq:linearApprox} suggests that we could reduce $\covgap$ by \textbf{reducing the slope} of $\Phi_{\smoothscorebs}$ near the original threshold $\tau$, i.e. $\Phi_{S}^{\prime}(\tau)$. This inspires us to the idea: can we perform a transformation on $\smoothscorebs$ to reduce the slope while keeping the information in it? Directly applying transformation on $\smoothscorebs$ is not a valid option because it would break the Lipschitz continuity of $\smoothscorebs$ in \cref{eq:RScontinuity}: for example, applying a discontinuous function on $\smoothscorebs$ may make it discontinuous. However, we could apply a transformation $\posttrans$ on the base score $S$, which modifies $\smoothscorebs$ indirectly while preserving the continuity, as long as the transformed score, $\posttrans \circ S$, still lies in the interval $[0, 1]$.
% In the previous analysis we see that in order to reduce the coverage gap, one needs to reduce $\Phi_{\smoothscorebs}^{\prime}(\tau)$, the density of $\smoothscorebs$ at the threshold $\tau$. Inspired by this theoretical insight, in this section, we propose to transform the non-conformity scores $S$ with a transformation $\posttrans$ such that $\Phi_{\smoothscorebs}^{\prime}(\tau)$ is reduced for the new $\smoothscorebs$, the smoothed score of $\posttrans \circ S$. Note that the transformation is performed on the base score $S$ instead of the smoothed score $\smoothscorebs$. The reason that we do not directly transform $\smoothscorebs$ is because this could break the Lipschitz continuity of $\smoothscorebs$, which is the foundation for RSCP's robustness guarantee. 
The next question is: how shall we design this transformation $\posttrans$? Here, we propose that the desired transformation $\posttrans$ should satisfy the following two conditions:
\begin{enumerate}
    \item \textbf{(Slope reduction)} By applying $\posttrans$, we should reduce the slope $\Phi_{\smoothscorebs}^{\prime}(\tau)$, thus decrease the coverage gap $\covgap$.
    Since we are operating on base score $S$, we approximate this condition by reducing the slope
    $\Phi_{S}^{\prime}(\tau)$. We give a rigorous theoretical analysis of a synthetic dataset and an empirical study on real data to justify the effectiveness of this approximation in \cref{app:1dexample,app:gapReduction}, respectively.
    \item \textbf{(Monotonicity)} $\posttrans$ should be monotonically non-decreasing. It could be verified that under this condition, $(\posttrans \circ S)$ is equivalent to $S$ in vanilla CP (See our proof in \cref{sec:DiscussMonoton}). Hence, the information in $S$ is kept after transformation $\posttrans$.
\end{enumerate}
These two conditions ensure that transformation $\posttrans$ could alleviate the conservativeness of RSCP without losing the information in the original base score. 
% As discussed, the first condition is based on the theoretical insight in \cref{subsec:theory_ggap}; for the second condition, it is based on \cref{thm:equivalence} below. Intuitively, the transformation should not have any effect on the vanilla conformal prediction (as no additional information is introduced with transformation), and should only improve RSCP by reducing the coverage gap $\covgap$. Therefore, it is natural to choose an order-preserving transformation which keep the entire information in $S$. \cref{thm:equivalence} suggests that the information in a non-conformity score $S$ is conveyed by the \textbf{relative order} instead of the absolute value of $S$.
% \begin{proposition}
% If transformation $\posttrans$ keeps the order of the scores, the scores $\posttrans \circ S$ and $S$ are equivalent for vanilla conformal prediction, i.e. given calibration set $D_{\text{cal}}$, we have $C_{\posttrans \circ S}(x; \tau_{\posttrans \circ S}) = C_{S}(x; \tau_{S})$, where $C_{\posttrans \circ S}$ and $C_{S}$ denotes the prediction set of score $\posttrans \circ S$ and $S$ respectively. 
% \label{thm:equivalence}
% \end{proposition}
With the above conditions in mind, we design a two-step transformation $\posttrans$ by composing \textbf{(I)} ranking and \textbf{(II)} Sigmoid transformation on base score $S$, denoted as $\posttrans = \posttrans_{\text{sig}} \circ \posttrans_{\text{rank}}$. We describe each transformation below.
\paragraph{Transformation (\MakeUppercase{\romannumeral 1}): ranking transformation $\posttrans_{\text{rank}}$.}
The first problem we encounter is that we have no knowledge about the score distribution $\Phi_{S}$ in practice, which makes designing transformation difficult. To address this problem, we propose a simple data-driven approach called ranking transformation to turn the unknown distribution $\Phi_{S}$ into a uniform distribution. With this, we could design the following transformations on it and get the analytical form of the final transformed score distribution $\Phi_{\posttrans\circ S}$.
For ranking transformation, we sample an i.i.d. holdout set $D_{\text{holdout}} = \{(x_i, y_i)\}_{i=1}^{N_{\text{holdout}}}$ from $P_{XY}$, which is disjoint with the calibration set $D_{\text{cal}}$. Next, scores $\scoreset{holdout}{D}$ are calculated on the holdout set and the transformation $\posttrans_{\text{rank}}$ is defined as:
$$
    \posttrans_{\text{rank}}(s) = \frac{r\left[s; \scoreset{holdout}{D}\right]}{|D_{\text{holdout}}|}.
$$ 
Here, $r(x;H) $ denotes the rank of $x$ in set $H$, where ties are broken randomly. We want to emphasize that this rank is calculated on the holdout set $D_{\text{holdout}}$ for both calibration samples and test samples.
We argue that the new score $\posttrans_{\text{rank}} \circ S$ is uniformly distributed,which is a well-known result in statistics\cite{kuchibhotla2020exchangeability}. See more discussion in \cref{sec:uniformThmAfterRanking}.

 % In practice, the distribution of $S(x, y)$ may vary a lot for different non-conformity scores, datasets and models. After applying $\posttrans_{\text{rank}}$, the distribution is transformed into a uniform transformation, enabling us to design a unified transformation that is agnostic to models, datasets, and base scores.
 
\paragraph{Transformation (\MakeUppercase{\romannumeral 2}): Sigmoid transformation $\posttrans_{\text{sig}}$.} After ranking transformation, we get a uniformly distributed score. The next goal is reducing $\Phi_S^\prime(\tau)$. For this, we introduce Sigmoid transformation $\posttrans_{\text{sig}}$.
In this step, a sigmoid function $\phi$ is applied on $S$:
 \begin{equation*}
     \posttrans_{\text{sig}}(s) = \phi\left[(s - b) / T\right],
 \end{equation*}
where $b, T$ are hyper-parameters controlling this transformation. Due to space constraint, we discuss more details of Sigmoid transformation in \cref{app:sigDesign}, where we show that the distribution of transformed score $\Phi_{\posttrans_{\text{sig}} \circ \posttrans_{\text{rank}} \circ S}$ is the inverse of Sigmoid transformation $\posttrans_{\text{sig}}^{-1}$ (\cref{eq:closeFormScoreCDF}), and by setting $b=1-\alpha$ and $T$ properly small, the Sigmoid transformation could reduce $\Phi^{\prime}_{S}(\tau)$.

\paragraph{Summary.} Combining ranking transformation and sigmoid transformation, we derive a new (non-)conformity score $\scoretrans$:
\begin{equation}
     \scoretrans(x, y) = (\posttrans_{\text{sig}} \circ \posttrans_{\text{rank}} \circ S)(x, y).
\end{equation}
 It could be verified that $\scoretrans(x, y) \in [0, 1]$ for any $S$ thanks to the sigmoid function, hence we could plug in $S \leftarrow \scoretrans(x, y)$ into \cref{eq:scorers} as a base score. 
 % We argue that this actually boosts the flexibility of RSCP, because base scores that do not take value in $[0, 1]$ could also be used in RSCP, with the transformation we propose. 
 Additionally, $\scoretrans(x, y)$ is monotonically non-decreasing, satisfying the monotonicity condition described at the beginning of this section. We provide a rigorous theoretical study on PTT over on a synthetic dataset in \cref{app:1dexample}. Additionally, we craft a case in \cref{app:failmode} where PTT may not improve the efficiency. Despite this theoretical possibility, we observe that PTT consistently improves over the baseline in experiments.
% \begin{proposition}
%  (Monotonicity) For any score $S$, $\posttrans_{\text{sig}} \circ \posttrans_{\text{rank}}$ keeps its original order, i.e. if $S(x_1, y_1) \leq S(x_2, y_2)$, then 
% \begin{equation*}
%      \scoretrans(x_1, y_1) \leq \scoretrans(x_2, y_2).
%  \end{equation*}
% \label{thm:monocity}
% \end{proposition}
% \noindent \cref{thm:monocity} suggests that our transformation satisfies the order keeping condition. In the following, we show that our proposed transformation can also reduce coverage gap by deriving analytic forms of the coverage gap with a 1-D example in \cref{subsec:example}. More details are in \cref{app:gapReduction}.

\subsection{Robust Conformal Training (RCT)}
\label{subsec:robusttr}
% Inspired by conformal training \cite{stutz2021learning}, we propose to incorporate RSCP into training stage with the goal to improve the efficiency of RSCP. We first introduce the pipeline of standard (non-robust) conformal training, and then discuss our proposed modification of current pipeline to include robust conformal prediction in the training process. 
While our proposed PTT provides a training\textit{-free} approach to improve efficiency, there is another line of work \cite{einbinder2022training,stutz2021learning} studying how to train a better base classifier for conformal prediction. 
% For example, \cite{einbinder2022training} introduced an uncertainty-aware loss function as the training objective for better conditional coverage; \cite{stutz2021learning} simulates the conformal prediction process during training to improve efficiency. 
However, these methods are designed for standard conformal prediction instead of \textit{robust} conformal prediction considered in our paper. In this section, we introduce a training pipeline called RCT, which simulates the RSCP process in training to further improve the efficiency of robust conformal prediction. 
\begin{figure*}[b]
    \centering
    \includegraphics[scale=0.46]{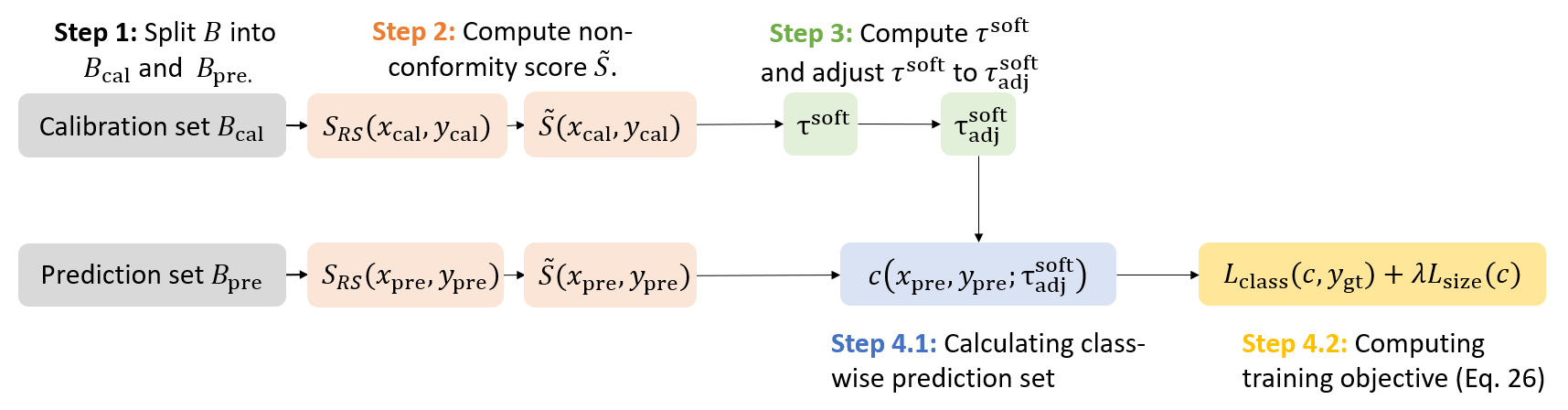}
    \caption{Pipeline of our proposed Robust Conformal Training (RCT) method.}
    \label{fig:pipeline}
\end{figure*}

\paragraph{Conformal training.} \citet{stutz2021learning} proposed a general framework to train a classifier for conformal prediction. It simulates conformal prediction in training by splitting the training batch $B$ into a calibration set $B_{\text{cal}}$ and a prediction set $B_{\text{pred}}$, then performing conformal prediction on them. The key idea is to use soft surrogate $\tau^{\text{soft}}$ and $c(x, y; \tau^{\text{soft}})$ to approximate the threshold $\tau$ and prediction set $C(x; \tau)$, making the pipeline differentiable:
$
    \tau^{\text{soft}} = Q^{\text{soft}}_{1-\alpha}(\{S_\theta(x, y)\}_{(x, y)\in B_{\text{cal}}})
$,
where $Q^{\text{soft}}_{q}(H)$ denotes the $q(1 + \frac{1}{|H|})$-quantile of set $H$ derived by smooth sorting \cite{blondel2020fast,cuturi2019differentiable}, and
$
    c(x, y; \tau^{\text{soft}}) = \phi\left[\frac{\tau^{\text{soft}} - S_\theta(x, y)}{T_{\text{train}}}\right]
    \label{eq:soft_threshold}
$,
where $\phi(z) = 1 / (1 + e^{-z})$ is the sigmoid function and temperature $T_{\text{train}}$ is a hyper-parameter. We introduce more details in \cref{app:ConfTrDetails}.

\paragraph{Incorporating RSCP into training.} Inspired by \citet{stutz2021learning}, we propose to incorporate RSCP \cite{gendler2021adversarially} (and of course, \algoname~since the major steps are the same) into the training stage as shown in \cref{fig:pipeline}.
% As we introduced in \cref{sec:rscp}, RSCP contains two major modifications on vanilla conformal prediction: (a) the smoothed score $\smoothscorebs$ and (b)the threshold adjustment $\tau_{\text{adj}}$. There are two equivalent ways to implement these modifications:
% \begin{enumerate}
%     \item Using $\smoothscorebs$ defined in \cref{eq:RSscore} with adjusted threshold $\tau_{\text{adj}}$ defined in \cref{eq:RSCP_threshold}.
%     \item Using $\scorers$ defined in \cref{eq:scorers} and adjusted  threshold: $\tau_{\text{adj}}^{\prime} = \Phi^{-1}\left[ \Phi(\tau) + \frac{\epsilon}{\sigma}\right].$
%     % \begin{equation}
%     %     \tau_{\text{adj}}^{\prime} = \Phi^{-1}\left[ \Phi(\tau) + \frac{\epsilon}{\sigma}\right].
%     % \end{equation}
% \end{enumerate}
% The second formulation appears attractive because it avoids calculating $\Phi^{-1}$ for every score and only needs to modify threshold $\tau$ once. However, in practice, we found the second formulation makes training difficult due to diminishing gradient and therefore we used the first implementation instead. For a more detailed discussion, please refer to \cref{app:twoWay}.
We adopt soft threshold $\tau^{\text{soft}}$ and soft prediction $c(x, y; \tau^{\text{soft}})$ from \citet{stutz2021learning}, and add randomized smoothing $\smoothscorebs$ and threshold adjustment $\tau^{\text{soft}}_{\text{adj}} = \tau^{\text{soft}} + \frac{\epsilon}{\sigma}$ to the pipeline as in RSCP.
Next, we need to examine the differentiability of our pipeline. The threshold adjustment and Gaussian inverse cdf $\Phi^{-1}$ step in the calculation of $\smoothscorebs$ is differentiable, but
the gradient of $\scorers = \mathbb{E}_{\delta \sim \mathcal{N}(0, \sigma^2 I_p)} S(x + \delta, y)$ is difficult to evaluate, as the calculation of $S(x, y)$ involves a deep neural network and expectation. Luckily, several previous works \cite{salman2019provably,zhai2020macer} have shown that the Monte-Carlo approximation
% \begin{align}
%     &\nabla_{\theta} \mathbb{E}_{\delta \sim \mathcal{N}(0, \sigma^2 I_p)} S(x + \delta, y;\theta)\nonumber\\
%     \approx \quad &\frac{1}{N_{\text{train}}} \sum_{i=1}^{N_{\text{train}}} \nabla_{\theta} S(x + \delta_i, y; \theta)\nonumber
% \end{align}
works well in practice:
\begin{equation}
    \nabla_{\theta} \mathbb{E}_{\delta \sim \mathcal{N}(0, \sigma^2 I_p)} S(x + \delta, y) 
    \approx \frac{1}{N_{\text{train}}} \sum_{i=1}^{N_{\text{train}}} \nabla_{\theta} S(x + \delta_i, y).
    \label{gradient_est}
\end{equation}
With these approximations, the whole pipeline becomes differentiable and training could be performed by back-propagation. For the training objective, we can use the same loss function:
\begin{equation}
    L(x, y_{\text{gt}}) = L_{\text{class}}(c(x, y; \tau^{\text{soft}}), y_{\text{gt}}) + \lambda L_{\text{size}}(c(x, y; \tau^{\text{soft}})),
   \label{eq:ConfTrObjective}
\end{equation}
where classification loss $L_{\text{class}}(c(x, y; \tau^{\text{soft}}), y_{\text{gt}}) = 1 - c(x, y_{\text{gt}}; \tau^{\text{soft}})$, size loss $L_{\text{size}}(c(x, y; \tau^{\text{soft}}))  = \max(0, \sum_{y=1}^K c(x, y;\tau^{\text{soft}}) - \kappa)$, $y_{\text{gt}}$ denotes the ground truth label, $c(x, y;\tau^{\text{soft}}) $ denotes the soft prediction introduced in \citet{stutz2021learning}, $\kappa$ is a hyper-parameter.
%Different from the conformal prediction method we introduce in \cref{subsec:RCP}, \cite{stutz2021learning} used another convention, where $C(X; \tau^{\prime}) = \{k\in [K]\mid S^{\prime}(x, y) \geq \tau^{\prime}\}$ and threshold $\tau^{\prime}$ is calculated as the $\alpha(1 + 1 / |D_{\text{cal}}|)$-quantile of calibration scores.
%For example, $S^{\prime}(x, y)=\hat{\pi}_y(x)$ for THR \cite{sadinle2019least} and $S^{\prime}(x, y)=\log\hat{\pi}_y(x)$ for THRLP. 
%We point out that these two conventions are actually equivalent. Let $S(x, y) = -S^{\prime}(x, y)$ and $\tau = -\tau^{\prime}$, we see now  $\tau$ is the $(1-\alpha)(1 + 1 / |D_{\text{cal}}|)$-quantile of calibration scores and $C(X; \tau) = \{k \in [K] \mid S(X, k) \leq \tau \}$, which is consistent with the notations we introduce. For the consistency of notations, we make some modification on the notations in \cite{stutz2021learning} by swapping their signs in the rest of this paper.\\
% \add{
% \begin{remark}
%     The methods we proposed in this section (PTT and RCT) are directly applied to base scores, thus they are orthogonal to the \algoname~ we proposed in \cref{sec:chanllenge1Guarantee}. That is to say, those methods not only work on \algoname~ but work on original RSCP as well. \algoname~ + PTT/RCT would be more desirable since it provides \textbf{guaranteed robustness} which is the original purpose, but we also provide empirical results on RSCP + PTT/RCT as an empirical robustness benchmark in the experiments.
% \end{remark}
% }
\begin{remark}
    Since the methods we proposed in \cref{sec:challenge2Efficiency} (PTT and RCT) are directly applied to base scores, they are orthogonal to the \algoname~ we proposed in \cref{sec:chanllenge1Guarantee}. That is to say, PTT and RCT not only work on \algoname~ but also work on original RSCP as well. Nevertheless, we argue that \algoname~ with PTT/RCT would be more desirable in practice since it provides \textbf{guaranteed robustness} which is the original purpose of provable robust conformal prediction. Hence, we will focus on this benchmark in the experiments section in the main text. However, we also provide experiment results on RSCP + PTT/RCT as an empirical robustness benchmark in \cref{sec:empResultsRSCP}, which shows that our PTT and RCT are not limited to our \algoname~scheme.
\end{remark}

\section{Experiments}
\label{Sec: Exp}
In this section, we evaluate our methods in \cref{sec:chanllenge1Guarantee,sec:challenge2Efficiency}.  Experiments are conducted on CIFAR10, CIFAR100 \cite{krizhevsky2009learning} and ImageNet \cite{deng2009imagenet} and target coverage is set to $1-\alpha=0.9$. We choose perturbation magnitude $\epsilon=0.125$ on CIFAR10 and CIFAR100 and $\epsilon=0.25$ on ImageNet.
% In this section, we present experiments on several vision datasets including CIFAR10, CIFAR100\cite{krizhevsky2009learning} and ImageNet\cite{deng2009imagenet}. Following the setup in the RSCP paper~\cite{gendler2021adversarially}, SmoothAdv\cite{salman2019provably} is employed to generate adversarial examples; see Sec. 5.1 in \cite{gendler2021adversarially}. Robust conformal prediction with target coverage $1-\alpha=0.9$ is performed on those examples with different methods. The results in \cref{tab:cifar,tab:imagenet} illustrate the great efficiency lift resulted from our methods. Besides, we perform an ablation study in \cref{tab:ablation} on robust conformal training method to show its necessity.

\noindent \textbf{Evaluation metrics and baseline.}
We present the average size of prediction sets $C_{\epsilon}^{+}(x)$ as a key metric, since the robustness is guaranteed by our theoretical results for \algoname  (\cref{thm:MCScoreMainResultHoef}). For the baseline, we choose the vanilla method from \citet{gendler2021adversarially}, where HPS and APS are directly applied as the base score without any modifications. 

\noindent \textbf{Model.}
We choose ResNet-110 \cite{he2016deep} for CIFAR10 and CIFAR100 and ResNet-50 \cite{he2016deep} for ImageNet. The pre-trained weights are from \citet{cohen2019certified} for CIFAR10 and ImageNet and from \citet{gendler2021adversarially} for CIFAR100.

\noindent \textbf{Hyperparameters.}
In \algoname, we choose $\beta=0.001$ and the number of Monte Carlo examples $N_{\text{MC}}=256$. For PTT, we choose $b=0.9$ and $T = 1/400$ and we discuss this choice in \cref{app:sigDesign}. The size of holdout set $|D_{\text{holdout}}| = 500$. We discuss more experimental details in \cref{app:expDetails}.
% We evaluate marginal coverage and average prediction set size on both adversarial examples and clean examples. Due to space limit, here we only present results on adversarial examples. The clean results are similar and could be found in \cref{app:cleanResult}. Regarding the baseline, we compare our results with \cite{gendler2021adversarially} using codes they published. 
\subsection{Results and Discussion}
\cref{tab:RSCPplusSplitCifar} and \cref{tab:RSCPplusSplitInet} compare the average size of prediction sets on all three datasets with our \algoname~benchmark. Specifically, the first row shows the baseline method using base scores in \citet{gendler2021adversarially} directly equipped with our \algoname{}. Note that the baseline method gives trivial prediction sets (the prediction set size $=$ total number of class, which is totally uninformative) due to its conservativeness. Our methods successfully address this problem and provide a meaningful prediction set with robustness guarantee. 
\begin{table}[!t]
\centering
\scalebox{1}{
\begin{tabular}{l|l|l|l|l}
\hline
 Base score    & \multicolumn{2}{c|}{HPS} &  \multicolumn{2}{c}{APS}     \\ 
\hline
Method / Dataset         & CIFAR10     & CIFAR100    &  CIFAR10     & CIFAR100     \\ 
\hline
Baseline \cite{gendler2021adversarially}  & 10       & 100        &10&100 \\ 
PTT \textbf{(Ours)}        &\textbf{2.294}&26.06 & \textbf{2.685} & 21.96 \\ 
PTT+RCT \textbf{(Ours)} &\textbf{2.294}&\textbf{18.30}      & 2.824 & \textbf{20.01} \\  \hdashline
% PTT + Bern \textbf{(Ours)}& 2.174 & 24.59 & 2.584 & 21.10 \\
% PTT+RCT+Bern \textbf{(Ours)} & 2.174 & 17.30 & 2.695 & 18.21 \\
\textcolor{blue}{Improvement} over baseline: PTT &\textcolor{blue}{\textbf{4.36$\times$}}&\textcolor{blue}{\textbf{3.84$\times$}}
&\textcolor{blue}{\textbf{3.72$\times$}}&\textcolor{blue}{\textbf{4.55$\times$}}\\
\textcolor{blue}{Improvement} over baseline: PTT + RCT &\textcolor{blue}{\textbf{4.36$\times$}}&\textcolor{blue}{\textbf{5.46$\times$}}&
\textcolor{blue}{\textbf{3.54$\times$}}&\textcolor{blue}{\textbf{5.00$\times$}}\\
\hline
\end{tabular}
}
\caption{\textbf{Average prediction set ($C_{\epsilon}^{+}(x)$) size of \algoname~ on CIFAR10 and CIFAR100.} For CIFAR10 and CIFAR100, $\epsilon=0.125$ and $\sigma=0.25$. Following \citet{gendler2021adversarially}, we take $N_{\text{split}} = 50$ random splits between calibration set and test set and present the average results (Same for \cref{tab:RSCPplusSplitInet}). We could see that the baseline method only gives trivial predictions containing the whole label set, while with PTT or PTT + RCT we can give informative and compact predictions. }
\label{tab:RSCPplusSplitCifar}
\end{table}

\begin{table}[t]
\centering
\scalebox{1}{
\begin{tabular}{l|l|l}
\hline
Method / Base score      & HPS &  APS \\ \hline
Baseline \cite{gendler2021adversarially} & 1000   & 1000 \\  
PTT \textbf{(Ours)}    & 1000 & 94.66\\
PTT + Bernstein \textbf{(Ours)} & \textbf{59.12} & \textbf{70.87}\\
\hdashline 
\textcolor{blue}{Improvement} over baseline: PTT  & - & \textcolor{blue}{\textbf{10.6$\times$}}\\
\textcolor{blue}{Improvement} over baseline: PTT + Bernstein  &\textcolor{blue}{\textbf{16.9$\times$}} & \textcolor{blue}{\textbf{14.1$\times$}} \\
\hline
\end{tabular}
}
\caption{\textbf{Average prediction set ($C_{\epsilon}^{+}(x)$) size of \algoname~ on ImageNet.} For ImageNet, $\epsilon=0.25$ and $\sigma=0.5$. The ImageNet dataset is more challenging and our PTT only works for APS score, but we find by applying the improvement with Empirical Bernstein's bound (denoted as "PTT + Bernstein") we discussed
in \cref{sec:ImprovementWithBern}, we could largely reduce the size of prediction sets. }
\label{tab:RSCPplusSplitInet}
\end{table}
% \begin{figure}[!hbt]
%     \centering
%     \includegraphics[scale=0.25]{}
%     \caption{Average size vs. Number of Monte Carlo samples. The experiment is conducted on CIFAR10 dataset with PTT method. The base score is HPS.}
%     \label{fig:SizeVsNmc}
% \end{figure}

\begin{table}[t!]
\centering
\begin{tabular}{l|ccccc}
\hline
$N_{MC}$  & 256 & 512 & 1024 & 2048 & 4096 \\
\hline
Average size of prediction sets $C_{\epsilon}^+(x)$ & 2.294 & 2.094 & 1.954 & 1.867 & 1.816 \\ \hline
\end{tabular}
\caption{\textbf{Average size vs. Number of Monte Carlo samples $N_{MC}$.} The experiment is conducted on CIFAR10 dataset with PTT method. The base score is HPS. It could be seen that by increasing the number of Monte Carlo examples,
we could further improve the efficiency of \algoname, at the cost of higher computational expense.}
\label{fig:SizeVsNmc}
\vspace{-0.3cm}
\end{table}
\paragraph{Why the baseline gives trivial results under \algoname?} The key reason is conservativeness. RSCP is conservative compared to vanilla conformal prediction, and the challenging task of giving guaranteed robustness makes the situation worse. The result is that: without the boost of our PTT and RCT methods, the predictor is so conservative that it gives the whole label set to guarantee robustness, which is not the goal of users. This again justifies the necessity of our methods.
\paragraph{Impact of number of Monte Carlo samples $N_{\text{MC}}$.}
 In \cref{fig:SizeVsNmc}, we study how the number of Monte Carlo samples ($N_{\text{MC}}$) influences the average size. It could be observed that the average size decreases as more Monte Carlo samples are taken. This is expected as more Monte Carlo samples reduce the error and provide a tighter bound in \cref{Eq:HoefBoundResultClean,Eq:HoefBoundResultPerturbed}. Therefore, a trade-off between prediction set size and computation cost needs to be considered in practice, since increasing $N_{\text{MC}}$ also significantly boosts the computation requirement. 
\section{Conclusion}
This paper studies how to generate prediction sets that are robust to adversarial attacks. We point out that the previous method RSCP \cite{gendler2021adversarially} has two major limitations: flawed robustness certification and low efficiency. We propose a new theoretically sound framework called \algoname~which resolves the flaw in RSCP and provides a provable guarantee. We also propose a training-free and scalable method (PTT) and robust conformal training method (RCT) to significantly boost the efficiency of RSCP. 
We have conducted extensive experiments and the empirical results support our theoretical analysis.
Experiments show that the baseline gives trivial prediction sets (all class labels), while our methods are able to provide meaningful prediction sets that boost the efficiency of the baseline by up to $4.36\times$ on CIFAR10, $5.46\times$ on CIFAR100, and $16.9\times$ on ImageNet.

\section*{Acknowledgement}
This work is supported in part by National Science Foundation (NSF) awards CNS-1730158, ACI-1540112, ACI-1541349, OAC-1826967, OAC-2112167, CNS-2100237, CNS-2120019, the University of California Office of the President, and the University of California San Diego's California Institute for Telecommunications and Information Technology/Qualcomm Institute. Thanks to CENIC for the 100Gbps networks. G. Yan and T.-W. Weng are supported by National Science Foundation under Grant No. 2107189 and 2313105. Y. Romano was supported by the Israel Science Foundation (grant No. 729/21).  Y. Romano also thanks the Career Advancement Fellowship, Technion, for providing research support.

\section*{Reproducibility statement}
We provide the training details of RCT, hyperparameters, and other details of our experiments in \cref{app:expDetails}. The code is released on \href{https://github.com/Trustworthy-ML-Lab/Provably-Robust-Conformal-Prediction}{https://github.com/Trustworthy-ML-Lab/Provably-Robust-Conformal-Prediction.}
\clearpage
\newpage

\bibliography{iclr2024_conference}
\bibliographystyle{iclr2024_conference}
\newpage
\section*{Appendix}
\appendix
\renewcommand{\proofname}{\oldproof}
\renewcommand{\thetheorem}{\Alph{section}.\arabic{theorem}}
\renewcommand{\thelemma}{\Alph{section}.\arabic{lemma}}
\renewcommand{\thecorollary}{\Alph{section}.\arabic{corollary}}
\setcounter{theorem}{0}
\setcounter{lemma}{0}
\setcounter{corollary}{0}
\counterwithin{figure}{section}
\counterwithin{table}{section}
\counterwithin{equation}{section}
\startcontents[appendices]
\printcontents[appendices]{}{1}{\setcounter{tocdepth}{2}}
\newpage
\section{Further discussion on \algoname{} in Sec 3}
 \subsection{Why bounding Monte Carlo estimation error is difficult for original RSCP?}
 \label{sec:DifficultyErrorBound}
  As we discussed in the main text, in the calculation of the conformity score of RSCP, Monte-Carlo estimation for $\scorers$ is required which introduces estimation error. However, in the original work \cite{gendler2021adversarially}, the error bound for this is not introduced. Below we discuss why this kind of bound is difficult to be applied in RSCP context. 

The main difficulty lies in the threshold calculation step: denote $Q_p(H)$ the $p(1 + 1 / |H|)$-empirical quantile of a set $H$ and recall that $\tau_\text{adj} = \tau + \frac{\epsilon}{\sigma} = Q_{1-\alpha}(\{\smoothscorebs(x, y)\}_{(x, y) \in D_{\text{cal}}}) + \frac{\epsilon}{\sigma}$ 
depends on every example in the calibration set. This means that to derive a non-trivial bound for $\tau_\text{adj}$, we need to bound the error on the whole set \textit{simultaneously}, which is difficult. For example, if there are 10000 samples in the calibration set and each example is bounded with a confidence level of 0.999, then the confidence level for threshold $\tau_{\text{adj}}$ would be $(0.999)^{10000} \approx 4.5 \times 10^{-5}$! To obtain a reasonable confidence level for $\tau_\text{adj}$, practitioners have to use a much smaller calibration set or a much higher confidence level, which may significantly hurt the performance.

\subsection{Does the coverage guarantee of conformal prediction hold for randomized scores?}
\label{sec:proofRandomScore}
In the proof of \cref{thm:MCScoreMainResultHoef}, we utilize the coverage guarantee of conformal prediction on clean examples, i.e. \cref{eq:CleanCoverageForRSCPPlus_Hoef}. A natural question is, does the coverage guarantee of conformal prediction still hold when a randomized score (like $\hat{S}_{RS}$ we introduced) is applied?
\paragraph{We argue the answer is yes.}
Randomized conformity scores have been used in multiple previous works, e.g. APS by \citet{romano2020classification} and RAPS by \citet{angelopoulos2020uncertainty}. The theory they presented is slightly different from our case, but the key idea is the same. 
Below, we present a proof for \cref{eq:CleanCoverageForRSCPPlus_Hoef} under our settings. Similar results could be found in \cite{angelopoulos2020uncertainty,romano2020classification}.

\begin{lemma}
Assume that $D_ {\text{cal}} = \{(x_i, y_i)\}_{i=1}^n$ is an i.i.d. sampled calibration set, $(x_{n+1}, y_{n+1})$ is a test example which is independently sampled from the same distribution. The score of example $i$ is $S_i = \hat S_{\text{RS}}(x_i, y_i) = \frac{1}{N_{\text{MC}}} \sum_{j=1}^{N_{\text{MC}}} S(x_i +\delta_{ij}, y_i), \forall i=1, 2, \cdots, n+1$, where all Monte-Carlo noises $\delta_{ij}$ are i.i.d. sampled from $\mathcal N(0, \sigma^2I_p)$, independent from other variables. Then, $\{S_i\}_{i=1}^{n+1}$ are i.i.d. distributed.
\end{lemma}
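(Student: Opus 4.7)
The plan is to package, for each index $i \in \{1,\dots,n+1\}$, the data point together with its associated Monte Carlo noises into a single random vector and then observe that the score $S_i$ is the image of this vector under a fixed measurable function. Concretely, I would define
\[
Z_i \;=\; (x_i,\, y_i,\, \delta_{i1},\, \delta_{i2},\, \dots,\, \delta_{iN_{\text{MC}}}) \in \mathbb{R}^p \times [K] \times (\mathbb{R}^p)^{N_{\text{MC}}}
\]
and the deterministic map
\[
f(x, y, d_1, \dots, d_{N_{\text{MC}}}) \;=\; \frac{1}{N_{\text{MC}}} \sum_{j=1}^{N_{\text{MC}}} S(x + d_j,\, y),
\]
so that $S_i = f(Z_i)$ for every $i$.

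Next I would verify that the $Z_i$ are themselves i.i.d. Identical distribution is immediate from the hypotheses: the $(x_i, y_i)$ share a common law, and each $\delta_{ij}$ is $\mathcal{N}(0, \sigma^2 I_p)$, so the joint law of $Z_i$ does not depend on $i$. For independence across $i$, I would invoke the grouping/associativity property of independence: the hypotheses tell us that $\{(x_i, y_i)\}_{i=1}^{n+1}$ is an i.i.d. collection and $\{\delta_{ij}\}_{i,j}$ is an i.i.d. collection independent of the $(x_i, y_i)$'s; partitioning these jointly independent variables into the groups $Z_1, \dots, Z_{n+1}$ (each group containing variables with distinct index $i$) yields mutual independence of the $Z_i$. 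Finally, since $S_i = f(Z_i)$ with the \emph{same} measurable $f$ applied to i.i.d. inputs, the pushforward argument gives that $\{S_i\}_{i=1}^{n+1}$ is i.i.d.

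There is really no substantive obstacle here; the subtlety, and the only thing worth being careful about, is making explicit that the across-sample independence of the Monte Carlo noises ($\delta_{ij} \perp \delta_{i'j'}$ for $i \neq i'$) and their independence from $(x_1, y_1, \dots, x_{n+1}, y_{n+1})$ are genuinely needed — otherwise the $Z_i$'s could be coupled through shared noise, and i.i.d.-ness of the $S_i$'s could fail. Both properties are, however, given directly by the hypothesis that all $\delta_{ij}$ are i.i.d.\ and independent of other variables, so the argument closes immediately. I would end with a one-sentence remark that this suffices to plug $\hat S_{\text{RS}}$ into the standard split-conformal coverage argument used in \cref{eq:CleanCoverageForRSCPPlus_Hoef}, since that argument only relies on exchangeability (indeed i.i.d.-ness) of calibration and test scores.
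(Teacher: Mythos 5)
Your proposal is correct and follows essentially the same route as the paper's proof: the paper also argues that $S_i$ is a fixed function of $(x_i, y_i)$ and the noises $\{\delta_{ij}\}_{j=1}^{N_{\text{MC}}}$, and that these input blocks are i.i.d.\ across $i$. Your version merely makes the grouping-of-independence and pushforward steps explicit, which is a welcome tightening but not a different argument.
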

\begin{proof} Note that the randomness of score $S_i$ comes from the data $(x_i, y_i)$ and the Monte-Carlo noise $\delta_{ij}$. From our assumption, $(x_i, y_i)$ are i.i.d. across the calibration set and test sample, and the noise $\delta_{ij}$ are also drawn i.i.d. independent of other variables. Since $S_i$ is a function of $(x_i, y_i)$ and $\{\delta_{ij}\}_{j=1}^{N_{\text{MC}}}$, this leads to the conclution that $\{S_i\}_{i=1}^{n+1}$ are i.i.d. distributed. 
\end{proof}

\begin{lemma}Suppose $\{S_i\}_{i=1}^{n+1}$ are i.i.d. distributed random variables. $\tauRSCPplus$ is the $q(1+1/n)$-th empirical quantile of $\{S_i\}_{i=1}^{n}$, i.e. $$\tauRSCPplus = min_s \frac{|\{i \mid S_i \leq s\}|}{n} \geq q(1+\frac{1}{n}).$$
Then, $\mathbb P (S_{n+1} \leq \tauRSCPplus) \geq q$.
\end{lemma}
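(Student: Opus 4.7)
The plan is to invoke exchangeability, which is the standard route to split-conformal coverage bounds, once the empirical-quantile definition is rewritten as an order statistic. Since $\{S_i\}_{i=1}^{n+1}$ are i.i.d., their joint distribution is invariant under any permutation, and the rank of $S_{n+1}$ among the full collection $\{S_1,\dots,S_{n+1}\}$ is uniform on $\{1,\dots,n+1\}$ (with ties, which have probability zero when the $S_i$ have a continuous distribution, broken uniformly at random or by index to preserve this uniformity).

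First I would translate the quantile definition into an order statistic. Setting $k := \lceil q(n+1)\rceil$, the inequality $|\{i\le n:S_i\le s\}|/n \ge q(1+1/n) = q(n+1)/n$ is equivalent to $|\{i\le n:S_i\le s\}|\ge k$, so the minimizer in the definition of $\tauRSCPplus$ is exactly the $k$-th order statistic of $S_1,\dots,S_n$:
\[
\tauRSCPplus \;=\; S_{(k)}, \qquad k = \lceil q(n+1)\rceil.
\]

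Second, I would argue that $\{S_{n+1}\le \tauRSCPplus\}$ is the same event as ``$S_{n+1}$ has rank at most $k$ within $\{S_1,\dots,S_{n+1}\}$''. Indeed, if $S_{n+1}\le S_{(k)}$ of the first $n$ variables, then at most $k-1$ of the other $n$ samples are strictly smaller than $S_{n+1}$, so its overall rank is at most $k$; conversely, if $S_{n+1}$ has overall rank at most $k$, fewer than $k$ of the remaining $n$ are strictly smaller, forcing $S_{n+1}\le S_{(k)}$. With a random tiebreak this equivalence is exact, and more generally one gets ``$\le$'' in one direction, which is all we need for the lower bound.

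Finally, by exchangeability the rank of $S_{n+1}$ among $n+1$ i.i.d. variables is uniform on $\{1,\dots,n+1\}$, hence
\[
\mathbb{P}\bigl(S_{n+1}\le \tauRSCPplus\bigr) \;\ge\; \frac{k}{n+1} \;=\; \frac{\lceil q(n+1)\rceil}{n+1} \;\ge\; q,
\]
which is the claim. The only real subtlety is the tie-handling in the equivalence between ``$S_{n+1}\le S_{(k)}$'' and ``rank at most $k$''; since the Monte Carlo smoothed scores $\hat S_{\mathrm{RS}}$ of interest in the paper are absolutely continuous, ties occur with probability zero and this issue disappears, but I would state the random-tiebreak convention explicitly so that the lemma reads cleanly in full generality.
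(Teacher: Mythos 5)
Your proof is correct, but it formalizes the exchangeability argument differently from the paper. You identify $\tauRSCPplus$ with the order statistic $S_{(k)}$, $k=\lceil q(n+1)\rceil$, translate the event $\{S_{n+1}\le \tauRSCPplus\}$ into ``the rank of $S_{n+1}$ among all $n+1$ scores is at most $k$,'' and then invoke uniformity of that rank. The paper never asserts rank uniformity: it defines indicators $I_i=\mathbf{1}\{|\{j: S_j<S_i\}|\ge k\}$, uses the identical distribution of the $I_i$ to get $(n+1)\,\mathbb{E}I_{n+1}=\mathbb{E}\sum_i I_i$, bounds that sum deterministically by $(n+1)-k$ via a pigeonhole count on the sorted scores, and concludes $\mathbb{E}I_{n+1}\le 1-q$. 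The two routes establish the same inequality, but the paper's bookkeeping is tie-proof by construction --- with ties the strict-inequality indicators can only vanish more often, and the counting bound is deterministic --- whereas your headline claim (rank exactly uniform on $\{1,\dots,n+1\}$) fails under ties unless a randomized tiebreak is built into the definition of rank. You do flag this and correctly note that only the one-sided inequality is needed, and for the Monte Carlo scores $\hat{S}_{\text{RS}}$ ties are a measure-zero event, so your argument goes through; still, for the lemma in full generality you should either adopt the indicator-counting device or make the tiebreak part of the rank definition rather than a parenthetical. One further small point applicable to both write-ups: when $\lceil q(n+1)\rceil>n$ the empirical quantile is vacuous (the threshold is the maximum calibration score or $+\infty$) and the claim holds trivially, so the identification $\tauRSCPplus=S_{(k)}$ should be restricted to $k\le n$.
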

\begin{proof} 
We define the following indicator variables:
$$
I_i = \mathbf 1 _{|\{j \mid S_i > S_j\}| \geq \lceil(n+1)q\rceil}.
$$
These variables have the following two properties:
\begin{enumerate}
    \item $\{I_i\}_{i=1}^{n+1}$ are identically distributed. Thus, $\mathbb E I_1 = \mathbb E I_2 = \cdots = \mathbb E I_{n+1}$. This follows from the i.i.d. property of $\{S_i\}_{i=1}^{n+1}$ and symmetricity.
    \item $\sum_{i=1}^{n+1} I_i \leq (n+1) - \lceil(n+1)q\rceil$. Suppose we order $S_i$ from the smallest to the largest:
$
S_{q_1} \leq S_{q_2} \cdots \leq S_{q_{n+1}}
$.
Then, from the definition of $I_i$, we know that $I_{q_1}, \cdots I_{q_{\lceil(n+1)q\rceil}} = 0$ (there could not be $\lceil(n+1)q\rceil$ values smaller than them), giving this property. 
\end{enumerate}
With these two properties, we could derive
$$
\begin{aligned}
\mathbb E \sum_{i=1}^{n+1} I_i &=  \sum_{i=1}^{n+1} \mathbb E I_i\\
        &= (n+1) \mathbb E I_{n+1} \quad (\text{property 1})\\
\end{aligned}
$$
and
$$
\begin{aligned}
        \mathbb E \sum_{i=1}^{n+1} I_i &\leq (n+1) - \lceil(n+1)q\rceil \quad (\text{property 2})\\
        &\leq (n+1)(1-q).
\end{aligned}
$$
Hence, $\mathbb E I_{n+1} \leq 1-q$. 
From the definition of $\tauRSCPplus$, we have
$$
        |\{i \mid S_i \leq \tauRSCPplus\}|  \geq \lceil n(1 + \frac{1}{n})q \rceil = \lceil (n+1)q \rceil.
$$
Hence, $S_{n+1} > \tauRSCPplus$ indicates $|\{i \mid S_i < S_{n+1}\}| \geq \lceil(n+1)q \rceil$, i.e. $I_{n+1} = 1$. Therefore,
$$
\begin{aligned}
\mathbb P(S_{n+1} \leq \tauRSCPplus) &= 1 -  P(S_{n+1} > \tauRSCPplus) \\
&\geq 1 - P(I_{n+1} = 1)\\
&= 1 - \mathbb E I_{n+1} \geq q
\end{aligned}
$$
gives us the conclusion. 
\end{proof}
\begin{remark}
The i.i.d. condition could be replaced by a weaker condition of exchangeability. From this lemma, it could be seen that the i.i.d. property of scores is enough for the coverage guarantee of conformal prediction. 
\end{remark}

\begin{proposition}Under the conditions of Lemma 1, 
 $\mathbb P [\hat S_{\text{RS}}(x_{n+1}, y_{n+1}) \leq \tauRSCPplus] \geq 1 - \alpha + 2\beta,$
 where $\tauRSCPplus$ is the  $(1 - \alpha + 2\beta)(1+1/n)$-th empirical quantile of calibration scores. This is exactly \cref{eq:CleanCoverageForRSCPPlus_Hoef}
 \end{proposition}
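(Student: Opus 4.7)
The plan is to obtain the proposition as a direct composition of the two preceding lemmas, specialized to $q = 1 - \alpha + 2\beta$. The work has already been done in Lemmas 1 and 2; all that remains is to verify that their hypotheses match the setting of the proposition and to chain the conclusions.

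First, I would invoke Lemma 1 on the calibration scores $S_i = \hat S_{\text{RS}}(x_i, y_i)$ for $i = 1, \ldots, n+1$. The hypotheses of Lemma 1 are exactly the standing assumptions of the proposition: the calibration points $(x_i, y_i)$ and the test point $(x_{n+1}, y_{n+1})$ are i.i.d.\ from the same distribution, and the Monte Carlo perturbations $\delta_{ij} \sim \mathcal{N}(0, \sigma^2 I_p)$ are drawn independently of everything else. Lemma 1 therefore delivers the conclusion that $S_1, \ldots, S_{n+1}$ are themselves i.i.d.

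Next, I would apply Lemma 2 with the specific choice $q = 1 - \alpha + 2\beta$. The random variables $\{S_i\}_{i=1}^{n+1}$ satisfy the i.i.d.\ hypothesis of Lemma 2 by the previous step, and the definition of $\tauRSCPplus$ in the proposition statement (inherited from Corollary 2 via $\tauRSCPplus = Q_{1-\alpha+2\beta}(\{\hat S_{\text{RS}}(x,y)\}_{(x,y)\in D_{\text{cal}}})$, recalling the convention $Q_p(H) = p(1+1/|H|)$-th empirical quantile from \cref{eq:vanillaCPThresholdDefinition}) coincides exactly with the $q(1 + 1/n)$-th empirical quantile of $\{S_i\}_{i=1}^n$ that appears in Lemma 2. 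Hence Lemma 2 yields
\[
\mathbb{P}(S_{n+1} \leq \tauRSCPplus) \geq 1 - \alpha + 2\beta,
\]
which is the claim, since $S_{n+1} = \hat S_{\text{RS}}(x_{n+1}, y_{n+1})$ by definition.

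There is essentially no obstacle to overcome here; the proposition is a bookkeeping step that stitches together the i.i.d.\ property of the randomized Monte Carlo scores (Lemma 1) with the standard quantile-based coverage guarantee (Lemma 2). The only subtlety worth explicitly flagging is the quantile-convention bookkeeping — that $(1-\alpha+2\beta)(1 + 1/n)$ in the definition of $\tauRSCPplus$ is exactly the $q(1+1/n)$ appearing in Lemma 2, so no additional correction factors enter — and the point that the Monte Carlo noises used on the test sample are drawn from the same Gaussian distribution, independently of both the data and the calibration noises, which is what makes the symmetry argument in Lemma 2 go through.
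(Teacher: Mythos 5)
Your proposal is correct and matches the paper's own proof, which is exactly the one-line composition "let $q = 1-\alpha+2\beta$ and apply Lemmas 1 and 2"; you simply spell out the hypothesis-checking and quantile-convention bookkeeping that the paper leaves implicit.
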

 \begin{proof}
     Let $q = 1 - \alpha + 2\beta$. Applying Lemma 1 and 2 gives the result. 
 \end{proof}
\subsection{Discussion on \cref{thm:MCScoreContinuity_Hoef}}
In this section, we first introduce two concentration inequalities: Hoeffding's inequality and Empirical Bernstein's inequality~\cite{maurer2009empirical}. Then we explain how to utilize Hoeffding's inequality to derive bounds in the proof of \cref{thm:MCScoreContinuity_Hoef}, and how we could use Empirical Bernstein's inequality to improve the bound. 
\label{app:BernHoeffLemma}
\subsubsection{Hoeffding's inequality and Empirical Bernstein's inequality}
\begin{lemma}
    (Hoeffding's Inequality) Let $X_1, \cdots, X_k$ be i.i.d. random variables bounded by the interval $[0, 1]$. Let $\overline{X} = \frac{1}{k}\sum_{j=1}^kX_j$, then for any $t \geq 0$
    \begin{equation}
        \prob(\overline{X} - \mathbb{E}\overline{X} \geq t) \leq e^{-2kt^2}
    \end{equation}
    and 
    \begin{equation}
        \prob(\mathbb{E}\overline{X} - \overline{X} \geq t) \leq e^{-2kt^2}.
        \label{eq:HoeBound}
    \end{equation}
    \label{thm:HoeBound}
\end{lemma}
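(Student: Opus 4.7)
The plan is to prove this via the standard Chernoff-Cramér approach. Since this is the classical Hoeffding's inequality, I do not expect any genuinely new ideas; the main work is combining an exponential moment bound with independence and optimizing the free parameter. I will focus on the upper tail $\prob(\overline{X} - \mathbb{E}\overline{X} \geq t) \leq e^{-2kt^2}$, and then obtain the lower tail by symmetry (apply the upper tail to $Y_j = 1 - X_j$, which is also in $[0,1]$).

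First I would introduce the centered variables $Y_j = X_j - \mathbb{E} X_j$, so that $\mathbb{E} Y_j = 0$ and $Y_j$ takes values in an interval of length at most $1$ (since $X_j \in [0,1]$). For any $s>0$, Markov's inequality applied to the monotone function $u \mapsto e^{su}$ gives
\begin{equation*}
\prob\!\left(\overline{X} - \mathbb{E}\overline{X} \geq t\right) \;=\; \prob\!\left(e^{s \sum_{j=1}^k Y_j} \geq e^{skt}\right) \;\leq\; e^{-skt}\,\mathbb{E}\!\left[e^{s\sum_{j=1}^k Y_j}\right].
\end{equation*}
Then by independence of the $Y_j$, the joint MGF factorizes as $\prod_{j=1}^k \mathbb{E}[e^{sY_j}]$.

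Next I would invoke Hoeffding's lemma on each factor: for a mean-zero random variable $Y$ supported in an interval of length $\ell$, one has $\mathbb{E}[e^{sY}] \leq e^{s^2 \ell^2 / 8}$. I would prove this sub-lemma in the usual way, by noting that $e^{sy}$ is convex, upper-bounding it by the chord connecting its endpoints on the support, taking expectations, and then showing that the resulting function $\psi(s)$ satisfies $\psi(0)=\psi'(0)=0$ and $\psi''(s) \leq \ell^2/4$, so that Taylor's theorem yields $\psi(s) \leq s^2 \ell^2/8$. Since each $Y_j$ lies in an interval of length at most $1$, this gives $\mathbb{E}[e^{sY_j}] \leq e^{s^2/8}$, hence
\begin{equation*}
\prob\!\left(\overline{X} - \mathbb{E}\overline{X} \geq t\right) \;\leq\; e^{-skt + ks^2/8}.
\end{equation*}

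Finally I would optimize the free parameter $s>0$. Differentiating the exponent $-skt + ks^2/8$ and setting it to zero gives $s = 4t$, which yields the exponent $-2kt^2$ and the desired bound $e^{-2kt^2}$. For the lower tail, applying the same argument to $X_j' = 1 - X_j$ (which is also in $[0,1]$, with $\overline{X'} - \mathbb{E}\overline{X'} = \mathbb{E}\overline{X} - \overline{X}$) immediately gives $\prob(\mathbb{E}\overline{X} - \overline{X} \geq t) \leq e^{-2kt^2}$, completing the lemma. The one slightly delicate step is Hoeffding's sub-lemma (bounding $\psi''$), but this is standard and amounts to recognizing $\psi''(s)$ as the variance under a tilted distribution on an interval of length $\ell$, which is maximized at $\ell^2/4$.
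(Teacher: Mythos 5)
Your proof is correct: the Chernoff--Cram\'er bound combined with Hoeffding's lemma ($\mathbb{E}[e^{sY}] \leq e^{s^2\ell^2/8}$ for mean-zero $Y$ on an interval of length $\ell$), followed by optimizing $s = 4t$ to get the exponent $-2kt^2$, and the reflection $X_j' = 1 - X_j$ for the lower tail, is exactly the standard argument. Note that the paper itself states this lemma as a classical result and offers no proof of it (only the subsequent corollary, which just substitutes $t = b_{\textrm{Hoef}}(\beta)$, is proved), so there is nothing to compare against; your write-up supplies the standard derivation correctly.
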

\begin{corollary}
    From Hoeffding's Inequality, we argue that with probability at least $1 - \beta$,
    \begin{equation}
        \mathbb{E}\overline{X} - \overline{X} \leq b_{\textrm{Hoef}}(\beta) = \sqrt{\frac{-log \beta}{2k}}.
        \label{eq:HoeCorollary}
    \end{equation}
    \begin{equation}
        \overline{X} - \mathbb{E}\overline{X}  \leq b_{\textrm{Hoef}}(\beta)
        \label{eq:HoeCorollaryTwo}
    \end{equation}
    \label{HoeProp}
\end{corollary}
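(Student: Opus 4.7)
The plan is to invert the one-sided tail bound supplied by Lemma~A.1 (Hoeffding's inequality) by choosing the deviation parameter $t$ so that the exponential upper bound exactly equals the desired failure probability $\beta$. Concretely, I would set
$$t := b_{\textrm{Hoef}}(\beta) = \sqrt{-\log \beta / (2k)}$$
and substitute into \cref{eq:HoeBound}, $\prob(\mathbb{E}\overline X - \overline X \geq t) \leq e^{-2kt^2}$. A direct calculation gives $e^{-2k t^2} = e^{\log \beta} = \beta$ (using $\beta \in (0,1)$ so $-\log\beta > 0$), hence $\prob(\mathbb{E}\overline X - \overline X \geq b_{\textrm{Hoef}}(\beta)) \leq \beta$. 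Taking the complementary event yields \cref{eq:HoeCorollary} with probability at least $1-\beta$.

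For \cref{eq:HoeCorollaryTwo} I would repeat the identical argument using the symmetric one-sided bound $\prob(\overline X - \mathbb{E}\overline X \geq t) \leq e^{-2kt^2}$ from Lemma~A.1. Both conclusions are standalone one-sided guarantees at confidence $1-\beta$, so no union bound is required at this stage; a union bound would only be needed if one wanted to assert both simultaneously (at confidence $1-2\beta$), which is precisely the bookkeeping that later appears in the proof of Theorem~1.

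There is no genuine obstacle in this corollary—the derivation is just an algebraic inversion of the sub-Gaussian exponent. The only hypotheses needed, namely that the $X_i$ are i.i.d.\ and take values in $[0,1]$, are inherited directly from the statement of Lemma~A.1, and the quantity under the square root is nonnegative exactly because $\beta < 1$, so the definition of $b_{\textrm{Hoef}}(\beta)$ is well-posed.
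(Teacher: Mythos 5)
Your proposal is correct and matches the paper's own proof exactly: both plug $t = b_{\textrm{Hoef}}(\beta)$ into the one-sided tail bound of Lemma~A.1 so that $e^{-2kt^2}=\beta$, and then take complements, treating the two inequalities as separate one-sided guarantees. Your additional remark that a union bound is only needed later (in Theorem~1) to assert both simultaneously is accurate and consistent with how the paper uses this corollary.
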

\begin{proof}
Plugging $t = b_{\textrm{Hoef}}(\beta) = \sqrt{\frac{-log \beta}{2k}}$ into \cref{eq:HoeBound} gives 
\begin{equation}
    \prob({\mathbb{E}\overline{X} - \overline{X} \geq b_{\textrm{Hoef}}}) \leq \beta,
\end{equation}
leading to \cref{eq:HoeCorollary} immediately. Similarly, we could get \cref{eq:HoeCorollaryTwo}.
\end{proof}
\begin{lemma}
    (Empirical Bernstein Inequality) Under the condition of \cref{thm:HoeBound}, with probability at least $1 - \beta$,
    \begin{equation}
        \overline{X} - \mathbb{E}\overline{X} \leq  b_{\textrm{Bern}}(\beta, V)=\left[\sqrt{\frac{2Vlog\frac{2}{\beta}}{k}} + \frac{7log\frac{2}{\beta}}{3(k-1)}\right]
    \end{equation}
    where $V$ is the sample variance of $X_1, \cdots, X_k$, i.e.
    \begin{equation}
        V = \frac{\sum_{j=1}^k X_j^2 - \frac{(\sum_{j=1}^k X_j)^2}{k}}{k-1}.
    \end{equation}
    \label{Bern}
\end{lemma}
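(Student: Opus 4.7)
The plan is to derive this bound by combining two standard concentration results via a union bound: the classical (population) Bernstein inequality applied to $\overline{X}$, together with a concentration inequality for the sample standard deviation. The resulting bound follows the argument of Maurer and Pontil (2009).

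First, I would invoke the standard Bernstein inequality. Let $\sigma^2 = \mathrm{Var}(X_1)$ denote the true variance. Since $X_i \in [0,1]$, Bernstein gives, with probability at least $1 - \beta/2$,
$$\overline{X} - \mathbb{E}\overline{X} \leq \sqrt{\frac{2 \sigma^2 \log(2/\beta)}{k}} + \frac{\log(2/\beta)}{3k}.$$
This has the desired structural form but depends on the unknown $\sigma^2$, so it must be replaced by $V$.

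Second, I would show that $\sigma$ is close to $\sqrt{V}$ with high probability. The map $(x_1,\dots,x_k) \mapsto \sqrt{V(x_1,\dots,x_k)}$ is a bounded-difference function: perturbing one coordinate within $[0,1]$ changes $\sqrt{V}$ by at most $1/\sqrt{k-1}$ (a short calculation using the definition of sample variance). Applying McDiarmid's inequality to this function, and combining with the fact that $\mathbb{E}\sqrt{V} \leq \sigma$, yields
$$\sigma \leq \sqrt{V} + \sqrt{\frac{2 \log(2/\beta)}{k-1}}$$
with probability at least $1 - \beta/2$. Squaring (using $(a+b)^2 \leq 2a^2 + 2b^2$ or a similar inequality) converts this into a bound on $\sigma^2$ in terms of $V$ and $\log(2/\beta)/(k-1)$.

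Third, I would substitute the bound on $\sigma$ into the Bernstein inequality and take a union bound over the two events, obtaining a statement that holds with probability at least $1-\beta$. The two square-root terms collapse (since $\sqrt{A \cdot (B + C)} \leq \sqrt{AB} + \sqrt{AC}$), giving a leading $\sqrt{2 V \log(2/\beta)/k}$ term plus lower-order $O(\log(2/\beta)/(k-1))$ corrections. Finally, absorbing the $1/(3k)$ Bernstein correction and the residual terms from the variance bound into a single constant yields the stated $7 \log(2/\beta) / [3(k-1)]$ remainder.

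The main obstacle is the second step: proving that $\sqrt{V}$ has bounded differences of order $1/\sqrt{k-1}$ and that $\mathbb{E}\sqrt{V} \leq \sigma$. The former is a direct but somewhat tedious calculation using the definition of $V$, while the latter follows from Jensen's inequality applied to the concave square root. Carefully tracking the constants through both steps (and verifying that the final constant $7/3$ in front of the remainder term is indeed achievable) is the only delicate part; everything else is a routine application of concentration tools.
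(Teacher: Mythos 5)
First, note that the paper does not prove this lemma at all: it is quoted verbatim from \citet{maurer2009empirical} as a known result, so there is no in-paper proof to match. Your proof architecture (population Bernstein/Bennett bound $+$ a high-probability replacement of $\sigma$ by $\sqrt{V}$ $+$ union bound, with $\sqrt{A(B+C)}\le\sqrt{AB}+\sqrt{AC}$ collapsing the terms) is exactly the Maurer--Pontil strategy, and steps one and three are fine.

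The genuine gap is in your second step. Two things go wrong. (a) McDiarmid cannot deliver the bound $\sigma\le\sqrt{V}+\sqrt{2\log(2/\beta)/(k-1)}$. Your bounded-difference constant $1/\sqrt{k-1}$ for $\sqrt{V}$ is correct (since $\sqrt{(k-1)V}=\min_c\lVert x-c\mathbf{1}\rVert_2$ is $1$-Lipschitz in each coordinate), but then $\sum_i c_i^2=k/(k-1)=\Theta(1)$, so McDiarmid yields a deviation of order $\sqrt{\log(1/\beta)}$ --- a constant that does \emph{not} decay in $k$ --- rather than the required $\sqrt{\log(1/\beta)/(k-1)}$. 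This is precisely why Maurer and Pontil do not use McDiarmid here: their Theorem 10 obtains the lower-tail bound $\prob\bigl(\sigma>\sqrt{V}+\sqrt{2\ln(1/\delta)/(k-1)}\bigr)\le\delta$ via the entropy method, exploiting that $(k-1)V$ is a self-bounding function (so the relevant variance proxy is $\E[(k-1)V]=(k-1)\sigma^2$ rather than $\sum_i c_i^2$), followed by completing the square in $\sigma^2-\sigma c\ge(\sigma-c)^2$ for $c\le\sigma$. (b) Even granting a useful concentration of $\sqrt{V}$ about its mean, Jensen gives $\E\sqrt{V}\le\sqrt{\E V}=\sigma$, which is the wrong direction: to convert ``$\sqrt{V}\ge\E\sqrt{V}-t$ w.h.p.'' into ``$\sigma\le\sqrt{V}+t$'' you would need a \emph{lower} bound $\E\sqrt{V}\ge\sigma-o(1)$, i.e.\ control of the downward bias of the sample standard deviation, which your argument does not supply. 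The self-bounding route sidesteps both issues by concentrating $V$ directly around $\sigma^2$.
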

\subsubsection{Applying Hoeffding's inequality to derive \cref{Eq:HoefBoundResultClean} and \cref{Eq:HoefBoundResultPerturbed}}
With Hoeffding's inequality, we can prove \cref{Eq:HoefBoundResultClean} and \cref{Eq:HoefBoundResultPerturbed}: Let $X_i = S(\testx + \delta_i, \testy)$. Note that $ \hat{S}_{\text{RS}}(\testx, \testy) = \frac{1}{N_{\text{MC}}} \sum_{i=1}^{N_{\text{MC}}} X_i = \overline{X}$ and $\mathbb{E} \overline{X} = \scorers(\testx, \testy)$, we can apply \cref{HoeProp} and get 
\begin{equation}
    \scorers(\testx, \testy) - \hat{S}_{\text{RS}}(\testx, \testy) \leq b_{\text{Hoef}}(\beta),
    \label{eq:HoefBoundResultCleanInApp}
\end{equation}
with probability at least $1-\beta$, which is exactly \cref{Eq:HoefBoundResultClean}. Similarly, we could get
\begin{equation}
    \scorers(\tilde{x}_{n+1}, y_{n+1}) \geq \hat{S}_{\text{RS}}(\tilde{x}_{n+1}, y_{n+1}) - b_{\textrm{Hoef}}(\beta)
    \label{eq:HoefBoundResultPerturbedInApp}
\end{equation}
with probability at least $1-\beta$, which is exactly \cref{Eq:HoefBoundResultPerturbed}. 
\subsubsection{Use Empirical Bernstein's inequality to improve \cref{Eq:HoefBoundResultPerturbed}}
\label{sec:ImprovementWithBern}
In practice, the predictor takes potentially perturbed input $\tilde{x}_{n+1}$ from the user, then generates a batch of Gaussian noise and computes corresponding $\hat{S}_{\text{RS}}$. Therefore, we could utilize the variation information from Monte Carlo samples to improve \cref{Eq:HoefBoundResultPerturbed}. Following the recommendation by \cite{zhai2020macer}, we use Empirical Bernstein's inequality to provide a tighter bound. Below we show the details.

Let $X_i = S(\testadv + \delta_i, \testy)$. Note that $ \hat{S}_{\text{RS}}(\testadv, \testy) = \frac{1}{N_{\text{MC}}} \sum_{i=1}^{N_{\text{MC}}} X_i = \overline{X}$ and $\mathbb{E} \overline{X} = \scorers(\testadv, \testy)$, we can apply \cref{Bern} and get 
\begin{equation}
    \hat{S}_{\text{RS}}(\testadv, \testy) - \scorers(\testadv, \testy)  \leq b_{\text{Bern}}(\beta, V),
    \label{eq:BernBoundResult}
\end{equation}
with probability at least $1-\beta$. With this inequality, we can derive \cref{thm:MCScoreContinuity,thm:MCScoreMainResult} which are the counterparts of \cref{thm:MCScoreContinuity_Hoef,thm:MCScoreMainResultHoef}. 
\begin{theorem}
    \label{thm:MCScoreContinuity}
    Let $(x_{n+1}, y_{n+1})$ be the clean test sample and $\testadv$ be perturbed input data that satisfies $\|\testadv - x_{n+1}\|_2 \leq \epsilon$. Then, with probability $1-2\beta$:
    $$\hat{S}_{\text{RS}}(\tilde{x}_{n+1}, y_{n+1}) - b_{\textrm{Bern}}(\beta, V) \leq \Phi\left[\Phi^{-1}[\hat{S}_{\text{RS}}(x_{n+1}, y_{n+1}) + b_{\textrm{Hoef}}(\beta)] + \frac{\epsilon}{\sigma}\right],$$
    where $b_{\textrm{Hoef}}(\beta) = \sqrt{\frac{-ln \beta}{2N_{\text{MC}}}}$, $b_{\textrm{Bern}}(\beta, V)=\left[\sqrt{\frac{2Vln\frac{2}{\beta}}{N_{\text{MC}}}} + \frac{7ln\frac{2}{\beta}}{3(N_{\text{MC}}-1)}\right]$, $V$ is sample variance of $\hat{S}_{\text{RS}}$.
\end{theorem}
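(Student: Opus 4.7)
The plan is to mirror the three-step chain used in the proof of Theorem~\ref{thm:MCScoreContinuity_Hoef}, but replace the Hoeffding bound applied at the perturbed point $\testadv$ with the sharper Empirical Bernstein bound from Lemma~\ref{Bern}. Concretely, I would build the same "bridge" diagram (Figure~\ref{fig:proof_diagram}): go from $\hat S_{\text{RS}}(\testadv,\testy)$ up to $S_{\text{RS}}(\testadv,\testy)$ via a concentration inequality, cross from the perturbed to the clean side using the randomized-smoothing Lipschitz bound \cref{eq:S_RSContinuity_Hoef}, and come back down from $S_{\text{RS}}(x_{n+1},\testy)$ to $\hat S_{\text{RS}}(x_{n+1},\testy)$ via a second concentration inequality. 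The only change relative to Theorem~\ref{thm:MCScoreContinuity_Hoef} is that one of these two legs is now controlled by Bernstein instead of Hoeffding.

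First, on the clean side I would keep Hoeffding. Setting $X_i = S(x_{n+1}+\delta_i,\testy) \in [0,1]$ so that $\overline X = \hat S_{\text{RS}}(x_{n+1},\testy)$ and $\mathbb E \overline X = S_{\text{RS}}(x_{n+1},\testy)$, Corollary~\ref{HoeProp} yields
\begin{equation*}
S_{\text{RS}}(x_{n+1},\testy) \;\le\; \hat S_{\text{RS}}(x_{n+1},\testy) + b_{\text{Hoef}}(\beta)
\end{equation*}
with probability at least $1-\beta$ (this is exactly \cref{eq:HoefBoundResultCleanInApp}). The reason to stay with Hoeffding here is that in deployment the calibration-time scores are precomputed, so estimating a sample variance $V$ per calibration example is awkward; Hoeffding is data-free and requires only $N_{\text{MC}}$.

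Second, on the perturbed side I would invoke Empirical Bernstein. Setting instead $X_i = S(\testadv+\delta_i,\testy)$, so that $V$ is the sample variance of these $N_{\text{MC}}$ values, Lemma~\ref{Bern} gives \cref{eq:BernBoundResult},
\begin{equation*}
\hat S_{\text{RS}}(\testadv,\testy) - b_{\text{Bern}}(\beta,V) \;\le\; S_{\text{RS}}(\testadv,\testy),
\end{equation*}
with probability at least $1-\beta$. This is the step that buys the improvement: at test time we have already drawn the $N_{\text{MC}}$ smoothing samples at $\testadv$, so the sample variance $V$ is immediately available, and when $S$ is peaked (as with HPS on ImageNet) $V$ is tiny, making $b_{\text{Bern}}$ substantially smaller than $b_{\text{Hoef}}$.

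Finally, I would chain the pieces exactly as in \cref{eq:conclusionThmOne}: starting from \cref{eq:S_RSContinuity_Hoef} (equivalently \cref{eq:RScontinuity}), namely $\Phi^{-1}[S_{\text{RS}}(\testadv,\testy)] \le \Phi^{-1}[S_{\text{RS}}(x_{n+1},\testy)] + \epsilon/\sigma$, I apply $\Phi$ to both sides, substitute the clean-side Hoeffding bound on the right using monotonicity of $\Phi$ and $\Phi^{-1}$, and substitute the perturbed-side Bernstein bound on the left. A union bound over the two concentration events gives the stated conclusion with probability $\ge 1-2\beta$. The only subtlety to watch out for is ensuring monotonicity is applied in the right direction at each inversion (both $\Phi$ and $\Phi^{-1}$ are increasing, which makes each substitution safe), and verifying that the perturbed-side Bernstein bound is independent of the clean-side Hoeffding event so the union bound is legitimate, which holds because the Monte Carlo noises at $x_{n+1}$ and at $\testadv$ can be drawn independently. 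No step should require more than routine bookkeeping beyond this.
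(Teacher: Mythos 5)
Your proposal is correct and matches the paper's argument exactly: the paper's proof of this theorem is literally ``replace the perturbed-side Hoeffding bound with the Empirical Bernstein bound and repeat the chain from Theorem~\ref{thm:MCScoreContinuity_Hoef}.'' One tiny remark: the union bound $\prob(A\cup B)\le \prob(A)+\prob(B)$ holds without any independence between the two concentration events, so the independence check you flag at the end is unnecessary (though harmless).
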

\begin{corollary}
    Let the prediction set
    $$
     C_\epsilon^+(\testadv; \tauRSCPplus) = \left\{k \in [K] \mid \hat{S}_{\text{RS}}(\tilde{x}_{n+1}, k) - b_{\textrm{Bern}}(\beta, V) \leq \Phi\left[\Phi^{-1}[\tauRSCPplus + b_{\textrm{Hoef}}(\beta)] + \frac{\epsilon}{\sigma}\right]\right\}
    $$,
    $C_\epsilon^+(\testadv; \tauRSCPplus)$ satisfies robust coverage guarantee in \cref{eq:rscp_coverage}, i.e.
    $
    \prob(y_{n+1}\in C_\epsilon^+(\testadv; \tauRSCPplus)) \geq 1-\alpha.
    $
    \label{thm:MCScoreMainResult}
\end{corollary}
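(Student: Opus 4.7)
The plan is to mirror the proof of Corollary 2, replacing the Hoeffding-based test-side bound with the Bernstein-based bound of Theorem A.1 while leaving the calibration-side argument untouched. Reading off the prediction set, membership of $y_{n+1}$ in $C_\epsilon^+(\testadv; \tauRSCPplus)$ is exactly the inequality
$$\hat{S}_{\text{RS}}(\testadv, y_{n+1}) - b_{\textrm{Bern}}(\beta, V) \leq \Phi\bigl[\Phi^{-1}[\tauRSCPplus + b_{\textrm{Hoef}}(\beta)] + \tfrac{\epsilon}{\sigma}\bigr],$$
so it suffices to show this holds with probability at least $1-\alpha$, under the standard choice $\tauRSCPplus = Q_{1-\alpha+2\beta}(\{\hat{S}_{\text{RS}}(x,y)\}_{(x,y)\in D_{\text{cal}}})$ inherited from Corollary 2.

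First, I would invoke the clean-side coverage guarantee that was already proved en route to Corollary 2 (it rests only on the i.i.d./exchangeability property of $\hat S_{\text{RS}}$ established in Lemmas A.1--A.2, and is insensitive to which bound is used at test time): for the above choice of $\tauRSCPplus$,
$$\mathbb{P}\bigl[\hat{S}_{\text{RS}}(\testx, y_{n+1}) \leq \tauRSCPplus\bigr] \geq 1 - \alpha + 2\beta.$$
Second, I would apply Theorem A.1 at the test point, which yields, with probability at least $1-2\beta$,
$$\hat{S}_{\text{RS}}(\testadv, y_{n+1}) - b_{\textrm{Bern}}(\beta, V) \leq \Phi\bigl[\Phi^{-1}[\hat{S}_{\text{RS}}(\testx, y_{n+1}) + b_{\textrm{Hoef}}(\beta)] + \tfrac{\epsilon}{\sigma}\bigr].$$
A union bound then guarantees that both events hold simultaneously with probability at least $1-\alpha$.

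Third, I would chain monotonicity on this intersection event. Since $\Phi^{-1}$ is strictly increasing, $\hat{S}_{\text{RS}}(\testx, y_{n+1}) \leq \tauRSCPplus$ implies $\Phi^{-1}[\hat{S}_{\text{RS}}(\testx, y_{n+1}) + b_{\textrm{Hoef}}(\beta)] \leq \Phi^{-1}[\tauRSCPplus + b_{\textrm{Hoef}}(\beta)]$; adding $\epsilon/\sigma$ and applying the strictly increasing $\Phi$ preserves the inequality. Substituting into the Theorem A.1 bound collapses the right-hand side to the quantity appearing in the definition of $C_\epsilon^+(\testadv; \tauRSCPplus)$, so $y_{n+1}\in C_\epsilon^+(\testadv; \tauRSCPplus)$ on this event, and the claimed $1-\alpha$ coverage follows.

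The only subtlety worth flagging is the asymmetry between the two corrections in the final membership condition: the Hoeffding term $b_{\textrm{Hoef}}(\beta)$ sits on the calibration side inside $\Phi^{-1}$, while the Bernstein term $b_{\textrm{Bern}}(\beta, V)$ enters on the test side. This is forced by how the two concentration inequalities are deployed in Theorem A.1 — Hoeffding controls $\scorers(\testx,\testy) - \hat S_{\text{RS}}(\testx,\testy)$ uniformly in $\beta$, whereas the sample variance $V$ driving the Bernstein bound is only observable from the Monte-Carlo draws at $\testadv$ and so cannot symmetrically tighten the calibration-side term without extra work. No real obstacle arises in the proof itself; the argument is essentially a one-line substitution of Theorem A.1 into the union-bound/monotonicity template of Corollary 2.
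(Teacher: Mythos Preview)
Your proposal is correct and follows essentially the same approach as the paper: the paper's proof simply says to replace the Hoeffding-based perturbed-side bound (\cref{Eq:HoefBoundResultPerturbed}) with the Bernstein-based one (\cref{eq:BernBoundResult}) and otherwise rerun the proofs of \cref{thm:MCScoreContinuity_Hoef,thm:MCScoreMainResultHoef}, which is exactly the clean-coverage $+$ Theorem~A.1 $+$ union bound $+$ monotonicity template you describe. Your closing remark about the asymmetry of the two corrections is accurate and adds useful intuition that the paper leaves implicit.
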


\begin{proof}
Replace \cref{Eq:HoefBoundResultPerturbed} with \cref{eq:BernBoundResult}, the remaining proofs are the same as the proof of \cref{thm:MCScoreContinuity_Hoef,thm:MCScoreMainResultHoef}.
\end{proof}

Compare \cref{thm:MCScoreMainResultHoef} and \cref{thm:MCScoreMainResult}, we see that with Empirical Bernstein's inequality, we replace the $b_{\text{Hoef}}(\beta)$ on the left-hand side with $b_{\text{Bern}}(\beta, V)$, which could be computed at test time. It's natural to ask if we can also improve \cref{Eq:HoefBoundResultClean} with Empirical Bernstein's equality. Unfortunately, in practice, the predictor cannot access $\testx$ (the clean example corresponding to the input), hence we could not do Monte Carlo sampling and calculate variance to use Empirical Bernstein's equality.
Therefore, we stick to Hoeffding's inequality for \cref{Eq:HoefBoundResultClean}.

% \begin{table}[!htpb]
% \centering
% \begin{tabular}{|l|r|r|}
% \hline
%     & \multicolumn{1}{l|}{Hoeffding's inequality on both Eq. (15) \& (16)} & \multicolumn{1}{l|}{ Bernstein's inequality on Eq. (16) (ours)} \\ \hline
% HPS & 2.294                                                  &\textbf{2.174}                      \\ \hline
% APS & 2.631                                                  & \textbf{2.536}                      \\ \hline
% \end{tabular}
% \caption{Comparison of average size of RSCP+ with different bounding methods on CIFAR10. The column "Hoeffding's inequality on both Eq. (15) \& (16)" denotes the version which uses Hoeffding's inequality on both Equations (15) and (16). The column " Bernstein's inequality on Eq. (16) (ours)" denotes the version we present in the paper. We could see that by introducing Bernstein's inequality, the average size of prediction sets decreases, which means that the efficiency is improved.}
% \label{tab:ComparisonOfHoefOnlyAndBern}
% \end{table}
\newpage
\subsection{More discussion on conservativeness of RSCP and \algoname}
\label{sec:conservertivenessDef}
In Section 4, we motivate our PTT methods by discussing the conservativeness of the RSCP method. In this section, we provide a formal definition of conservativeness of RSCP and ~\algoname, and discuss how our PTT and RCT methods could reduce this conservativeness.
\begin{definition}
    For a conformal predictor that generates robust prediction set $C_{\epsilon}(x)$, we define its conservativeness as:
    \begin{equation}
        \conserve := \mathbb{E}_{x \sim P_x} (|C_{\epsilon}(x)| - |C(x)|),
    \end{equation}
    where $C(x)$ is the prediction set by vanilla conformal prediction generated with the same conformity score, and $P_x$ is the distribution of clean input $x$.
\end{definition}
\begin{remark}
    The predictor here could be RSCP or \algoname. By this definition, conservativeness measures the increase in the average size of prediction sets, when we try to make the predictor robust to adversarial perturbations. 
\end{remark}
With this definition, we could study the efficiency of robust conformal prediction methods by decomposing the average prediction set size into two parts: (1) The average size of vanilla conformal predictions $C(x)$, using the same conformity score and (2) the conservativeness. See \cref{fig:PTT_density_plot} where we give an intuitive illustration of these two parts. \cref{fig:PTT_density_plot} also provides an intuitive idea of why we should reduce the slope of score cdf (i.e. reducing the probability density as shown in the figure) near the threshold to reduce conservativeness.

\paragraph{Empirical study.} We conduct an empirical study into these two parts on the CIFAR10 dataset, with HPS as the non-conformity score. The results are shown in \cref{tab:sizeOfTwoParts}. In order to show a better comparison, we choose the RSCP benchmark because the baseline only gives trivial results on \algoname. As shown in the table, our PTT method has a similar average size for the first part but reduced the conservativeness significantly. This verifies the intuition we get from \cref{fig:PTT_density_plot}.
\begin{figure}
    \centering
    \includegraphics[scale=0.65]{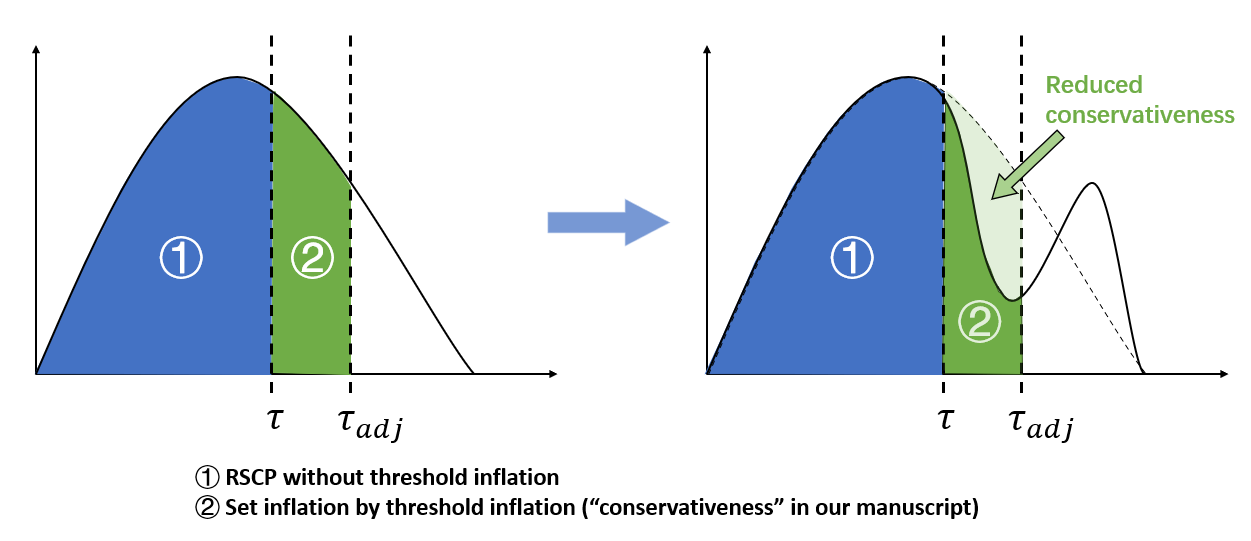}
    \caption{\textbf{Density plot of (non)-conformity scores.} (Left) The prediction set of RSCP could be decomposed into two parts: (1) Base part generated by vanilla conformal prediction ($|C(x)|$, or RSCP before threshold inflation) and (2) Set inflation by threshold inflation (conservativeness). (Right) By reshaping the cdf, our method could reduce the second part, even without re-ranking the samples. This also gives an intuitive motivation for why we want to reduce the slope of cdf near the threshold (i.e. reduce the density near the threshold). }
    \label{fig:PTT_density_plot}
\end{figure}

\begin{table}[!htpb]
\centering
\begin{tabular}{|l|p{3cm}|p{4.5cm}|p{2.5cm}|}
\hline
& Avg. Size of robust predictions $|C_{\epsilon}(x)|$ & Avg. of vanilla CP with randomized smoothing score $|C_(x)|$& $\conserve$  \\ \hline
Baseline                                      &   2.108                                   &             1.482                                 & 0.626                           \\ \hline
PTT                                           &  1.779                                    &  1.468                                            & 0.311                           \\ \hline
Size reduction & 0.329 & 0.014 & 0.315 \\ \hline
\end{tabular}
\caption{Comparison of conservativeness: Our PTT method largely reduces the conservativeness of RSCP.}
\label{tab:sizeOfTwoParts}
\end{table}
\newpage

\section{Further discussion on PTT method in Sec 4.1}
 \subsection{Exchangebility of rank-transformed scores}
 In the ranking transformation discussed in \cref{subset:quanAnalysisGap}, we introduce the ranking transformation $\posttrans_{rank}$. A concern on this transformation will be, since both test scores and calibration scores rely on this holdout set, would the guarantee of conformal prediction be broken as they are no longer independent? Here, we show that the guarantee still holds as the ranking transformation keeps the exchangeability between scores. Denote $P_i = \posttrans_{rank}(S(x_i, y_i))$ as the calibration scores and $P_{n+1} = \posttrans_{rank}(S(x_{n+1}, y_{n+1}))$. We want to show that $P_1, P_2, \cdots, P_{n+1}$ are exchangeable. For any permutation $P_{i_1}, P_{i_2}, \cdots, P_{{i_{n+1}}}$, consider its pdf, we have 
 \begin{equation}
 \begin{aligned}
     &p_{P_{i_1}, P_{i_2}, \cdots, P_{{i_{n+1}}}}(t_1, t_2, \cdots, t_{n+1})\\
    =&\int p_{P_{i_1}, P_{i_2}, \cdots, P_{{i_{n+1}}} \mid D_{\text{holdout}}}(t_1, t_2, \cdots, t_{n+1} \mid D) p_{D_{\text{holdout}}}(D)dD\\
    =& \int \prod_{j=1}^{n+1}p_{P_{i_j}\mid D_{\text{holdout}}}(t_j\mid D) p_{D_{\text{holdout}}}(D)dD \\ 
    & \quad \text{(Given $D_{\text{holdout}}=D$, the scores become conditionally i.i.d.)}\\
    =& \int \prod_{j=1}^{n+1}p_{P_{j}\mid D_{\text{holdout}}}(t_j\mid D) p_{D_{\text{holdout}}}(D)dD \\
    =& p_{P_{1}, P_{2}, \cdots, P_{n+1}}(t_1, t_2, \cdots, t_{n+1}).
 \end{aligned}
 \end{equation}
 Thus, after the ranking transformation, the scores are exchangeable, hence satisfy the condition of conformal prediction.  
\subsection{Concerns on additional data}
As we discussed in \cref{subsec:PTT}, our PTT method requires an additional holdout set. In order to address the concern that our PTT benefits from using more data, we added the holdout set to the calibration set for the baseline method, so that the number of additional data samples will be the same for our PTT and baseline. With this modification, we found that the baseline still gave a trivial prediction set with all labels. 
\begin{table}[]
\centering
\begin{tabular}{|l|l|l|l|}
\hline
Dataset      & CIFAR10                 & CIFAR100                 & ImageNet                  \\ \hline
Average size & \multicolumn{1}{r|}{10} & \multicolumn{1}{r|}{100} & \multicolumn{1}{r|}{1000} \\ \hline
\end{tabular}
\caption{Baseline results with holdout set $D_{\text{holdout}}$ added to calibration set.}
\end{table}
\subsection{Ranking transformation turns score distribution into uniform distribution}
\label{sec:uniformThmAfterRanking}
\begin{theorem}
 $(\posttrans_{\text{rank}} \circ S)(x, y)$ is a discrete random variable, which takes values $0, \frac{1}{|D_{\text{holdout}}|}, \frac{2}{|D_{\text{holdout}}|}, \cdots, 1$ with equal probability.
 \label{thm:uniformdist}
 \end{theorem}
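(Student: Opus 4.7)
The plan is to reduce the claim to the standard exchangeability fact that the rank of a new i.i.d. sample among $N+1$ exchangeable samples is uniform on $\{1,2,\dots,N+1\}$.

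First I would set $N = |D_{\text{holdout}}|$ and write $S_i := S(x_i, y_i)$ for $(x_i,y_i)\in D_{\text{holdout}}$ together with $S_{N+1} := S(x,y)$ for the fresh sample $(x,y)$. Since $(x,y)$ is drawn from the same distribution $P_{XY}$ as the holdout points and is independent of them, and the map $(x,y)\mapsto S(x,y)$ is (possibly randomized but) applied independently to each sample, the tuple $(S_1,\dots,S_{N+1})$ is i.i.d., hence exchangeable.

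Next I would make precise what ``rank with ties broken uniformly at random'' means by augmenting each score $S_i$ with an independent $U_i\sim\mathrm{Unif}[0,1]$ and using the strict lexicographic order on the pairs $(S_i,U_i)$. Under this augmentation the pairs are themselves i.i.d., the lexicographic order is almost surely a total order, and by the definition in the paper, $r[S_{N+1};\{S_1,\dots,S_N\}]$ equals the number of indices $i\in\{1,\dots,N\}$ with $(S_i,U_i)\prec(S_{N+1},U_{N+1})$.

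Now I would invoke exchangeability: let $R$ be the rank of $(S_{N+1},U_{N+1})$ within the totally ordered multiset $\{(S_i,U_i)\}_{i=1}^{N+1}$; since the $N+1$ pairs are i.i.d., $R$ is uniformly distributed on $\{1,2,\dots,N+1\}$. Observing that $r[S_{N+1};D_{\text{holdout}}] = R-1$, this rank is uniform on $\{0,1,\dots,N\}$, and dividing by $N$ shows that $(\posttrans_{\text{rank}}\circ S)(x,y) = R/N - 1/N$ is uniform on the $N+1$ values $0,\tfrac{1}{N},\tfrac{2}{N},\dots,1$, which is exactly the claim.

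The only delicate point, and the main obstacle I would be careful about, is the tie-breaking step: without the auxiliary $U_i$'s the rank is not almost surely well defined when $P_{XY}$ has atoms in the pushforward through $S$, and one must check that the external randomization used for ties is both independent of the scores and applied exchangeably to all $N+1$ samples (the holdout and the test point must be treated symmetrically). Once that symmetry is in place, the exchangeability argument above goes through verbatim, and no distributional assumption on $P_{XY}$ or on $S$ is needed.
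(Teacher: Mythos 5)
Your proof is correct and is exactly the standard exchangeability argument that the paper does not write out itself but delegates by citation to \citet{kuchibhotla2020exchangeability}: the $N+1$ scores (holdout plus test) are i.i.d., hence exchangeable, so the rank of the test score is uniform on $\{1,\dots,N+1\}$, and rescaling by $1/N$ after subtracting one gives the stated uniform law on $\{0,\tfrac{1}{N},\dots,1\}$. Your explicit lexicographic augmentation with independent uniforms to formalize ``ties broken randomly,'' applied symmetrically to the holdout points and the test point, is the one genuinely delicate step and you handle it correctly; without it the argument would break for score distributions with atoms.
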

This result is well-known in statistics and we refer to Corollary 1 in \citet{kuchibhotla2020exchangeability} that presented similar results, and Section 2.4 in \citet{kuchibhotla2020exchangeability} for the proof.
\subsection{Design choice of Sigmoid transformation}
 \label{app:sigDesign}
 \begin{figure}
     \centering
     \includegraphics[scale=0.5]{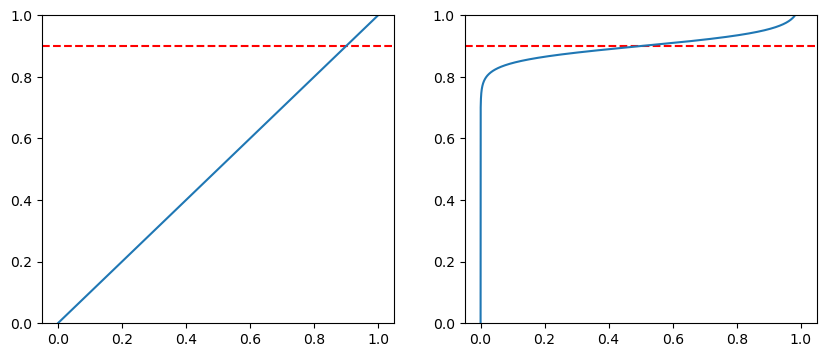}
     \caption{\textbf{Comparison of score cdf before(left) and after(right) Sigmoid transformation.} The red dotted line denotes the desired coverage $1-\alpha$. The left figure shows the cdf of uniformly distributed score $S$ after the ranking transformation. The right figure shows cdf of $\posttrans_{\text{sig}}(S)$. We could see that Sigmoid transformation greatly reduced the slope near threshold $\tau$ ($\tau = \Phi^{-1}(1-\alpha)$, corresponding to the intersection of the red line and the blue curve). }
     \label{fig:scoreCDFcomparison}
 \end{figure}
 As we discussed in the main text, our goal is to design a monotonically increasing transformation that reduces the slope $\Phi_{S}^\prime(\tau)$. 
 After ranking transformation, the distribution of scores is turned into a uniform distribution, i.e.  $\Phi_{\posttrans_{\text{rank}} \circ S}(t) = t$. Recall that $\posttrans_{\text{sig}}(s) = \phi\left[(s - b) / T\right]$. Thus, after applying our Sigmoid transformation, the cdf becomes 
 \newcommand{\multiTransScore}{\posttrans_{\text{sig}} \circ \posttrans_{\text{rank}} \circ S}
 \begin{equation}
     \Phi_{\multiTransScore}(t) = \posttrans_{\text{sig}}^{-1} (t)
      \label{eq:closeFormScoreCDF}
 \end{equation}
 and the derivative becomes
 \begin{equation}
     \Phi_{\multiTransScore}^\prime(t) = (\posttrans_{\text{sig}}^{-1})^\prime(t) = \left[\posttrans_{\text{sig}}^{\prime}[\posttrans_{\text{sig}}^{-1}(t)]\right]^{-1}
 \end{equation}
 The slope at the threshold is
 \begin{equation}
 \begin{aligned}
 \Phi_{\multiTransScore}^\prime(\tau) &= 
 \left[\posttrans_{\text{sig}}^{\prime}[\posttrans_{\text{sig}}^{-1}(\tau)]\right]^{-1}&\\
 &= \left[\posttrans_{\text{sig}}^{\prime}(1-\alpha)\right]^{-1}& (\Phi_{\multiTransScore}(\tau) = \posttrans_{\text{sig}}^{-1} (\tau) = 1-\alpha)\\
 &= \frac{1}{T} \phi^\prime\left[\frac{1 - \alpha - b}{T}\right]     &  (\text{Definition of $\posttrans_{\text{sig}}$})
 \end{aligned}
 \end{equation}
By choosing $b=1-\alpha$, the right hand side equals $\frac{1}{4T}$. Hence, with $T$ large enough, we could achieve our goal of reducing $\Phi_{\multiTransScore}^\prime(\tau)$ as discussed above.
\subsection{Discussion on monotonicity condition}
\label{sec:DiscussMonoton}
In \cref{subset:quanAnalysisGap}, we introduced two conditions for our desired transformation. One of these is the monotonicity condition which requires the transformation to be monotonically increasing. In this section, we discuss the reason we designed this condition and verify that the PTT we proposed satisfies this condition.
\paragraph{Monotonicity ensures transformed scores keep the information.} By applying the transformation, we hope the new score could become more robust to adversarial perturbations while maintaining the performance on clean examples. The monotonicity condition ensures this by the following theorem:
\begin{theorem}
    For a monotonically non-decreasing transformation $\posttrans$, the new score $\posttrans \circ S$ is equivalent to $S$ in vanilla conformal prediction, i.e. they generate the same prediction set on all examples. 
\end{theorem}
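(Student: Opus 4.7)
My plan is to show set equivalence by tracking two things: (i) the calibration threshold is transported by $\posttrans$ to become the threshold for the new score, and (ii) the membership condition is preserved elementwise by the monotonicity of $\posttrans$.

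First, I would write down the thresholds explicitly. Let $\tau = Q_{1-\alpha}(\{S(x,y)\}_{(x,y)\in D_{\text{cal}}})$ be the vanilla threshold from \cref{eq:vanillaCPThresholdDefinition}, and let $\tau^{\posttrans} = Q_{1-\alpha}(\{(\posttrans\circ S)(x,y)\}_{(x,y)\in D_{\text{cal}}})$ be its analogue for the transformed score. Because the empirical $q$-quantile is always attained at one of the sample values (it is the $\lceil q(n+1)\rceil$-th order statistic), and because a non-decreasing $\posttrans$ preserves the order of any finite sequence, $\posttrans$ and the order-statistic operator commute on this sample. Hence $\tau^{\posttrans} = \posttrans(\tau)$.

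Next, I would verify that the membership conditions in the two prediction sets match. For the forward direction, if $S(x,k)\le \tau$, monotonicity of $\posttrans$ immediately gives $(\posttrans\circ S)(x,k)\le \posttrans(\tau) = \tau^{\posttrans}$, so $k \in C'(x;\tau^{\posttrans})$. The converse requires a small extra step when $\posttrans$ has plateaus: if $(\posttrans\circ S)(x,k)\le \posttrans(\tau)$ but $S(x,k)>\tau$, then $\posttrans$ is constant on $[\tau, S(x,k)]$, and I would argue that this case is benign because $\tau$ itself is a sample value $s_{(r)}$, so $\posttrans(s_{(r)}) = \posttrans(S(x,k))$ means the transformed calibration quantile could equivalently be chosen as $\posttrans(S(x,k))$; in that case $S(x,k)$ lies in a tied block at the threshold and the set membership is conventionally resolved the same way in both pipelines (i.e., including $k$). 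For the PTT instantiation in the paper, where $\posttrans_{\text{sig}}$ is strictly increasing and $\posttrans_{\text{rank}}$ is almost-surely injective via random tiebreaking, the composition $\posttrans$ is strictly increasing with probability one, which gives the converse directly.

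Combining the two directions, $C(x;\tau) = C'(x;\tau^{\posttrans})$ for every $x$, which is the desired set equivalence. The one point requiring care is the plateau case in the converse direction, and that is the step I expect to be the only real obstacle; everything else is an immediate consequence of the order-preservation of non-decreasing maps. I would handle this either by assuming strict monotonicity (which is the practically relevant case for PTT) or by stating the equivalence as an almost-sure identity under a mild tiebreaking convention.
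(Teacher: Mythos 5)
Your proof follows essentially the same route as the paper's: the paper likewise observes that the threshold is an order statistic of the calibration scores, that a non-decreasing $\posttrans$ preserves their order so the transformed threshold is $\posttrans(\tau)$, and then reduces set equality to the biconditional $S(x,y)\le\tau \Leftrightarrow (\posttrans\circ S)(x,y)\le\posttrans(\tau)$. The only difference is that the paper asserts this biconditional ``directly'' from monotonicity, whereas you correctly flag that the converse direction fails on plateaus of $\posttrans$ and needs either strict monotonicity (which holds almost surely for the PTT composition, since $\posttrans_{\text{sig}}$ is strictly increasing and random tie-breaking makes $\posttrans_{\text{rank}}$ injective) or an explicit tie-breaking convention --- so your extra step is not redundant caution but a patch for a genuine gap in the paper's own argument.
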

\begin{proof}
    Denote the threshold and prediction set generated by original score $S$ as $\tau_S$ and $C^S(x)$. Denote the threshold and prediction set generated by new score $\posttrans \circ S$ as $\tau_{\posttrans \circ S}$ and $C^{\posttrans \circ S}(x)$. From the construction of the prediction set in vanilla conformal prediction (\cref{eq:vanillaCPPredSet}), to show two prediction sets are identical, we only need to show
    \begin{equation}
        S(x, y) \leq \tau_S \Longleftrightarrow \posttrans \circ S(x, y) \leq \tau_{\posttrans \circ S}
    \label{eq:monoCond}
    \end{equation}
    Recall that the threshold is calculated as $(1-\alpha)(1 + 1 / |D_{\text{cal}}|)$ empirical quantile of calibration scores. Suppose for original score $S$, $\tau_S$ is the conformity score of $i$-th calibration example, i.e. $\tau_S = S(x_i, y_i)$. We argue that $\tau_{\posttrans \circ S}$ is also from $i$-th calibration example, i.e. $\tau_{\posttrans \circ S} = \posttrans \circ S(x_i, y_i)$, because a monotonically non-decreasing transformation does not change the order of calibration scores. Then \cref{eq:monoCond} becomes
    \begin{equation}
        S(x, y) \leq S(x_i, y_i) \Longleftrightarrow \posttrans \circ S(x, y) \leq \posttrans \circ S(x_i, y_i)
    \end{equation}
    This could be directly derived since $\posttrans$ is monotonically non-decreasing. 
    
\end{proof}
\paragraph{PTT satisfies monotonicity condition. } In order to verify this, we need to show both $\posttrans_{\text{rank}}$ and $\posttrans_{\text{sig}}$ are monotonically non-decreasing. For $\posttrans_{\text{rank}}$, rank function is non-decreasing. For $\posttrans_{\text{sig}}$, the Sigmoid function $\phi$ is non-decreasing, hence $\posttrans_{\text{sig}}$ is also non-decreasing for any $T > 0$.
\subsection{Empirical evidence for coverage gap reduction by PTT}
\label{app:gapReduction}
 % \begin{figure}[!h]
 %     \centering
 %     \includegraphics[scale=0.8]{img/S_illustration.JPG}
 %     \caption{The plot of $\Phi^{-1}_{S}$ for $S= \scoretrans$, $T=0.01$ and $b=0.9$. From the figure, we could see that $\Phi^{-1}_{S}$ increases rapidly near $1-\alpha=0.9$, which supports our idea.}
 %     \label{fig:sbase}
 % \end{figure}
In \cref{subsec:PTT}, we propose a transformation called PTT which satisfies slope reduction and monotonicity conditions.
In \cref{app:1dexample}, we analyze a 1-D synthetic example and theoretically show that our PTT satisfying the two conditions could alleviate the conservativeness of RSCP. 
In this section, we study PTT empirically and show that our PTT satisfying these two conditions could reduce derivative $\Phi_{\smoothscorebs}^\prime(\tau)$, thus reduce coverage gap $\covgap$.
We apply PTT on the CIFAR10 dataset and compare the empirical CDF of $\smoothscorebs$, as shown in \cref{fig:empiricalcdf}. 
To make comparison easier, we present the inverse of cdf. Note that
\begin{align}
 \Phi_{S}^\prime(\tau) &= \Phi_{S}^\prime[\Phi_{S}^{-1}(1-\alpha)] & (\Phi_{S}(\tau) = 1-\alpha)\nonumber\\
 &= [(\Phi_{S}^{-1})^\prime(1-\alpha)]^{-1}.& (f^\prime(x)\cdot(f^{-1})^\prime[f(x)] = 1)
 \end{align}
 Hence, we could compare  $\Phi_{\smoothscorebs}^\prime(\tau)$ by comparing $(\Phi_{S}^{-1})^\prime(1-\alpha)$, the derivative of $\Phi^{-1}_{\smoothscorebs}$ at target coverage $(1-\alpha)$. 
 Higher $\Phi^{-1}_{\smoothscorebs}(1-\alpha)$ means lower $\Phi_{\smoothscorebs}^\prime(\tau)$. As we could see in \cref{fig:empiricalcdf}, the transformed score $\scoretrans$ has a higher derivative at $1-\alpha=0.9$, thus  $\Phi_{\smoothscorebs}^\prime(\tau)$ is reduced. 
 \begin{figure}[!h]
     \centering
     \begin{subfigure}[b]{0.45\textwidth}
         \centering
         \includegraphics[width=\textwidth]{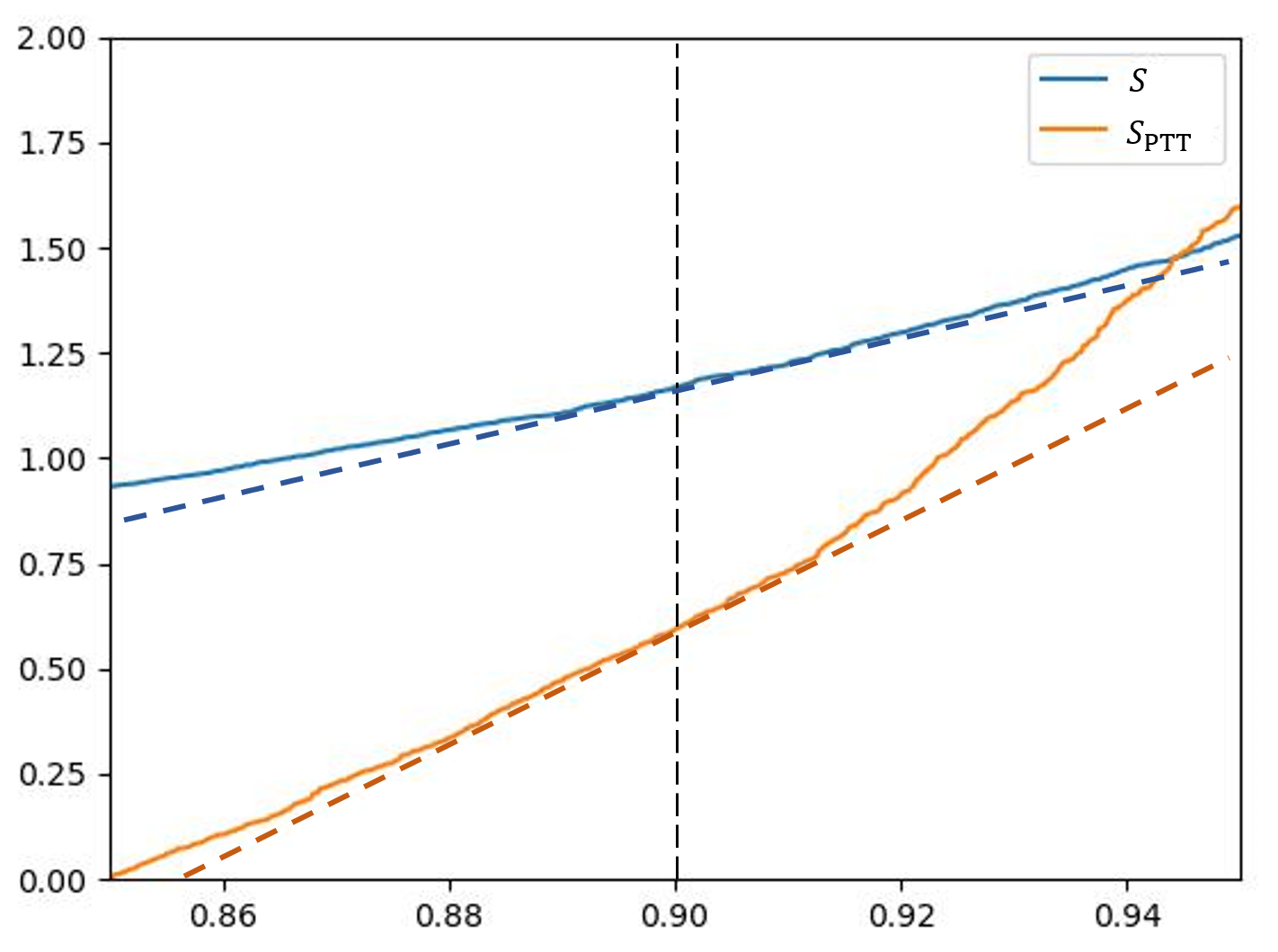} 
         \caption{APS}
     \end{subfigure}
     \hfill
     \begin{subfigure}[b]{0.45\textwidth}
         \centering
         \includegraphics[width=\textwidth]{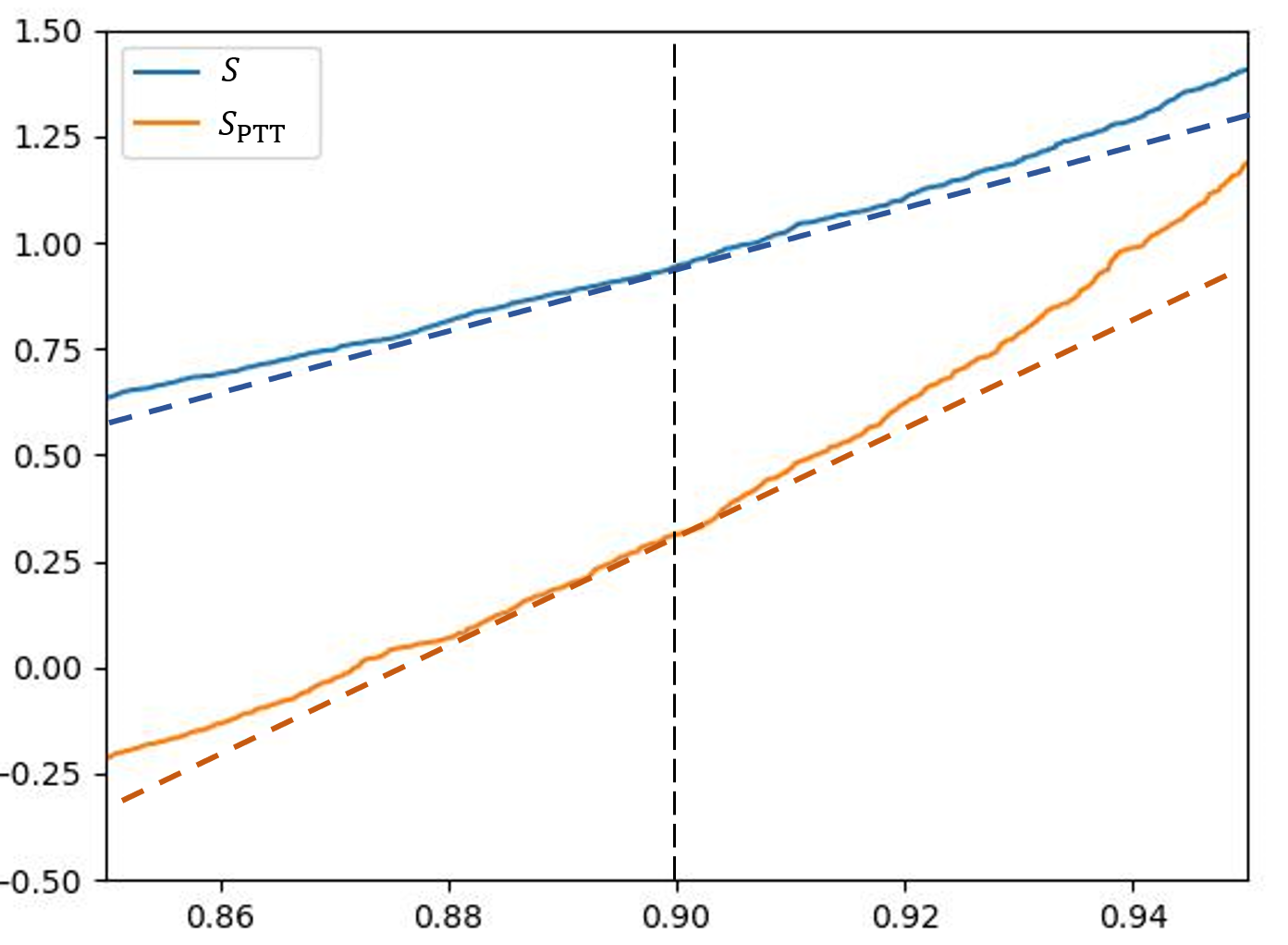}
         \caption{HPS}
     \end{subfigure}
        \caption{Empirical inverse cdf of $\smoothscorebs$ on CIFAR10 test set, with base score HPS and APS. Comparing two base scores: the original score $S$ and our transformed score $\scoretrans$, we see the derivative at $1-\alpha=0.9$ is larger for $\scoretrans$, indicating its $\Phi_{\smoothscorebs}^\prime(\tau)$ is smaller.}
        \label{fig:empiricalcdf}
\end{figure}
\subsection{Theoretical analysis of PTT on a 1-D synthetic dataset}
\newcommand{\funcH}[2]{h_{#1}(#2)}
\newcommand{\funcHUpper}[3]{h^{#3}_{#1}(#2)}
\label{app:1dexample}
In \cref{subsec:PTT}, we proposed PTT which applies a transformation on the base score to reduce conservativeness. 
In this section, we provide a theoretical analysis of our PTT method. 
Without any assumption on the data distribution, the analysis would be difficult. Hence, we construct a 1-D synthetic dataset and analyze our PTT method on it. 

\paragraph{Settings.} Consider a 1-D binary classification problem where $x \in [-0.5, 0.5]$ and $y \in \{-1, 1\}$. Suppose $\prob(y = 1) = \prob(y = -1) = 0.5$ and 
\begin{equation*}
    x \sim 
    \begin{cases}
    U(-0.5, 0), &\quad y = -1;\\
    U(0, 0.5), &\quad y = 1.\\
    \end{cases}
    \nonumber
    \label{eq:1d-x-dist}
\end{equation*}
Assume we use a linear model as the base model, where
\begin{equation}
    \hat{\pi}(x, y=1) = 
    \begin{cases}
x + 0.5, &\quad x \in [-0.5, 0.5];\\
1, &\quad x > 0.5;\\
0, &\quad x < -0.5.\\
\end{cases}
    \nonumber
\end{equation}
and $\hat{\pi}(x, y=-1) = 1 - \hat{\pi}(x, y=1)$. We study the HPS score here: $S_{\text{HPS}}(x, y) = 1 - \hat{\pi}(x, y) $
\begin{remark}
    Note that under this setting the score distribution $\Phi_{S_{HPS}}$ is uniform. The reason is the ranking transformation $\posttrans_{\text{rank}}$ involves a sampling process and makes the analysis difficult. Since the purpose of introducing $\posttrans_{\text{rank}}$ is to make the score uniformly distributed, we directly construct a uniformly distributed score, so that we can omit $\posttrans_{\text{rank}}$ in the following analysis.
\end{remark}
\paragraph{Analysis.} Note that under this setting, the two classes $y=1$ and $y=-1$ are actually symmetric. Therefore, in the analysis below, we focus on the class $y=-1$. Denote $\funcH{S}{x}$ as the smoothed score in \cref{eq:scorers} for class $y=-1$ with base score $S$:
\begin{align}
    \funcH{S}{x} &\triangleq \smoothscorebs(x, y=-1; \sigma)\nonumber \\
    &= \Phi^{-1} [\mathbb{E}_{\delta \sim \mathcal{N}(0, \sigma^2)}\ S(x + \delta, y=-1)]
\end{align}
In this case, we could derive a close form expression for $\covgap$, $\Phi^{\prime}_{\smoothscorebs}(\tau)$ and average size.
\begin{restatable}{theorem}{result}
For any monotonically non-decreasing base score $S$, the coverage gap $\covgap$ is
\begin{equation}
    \covgap =\alpha + \Phi_{\smoothscorebs}(\tau_{\text{adj}})  - 1.\nonumber
\end{equation}
The average size is 
\begin{align}
    &\mathbb{E}_{(x, y) \sim P_{xy}} |C_{\epsilon}(x)|  \nonumber \\
    =\quad& \begin{cases}
    2\funcHUpper{S}{\tau_{\text{adj}}}{-1} + 1, &\quad \tau_{\text{adj}} \in ( \funcH{S}{-0.5},  \funcH{S}{0.5});\\
    2, &\quad \tau_{\text{adj}} \in [\funcH{S}{0.5}, \infty).\\
    \end{cases}
    \nonumber
\end{align}
The derivative is $\Phi^{\prime}_{\smoothscorebs}(\tau)=\frac{2}{\funcHUpper{S}{-\frac{\alpha}{2}}{\prime}}$.
\label{thm:result}
\end{restatable}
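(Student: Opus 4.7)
The plan is to exploit the symmetry of the construction so that all three claims reduce to a single one-dimensional computation involving $\funcH{S}{x}$. First I would observe that for the HPS score on the linear model, $S(x,1)=0.5-x=S(-x,-1)$; since the Gaussian noise distribution is symmetric around $0$, taking expectation and applying the monotone map $\Phi^{-1}$ preserves this identity, so $\smoothscorebs(x,-1)=\funcH{S}{x}$ and $\smoothscorebs(x,1)=\funcH{S}{-x}$. Because $S$ is monotonically non-decreasing and both $\mathbb E_\delta[\cdot]$ and $\Phi^{-1}$ are monotone, $\funcH{S}{\cdot}$ is itself non-decreasing; hence it is invertible on the image of $[-0.5,0.5]$ and I may introduce $a \triangleq \funcHUpper{S}{\tau_{\text{adj}}}{-1}$.

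For the coverage gap claim, the main text already shows $1-\alpha_{\text{adj}}=\Phi_{\smoothscorebs}(\tau_{\text{adj}})$ and $\Phi_{\smoothscorebs}(\tau)=1-\alpha$, so $\covgap = \alpha - \alpha_{\text{adj}} = \Phi_{\smoothscorebs}(\tau_{\text{adj}}) - (1-\alpha) = \alpha + \Phi_{\smoothscorebs}(\tau_{\text{adj}}) - 1$, which is immediate. For the derivative claim, I would condition on $y$: since $x\mid y{=}-1 \sim U(-0.5,0)$ and $x\mid y{=}1 \sim U(0,0.5)$, with $u=-x$ in the second case again uniform on $[-0.5,0]$, both conditional cdfs evaluate to $P(u \leq a)=2(a+0.5)$ when $a\in[-0.5,0]$. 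Averaging gives $\Phi_{\smoothscorebs}(t)=2\funcHUpper{S}{t}{-1}+1$ on the natural range, so $\Phi_{\smoothscorebs}(\tau)=1-\alpha$ forces $\funcHUpper{S}{\tau}{-1}=-\alpha/2$, and the inverse-function theorem yields $\Phi_{\smoothscorebs}'(\tau)=2/\funcHUpper{S}{-\alpha/2}{\prime}$.

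The average-size claim is the one that requires real case analysis, and this is the part I expect to be the main obstacle. By monotonicity of $\funcH{S}{\cdot}$, label $-1$ lies in $C_\epsilon(x)$ iff $x\leq a$ and label $1$ lies in $C_\epsilon(x)$ iff $-x\leq a$, so the size is $\mathbf 1_{\{x\leq a\}} + \mathbf 1_{\{x\geq -a\}}$ with $x\sim U(-0.5,0.5)$ marginally. When $a\in(\funcH{S}{-0.5},\funcH{S}{0.5})$, i.e.\ $a\in(-0.5,0.5)$, I would split on $\mathrm{sign}(a)$: if $a\geq 0$ the two events overlap on $[-a,a]$ and their union is all of $[-0.5,0.5]$, giving expected size $2(2a)+(1-2a)=2a+1$; if $a<0$ the events are disjoint with union of measure $1+2a$, and the expectation again evaluates to $2a+1$. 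The corner where $\funcH{S}{\cdot}$ is not strictly increasing can be absorbed by taking $a$ as the smallest preimage. When $\tau_{\text{adj}}\geq\funcH{S}{0.5}$ both membership conditions hold for every $x\in[-0.5,0.5]$, so the size is identically $2$.

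Throughout, the subtle points I would flag are (i) verifying that the symmetry $\funcH{S}{-x}=\smoothscorebs(x,1)$ survives the (potentially non-linear) transformation $\posttrans_{\text{sig}}$ applied on top of $S$, which follows because $\posttrans_{\text{sig}}$ is applied pointwise before smoothing; (ii) handling ties in the non-strict monotonicity of $S$ by picking a canonical inverse; and (iii) checking that the uniformity assumption on the marginal score distribution, used to replace $\posttrans_{\text{rank}}\circ S$ by $S$ directly, is consistent with the uniform input distribution on $[-0.5,0.5]$ so that no additional transformation is needed to make the analysis go through.
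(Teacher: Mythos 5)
Your proposal is correct and follows essentially the same route as the paper's proof: both derive $\Phi_{\smoothscorebs}(t)=2\funcHUpper{S}{t}{-1}+1$ from the class symmetry and the uniform conditional distribution of $x$, invert it to get $\tau=\funcH{S}{-\alpha/2}$ and the derivative via the inverse-function rule, and obtain the average size as the sum of the two label-membership probabilities under the marginal $U(-0.5,0.5)$. Your sign-of-$a$ case split for the average size is just a longer way of writing the paper's one-line linearity-of-expectation computation, and your flagged subtleties (i)--(iii) concern the downstream application to $\scoretrans$ rather than the theorem itself.
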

\noindent We defer the proof to \cref{app:1dexampleDetails}. With \cref{thm:result}, we could study how our transformation could help reduce average size. For $S=S_{\text{HPS}}$, we have
$ S_{\text{HPS}}(x, y=-1) = 1 - \hat{\pi}(x, y=-1) = \hat{\pi}(x, y=1)$ and we could derive $\funcH{S_{\text{HPS}}}{x}$ and $\funcHUpper{S_{\text{HPS}}}{x}{\prime}$ by \cref{eq:scorers}.
Next, we consider applying our transformation $\posttrans$ on HPS. Under simplifying assumptions, we get: $\funcH{\scoretrans}{x}=\frac{x + 0.5 - b}{\sigma}$ and $ \funcHUpper{\scoretrans}{x}{\prime} =  \frac{1}{\sigma}$. For more details on our assumptions and the derivation process, please see \cref{app:1dexampleDetails}. Notice that both $S_{\text{HPS}}$ and our transformed score $\scoretrans$ is monotonically increasing. Thus, we could 
apply \cref{thm:result} by plugging in corresponding $S$ and derive the metrics we are interested in. We show results for different $\sigma$ in \cref{tab:1dresult}.

\paragraph{Conclusions} By this example, we could see
\begin{enumerate}
    \item \textbf{The coverage gap $\covgap$ reflects the conservativeness of RSCP.} As we could see in \cref{tab:1dresult}, $\covgap$ is a good indicator of conservativeness of predictor. By reducing $\covgap$, we could make the average size smaller.
    \item \textbf{$\Phi_{\smoothscorebs}^{\prime}(\tau) \cdot M_{\epsilon}$ is a good approximation for $\covgap$.} In this case, $M_{\epsilon} = \frac{\epsilon}{\sigma}$. From \cref{tab:1dresult}, we could see that $\covgap \approx \Phi_{\smoothscorebs}^{\prime}(\tau) \cdot \frac{\epsilon}{\sigma}$, except for some cases where $\covgap$ is upper-bounded by $10\%$ (because $\covgap \leq \alpha$). This supports our linear approximation in \cref{eq:linearApprox}.
    \item \textbf{Our PTT could reduce the derivative $\Phi_{\smoothscorebs}^{\prime}(\tau)$}. Therefore, PTT is able to reduce the coverage gap $\covgap$ and improve efficiency.
    \item \textbf{By applying the Sigmoid transformation, the score converges to optimal score when $T \rightarrow 0$.} See \cref{subsec:optimalityOfPTT} for formal statement and proof.
\end{enumerate}
\begin{table}[htp]
\centering
\scalebox{0.9}{\begin{tabular}{p{1cm}|p{0.8cm}|p{1.3cm}|p{1cm}|p{2.1cm}|c}
\toprule\hline
$\sigma^2$& Base Score & $\Phi_{\smoothscorebs}^{\prime}(\tau) \cdot \frac{\epsilon}{\sigma}$& $\alpha_{\text{gap}}$    & Avg. Size & Conservativeness\\ \hline
\multirow{2}{*}{$0.01$} & $S_{\text{HPS}}$     & 0.07916 & 7.98\% & 0.98 & 0.08\\ 
& $\scoretrans$   & 0.02 & 2.00\% & 0.92(-6.12\%) & 0.02\\ 
\hline
\multirow{2}{*}{$0.001$} & $S_{\text{HPS}}$     & 0.2503 & 10.00\% & 1.15 & 0.25 \\ 
& $\scoretrans$   & 0.02 & 2.00\% & 0.92(-20.00\%) & 0.02  \\ 
\hline
\multirow{2}{*}{$0.0001$} & $S_{\text{HPS}}$      & 0.7916 & 10.00\% & 1.62 & 0.72\\ 
& $\scoretrans$   & 0.02 & 2.00\% & 0.92(-43.21\%) & 0.02\\ 
\hline
\bottomrule
\end{tabular}
}
\caption{Results for the 1-D synthetic example, comparing HPS $S_{\text{HPS}}$ and our transformed score $\scoretrans$. $\epsilon$ is set to $0.01$ for all cases. The improvement relative to the baseline is shown in parentheses. From the results, it could be seen that our transformation consistently reduce $\covgap$ and average set size for different $\sigma$. }
\label{tab:1dresult}
\end{table}

\subsubsection{Proofs}
\label{app:1dexampleDetails}
In this section, we discuss the details of the illustrative example.
First, we present the derivation of \cref{thm:result}. Then we discuss how we apply \cref{thm:result} to calculate metrics in \cref{tab:1dresult} for HPS $S_{\text{HPS}}$ and our transformed score $\scoretrans$.
\paragraph{Proof of \cref{thm:result}.}
Recall that $\funcH{S}{x} \triangleq \smoothscorebs(x, y=-1; \sigma)$.
\begin{lemma}
If base score $S$ is monotonically non-decreasing w.r.t. $x$, the $\funcH{S}{x}$ is monotonically non-decreasing w.r.t. $x$.
\label{lem:monotonic}
\end{lemma}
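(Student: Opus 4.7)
The plan is to show that $\funcH{S}{x}$ is monotone in $x$ by observing that it is a composition of three operations, each of which preserves (non-strict) monotonicity: (i) the translation $\delta \mapsto x + \delta$ (so larger $x$ means a pointwise larger argument to $S$), (ii) taking expectation over $\delta$, and (iii) applying $\Phi^{-1}$, the standard Gaussian quantile function, which is strictly increasing on $(0,1)$.

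Concretely, I would fix $x_1 \leq x_2$ and argue in three steps. First, for each fixed $\delta \in \mathbb{R}$, since $x_1 + \delta \leq x_2 + \delta$, the assumed monotonicity of $S(\cdot, y=-1)$ in its first argument yields the pointwise inequality $S(x_1 + \delta, -1) \leq S(x_2 + \delta, -1)$. Second, because the integral of a nonnegative difference is nonnegative, taking $\mathbb{E}_{\delta \sim \mathcal{N}(0, \sigma^2)}$ preserves this inequality, so
\begin{equation*}
\scorers(x_1, -1) \;=\; \mathbb{E}_{\delta} S(x_1 + \delta, -1) \;\leq\; \mathbb{E}_{\delta} S(x_2 + \delta, -1) \;=\; \scorers(x_2, -1).
\end{equation*}
Third, since $S$ takes values in $[0,1]$ (as required by the RSCP construction so that $\Phi^{-1}$ is defined on the smoothed score), both arguments lie in $[0,1]$, and applying the strictly increasing function $\Phi^{-1}$ yields $\funcH{S}{x_1} \leq \funcH{S}{x_2}$, which is the desired conclusion.

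There is essentially no hard part: every step is a routine monotonicity-preservation argument. The only mild care needed is at the $\Phi^{-1}$ step, where one should note that $\scorers(x, -1) \in [0,1]$ so that $\Phi^{-1}$ is well-defined (with the usual convention $\Phi^{-1}(0) = -\infty$ and $\Phi^{-1}(1) = +\infty$ to handle boundary cases, which do not affect the inequality). The proof is short enough that it can be written in just a few lines, and it supplies the structural monotonicity needed later when analyzing the threshold $\tau_{\text{adj}}$ and the prediction set geometry in \cref{thm:result}.
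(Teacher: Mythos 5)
Your proof is correct and follows exactly the same route as the paper's: pointwise monotonicity of $S$ under the shift $\delta \mapsto x+\delta$, preservation under the Gaussian expectation, and preservation under the increasing map $\Phi^{-1}$. The extra remark about $\Phi^{-1}$ being well-defined on $[0,1]$ (with the $\pm\infty$ convention at the endpoints) is a small but harmless refinement the paper leaves implicit.
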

\begin{proof}
Suppose we have $x_1 \leq x_2$. From the monotonicity of $S$ and the fact that the smoothing operation $\mathbb{E}_{\delta \sim \mathcal{N}(0, \sigma^2)}$ as well as $\Phi^{-1}$ preserve the monotonicity, we have
\begin{align}
    &S(x_1 + \delta, y=-1) \leq S(x_2 + \delta, y=-1), \forall \delta \in \mathbb{R} \nonumber\\
    \Longrightarrow & \mathbb{E}_{\delta \sim \mathcal{N}(0, \sigma^2)}\left[S(x_1 + \delta, y=-1)\right] \leq \nonumber 
    \mathbb{E}_{\delta \sim \mathcal{N}(0, \sigma^2)}\left[S(x_2 + \delta, y=-1)\right]\nonumber\\
    \Longrightarrow & \funcH{S}{x_1} \leq \funcH{S}{x_2}.
\end{align}
\end{proof}
\begin{lemma}
For any monotonically non-decreasing base score $S$, the cdf function of $\smoothscorebs$ is:
\begin{equation}
    \Phi_{\smoothscorebs}(t) = \begin{cases}
    0, &\quad t \in (-\infty, \funcH{S}{-0.5}];\\
    2\funcHUpper{S}{t}{-1} + 1, &\quad t \in (\funcH{S}{-0.5}, \funcH{S}{0});\\
    1, &\quad t \in [\funcH{S}{0}, \infty).\\
    \end{cases}
\end{equation}
\label{lem:cdf}
\end{lemma}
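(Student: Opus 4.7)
The plan is to reduce the computation of $\Phi_{\smoothscorebs}$ to a one-class calculation by exploiting the symmetry of the 1-D synthetic dataset, and then to invert $\funcH{S}{\cdot}$ using the monotonicity already established in \cref{lem:monotonic}. This turns a score-level event into an event on $x$ whose probability under $U(-0.5, 0)$ is elementary.

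Concretely, I would first expand $\Phi_{\smoothscorebs}(t) = \prob(\smoothscorebs(x, y) \leq t)$ by conditioning on $y$, writing it as $\tfrac{1}{2}\prob(\smoothscorebs(x, -1) \leq t \mid x \sim U(-0.5, 0)) + \tfrac{1}{2}\prob(\smoothscorebs(x, 1) \leq t \mid x \sim U(0, 0.5))$. The symmetric construction of $\hat{\pi}$ (and hence of $S$) together with the symmetry of the marginals gives $\smoothscorebs(x, 1) \stackrel{d}{=} \smoothscorebs(-x, -1)$, so substituting $x \mapsto -x$ in the second term collapses both conditional probabilities into $\prob(\funcH{S}{x} \leq t)$ with $x \sim U(-0.5, 0)$. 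Next, because $\funcH{S}{\cdot}$ is monotonically non-decreasing by \cref{lem:monotonic}, the event $\{\funcH{S}{x} \leq t\}$ is equivalent to $\{x \leq \funcHUpper{S}{t}{-1}\}$ whenever $t$ lies in the image of $[-0.5, 0]$ under $\funcH{S}{\cdot}$, namely $[\funcH{S}{-0.5}, \funcH{S}{0}]$. For $x \sim U(-0.5, 0)$ the cdf at $c \in [-0.5, 0]$ equals $2(c + 0.5)$, and substituting $c = \funcHUpper{S}{t}{-1}$ yields the middle branch $2\funcHUpper{S}{t}{-1} + 1$. The two extreme branches are immediate: if $t \leq \funcH{S}{-0.5}$ then no $x \in [-0.5, 0]$ achieves $\funcH{S}{x} \leq t$ up to a measure-zero boundary, and if $t \geq \funcH{S}{0}$ then every such $x$ does.

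The main obstacle I anticipate is the symmetry step. The lemma is stated for an abstract monotonically non-decreasing $S$, but in the 1-D example the two classes are tied together by the involution $x \mapsto -x$ rather than by a single function. I will need to argue that the ``monotonically non-decreasing'' hypothesis on $S(\cdot, -1)$ pairs with a monotonically non-increasing $S(\cdot, 1)$ through $S(x, 1) = S(-x, -1)$, which makes the change of variable valid and produces a single $\funcH{S}{\cdot}$ controlling both terms. A secondary wrinkle is that when $\funcH{S}{\cdot}$ is not strictly increasing the inverse $\funcHUpper{S}{\cdot}{-1}$ must be interpreted canonically (for instance as the infimum of preimages), but this only affects a measure-zero set of $t$ and does not alter the piecewise formula for $\Phi_{\smoothscorebs}$.
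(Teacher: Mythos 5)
Your proposal is correct and follows essentially the same route as the paper's proof: reduce to the class $y=-1$ by symmetry, invert $\funcH{S}{\cdot}$ using the monotonicity from \cref{lem:monotonic}, and evaluate the cdf of $U(-0.5,0)$ to get $2\funcHUpper{S}{t}{-1}+1$. You merely spell out the symmetry step (the pairing $S(x,1)=S(-x,-1)$) and the canonical choice of inverse more explicitly than the paper, which simply cites ``symmetricity''; these are fair clarifications, not a different argument.
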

\begin{proof}
\begin{subequations}
\begin{align}
    &\Phi_{\smoothscorebs}(t) = \prob(\smoothscorebs \leq t)&\nonumber \\
    =\quad& \prob(\smoothscorebs(x, y=-1; \sigma) \leq t \mid y=-1)& \text{(Symmetricity)}\nonumber\\
    =\quad& \prob(\funcH{S}{x} \leq t \mid y=-1)& \nonumber \\
    =\quad& \prob(x \leq \funcHUpper{S}{t}{-1} \mid y=-1) & \text{(Monotonicity of }h(x)\text{)}\nonumber\\
    =\quad& \begin{cases}
    0, &\quad t \in (-\infty, \funcH{S}{-0.5}];\\
    2\funcHUpper{S}{t}{-1} + 1, &\quad t \in (\funcH{S}{-0.5}, \funcH{S}{0});\\
    1, &\quad t \in [\funcH{S}{0}, \infty).\\
    \end{cases}& \text{(Distribution of }x\text{)}
\end{align}
\end{subequations}
\end{proof}
With \cref{lem:cdf,lem:monotonic}, we can prove following results:
\result*
\begin{proof}
By \cref{eq:threshold_inf} and \cref{lem:cdf} and note that $0 < 1-\alpha < 1$, we have 
\begin{equation}
    \Phi_{\smoothscorebs}(\tau) = 1-\alpha = 2\funcHUpper{S}{\tau}{-1} + 1, \label{1d-threshold-eqn}
\end{equation}
which gives:
\begin{equation}
    \tau = \funcH{S}{-\frac{\alpha}{2}}.
    \label{eq:1dthreshold}
\end{equation}
The coverage gap $\covgap$:
\begin{align}
    \covgap &= \alpha - \alpha_{\text{adj}}& \nonumber \\
    &= \alpha - \prob_{(x, y) \sim P_{xy}}(-1 \notin C_{\epsilon}(x) \mid y = -1)& \text{(Symmetricity)}\nonumber\\
    &= \alpha - \prob[\smoothscorebs(x, y=-1) > \tau_{\text{adj}} \mid y=-1]&\nonumber\\
    &= \alpha + \Phi_{\smoothscorebs}(\tau_{\text{adj}})  - 1&
\end{align}
and the average size $\mathbb{E}_{(x, y) \sim P_{xy}} |C_{\epsilon}(x)|$ can be calculated as:
\begin{equation}
\begin{split}
    &\mathbb{E}_{(x, y) \sim P_{xy}} |C_{\epsilon}(x)|\\
    =\quad&\prob(-1 \in C_{\epsilon}(x)) +  \prob(1 \in C_{\epsilon}(x))\\
    =\quad &2 \prob(\funcH{S}{x} \leq \tau_{\text{adj}})\\
    =\quad& 2 \prob(x \leq \funcHUpper{S}{\tau_{\text{adj}}}{-1})\\
    =\quad& \begin{cases}
    2\funcHUpper{S}{\tau_{\text{adj}}}{-1} + 1, &\quad \tau_{\text{adj}} \in (\funcH{S}{-0.5}, \funcH{S}{0.5});\\
    2, &\quad \tau_{\text{adj}} \in [\funcH{S}{0.5}, \infty).\\
    \end{cases}
\end{split}
\label{1d-exp-size}
\end{equation}
By \cref{1d-threshold-eqn,eq:1dthreshold}, the derivative can also be computed as
\begin{subequations}
\begin{align}
    \Phi^{\prime}_{\smoothscorebs}(\tau)&= 2(h^{-1}_S)^{\prime}(\tau)\nonumber\\
    & = \frac{2}{\funcHUpper{S}{-\frac{\alpha}{2}}{\prime}}.
    \label{1d-cdf-deri}
\end{align}
\end{subequations}
\cref{1d-cdf-deri} is derived by the derivative of the inverse function:
\begin{equation}
    [f^{-1}](z) = \frac{1}{f^{\prime}[f^{-1}(z)]}.
\end{equation}
\end{proof}
\paragraph{Derivation of \cref{tab:1dresult}.} With \cref{thm:result}, we could study related metrics for HPS $S_{\text{HPS}}$ and our transformed score $\scoretrans$, only need to calculate $\funcH{S}{x}$ and $\funcHUpper{S}{x}{\prime}$.
By \cref{eq:scorers}, we derive:
\begin{equation}
    \begin{split}
        & \funcH{S_{\text{HPS}}}{x} = \smoothscore{HPS}(x, y=-1; \sigma) \\
        =\quad&\Phi^{-1}\left[\mathbb{E}_{\delta \sim \mathcal{N}(0, \sigma^2)}\ S_{\text{HPS}}(x + \delta, y=-1)\right]\\
        =\quad&\Phi^{-1}\left[\frac{1}{\sqrt{2\pi\sigma^2}}\int \hat{\pi}(x + \delta, y=1)e^{-\frac{\delta^2}{2\sigma^2}}d\delta\right]\\
        =\quad&\Phi^{-1}\biggl\{ \frac{\sigma}{\sqrt{2\pi}}\left[e^\frac{-(x + 0.5)^2}{2\sigma^2} - e^\frac{-(x - 0.5)^2}{2\sigma^2}\right]\\
        \quad+\:& (x + 0.5)\left[\Phi(\frac{0.5-x}{\sigma}) - \Phi(\frac{-0.5-x}{\sigma})\right] + \Phi(\frac{x - 0.5}{\sigma})\biggr\}\\
    \end{split}
\end{equation}
$\funcHUpper{S_{\text{HPS}}}{x}{\prime}$ could be computed by the chain rule:
\begin{align}
   & \funcHUpper{S_{\text{HPS}}}{x}{\prime} \nonumber\\
    =\quad& \frac{d}{dx} \bigl\{\Phi^{-1}\left[\mathbb{E}_{\delta \sim \mathcal{N}(0, \sigma^2)}\ S_{\text{HPS}}(x + \delta, y=-1)\right]\bigr\} \nonumber\\
    =\quad& (\Phi^{-1})^{\prime}\left[\mathbb{E}_{\delta \sim \mathcal{N}(0, \sigma^2)}\ S_{\text{HPS}}(x + \delta, y=-1)\right] \cdot\nonumber\\
    &\mathbb{E}_{\delta \sim \mathcal{N}(0, \sigma^2)}\ \left[\frac{d}{dx}S_{\text{HPS}}(x + \delta, y=-1)\right]\nonumber\\
    =\quad& (\Phi^{-1})^{\prime}\left[\mathbb{E}_{\delta \sim \mathcal{N}(0, \sigma^2)}\ S_{\text{HPS}}(x + \delta, y=-1)\right] \cdot\nonumber\\
    &\mathbb{E}_{\delta \sim \mathcal{N}(0, \sigma^2)}\ \left[\indicator{-0.5\leq x+\delta\leq0.5}\right]\nonumber\\
\end{align}
Note that HPS satisfies the monotonicity condition, hence we could apply \cref{thm:result,lem:cdf} by plugging in $h = \funcH{S_{\text{HPS}}}{x}$.
Next, we consider how to apply our transformation $\posttrans$ on HPS. In order to simplify the analysis, we make two assumptions:
\begin{enumerate}
    \item We skip the ranking transformation which involves sampling a holdout set, therefore $\scoretrans = \posttrans_{\text{sig}} \circ S$. We argue that in this case the HPS is already uniformly distributed and the ranking step is not necessary.\\
    \item In the sigmoid transformation, where $\posttrans_{\text{sig}}\circ S = \phi(\frac{S - b}{T})$, we let $T \rightarrow 0$. For the sigmoid function,
    \begin{equation}
        \phi(x) = \frac{1}{1 + e^{-x}} = 
        \begin{cases}
        1, \quad& x \rightarrow \infty\: (T \rightarrow 0, S > b);\\
        0, \quad& x \rightarrow -\infty\:(T \rightarrow 0, S < b).\\
        \end{cases}
        \label{eq:limitSigTrans}
    \end{equation}
    Therefore, by taking the limit $T \rightarrow 0$, the transformed score $\scoretrans$ could be written as:
    \begin{equation}
        \scoretrans(x, y) = 
        \begin{cases}
        0, &\quad S_{\text{HPS}}(x, y) \leq b,\\
        1, &\quad S_{\text{HPS}}(x, y) > b.\\
        \end{cases}
        \label{eq:1dPTTform}
    \end{equation}
\end{enumerate}
% We show a comparison between HPS $S_{\text{HPS}}$and our $\scoretrans$ in \cref{fig:1dbase,fig:1dsmooth}.
% \begin{figure}[h!]
%     \centering
%     \includegraphics[scale=0.5]{}
%     \caption{Base score $S_{\text{HPS}}(x, y=1)$ and $\scoretrans(x, y=1)$ on the interval $x\in[-0.5, 0.5]$. $b$ is chosen as $0.45$.}
%     \label{fig:1dbase}
% \end{figure}
% \begin{figure}[h!]
%     \centering
%     \includegraphics[scale=0.5]{}
%     \caption{An example of smoothed score $\smoothscore{HPS}(x, y=1)$ and $\smoothscore{PTT}(x, y=1)$ on the interval $x\in[-0.5, 0.5]$, where $\sigma^2=0.01, b=0.45$}
%     \label{fig:1dsmooth}
% \end{figure}
With these assumptions, we could apply \cref{eq:scorers}, which gives:
\begin{align}
    & \funcH{\scoretrans}{x} = \smoothscore{PTT}(x, y=-1; \sigma)\nonumber\\
    =\quad&\Phi^{-1}[\mathbb{E}_{\delta \sim \mathcal{N}(0, \sigma^2)} \scoretrans(x+\delta, y=-1)]\nonumber\\
    =\quad&\Phi^{-1}[\mathbb{E}_{\delta \sim \mathcal{N}(0, \sigma^2)} \indicator{\hat{\pi}(x+\delta, y=1) \geq b}]\nonumber\\
    =\quad&\Phi^{-1}[\mathbb{E}_{\delta \sim \mathcal{N}(0, \sigma^2)} \indicator{\delta \geq b - x - 0.5}]\nonumber\\
    =\quad&\Phi^{-1}[\Phi(-\frac{b - x - 0.5}{\sigma})] \nonumber \\
    =\quad&\frac{x + 0.5 - b}{\sigma}
\end{align}
and
\begin{equation}
    \funcHUpper{\scoretrans}{x}{\prime} =  \frac{1}{\sigma}.
\end{equation}
Similar to $S_{\text{HPS}}$, $\scoretrans$ is also monotonically increasing, thus we can apply \cref{thm:result,lem:cdf} to get the results.
\subsubsection{Optimality of our PTT}
\label{subsec:optimalityOfPTT}
Below we show that in this case, our transformed score could achieve the \add{smallest} derivative $\Phi^{\prime}_{\smoothscorebs}(\tau)$ among all the base scores $S$ with the same threshold $\tau$.
\begin{theorem}
    Among all monotonically increasing scores $S$ with the same threshold $\tau$ and $ 0 \leq S(t, y=-1) \leq 1$, $\Phi^{\prime}_{\smoothscorebs}(\tau)$ is \add{minimized} by $S(t; y=-1) = sgn(t+\frac{\alpha}{2})$, where $sgn(x) = \mathbf{1}\{x \geq 0\}$ is the sign function.
\end{theorem}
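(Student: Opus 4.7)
The plan is to reduce the constrained optimization over $S$ to a scalar maximization of a single linear functional and then invoke a rearrangement (bathtub-type) argument.

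By Theorem~\ref{thm:result}, $\Phi^{\prime}_{\smoothscorebs}(\tau) = 2 / h^{\prime}_S(-\alpha/2)$, so minimizing the slope is equivalent to maximizing $h^{\prime}_S(-\alpha/2)$ over admissible $S$. Writing $h_S = \Phi^{-1} \circ g_S$ with $g_S(x) = \mathbb{E}_{\delta \sim \mathcal{N}(0,\sigma^2)}[S(x+\delta,y=-1)]$, the chain rule gives $h^{\prime}_S(x) = g^{\prime}_S(x) / \phi(\Phi^{-1}(g_S(x)))$, where $\phi$ denotes the standard normal pdf. The threshold constraint $h_S(-\alpha/2)=\tau$ is equivalent to $g_S(-\alpha/2) = \Phi(\tau)$, so the denominator $\phi(\tau)$ is a fixed constant. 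It therefore suffices to maximize $g^{\prime}_S(-\alpha/2)$ subject to (i) $0\le S \le 1$, (ii) $S$ monotonically non-decreasing, and (iii) $g_S(-\alpha/2) = \Phi(\tau)$.

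Next, differentiating under the integral and changing variables, I would write
\begin{equation*}
g^{\prime}_S(-\alpha/2) \;=\; \int S(t,y=-1)\, \frac{t+\alpha/2}{\sigma^2}\,\phi_\sigma(t+\alpha/2)\,dt,
\end{equation*}
where $\phi_\sigma$ is the $\mathcal{N}(0,\sigma^2)$ density. Introducing a Lagrange multiplier $\mu$ for constraint (iii) converts the problem into the unconstrained maximization of $\int S(t)\,K_\mu(t)\,dt$ with
\begin{equation*}
K_\mu(t) \;=\; \frac{\phi_\sigma(t+\alpha/2)}{\sigma^2}\bigl[(t+\alpha/2) - \mu\sigma^2\bigr].
\end{equation*}
The crucial observation is that $K_\mu$ has the single-crossing property: it is negative for $t < -\alpha/2 + \mu\sigma^2$ and positive beyond. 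Under the $[0,1]$-bound, any maximizer of a linear functional with a single sign change of its kernel is a step function with the jump exactly at the crossing, provided this step function is non-decreasing (which it automatically is). This identifies the optimal shape as $S^{\star}(t) = \mathbf{1}\{t \geq -\alpha/2 + \mu\sigma^2\}$. Tuning $\mu$ to enforce $g_{S^\star}(-\alpha/2)=\Phi(\tau)$ pins the jump location; specializing to the normalization used in the theorem (so that the jump falls at $-\alpha/2$) yields $S^\star(t) = \mathrm{sgn}(t + \alpha/2)$.

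I expect the main obstacle to be the rearrangement step, namely rigorously showing that no competitor $\tilde S$ obeying (i)--(iii) can beat the step function. A clean way is to prove it directly: for any admissible $\tilde S$, split the integral $\int (S^\star - \tilde S)\,K_\mu \,dt$ at the single sign change of $K_\mu$ and observe that on each side $(S^\star-\tilde S)$ and $K_\mu$ have matching signs (since $S^\star$ is $0$ below the jump and $1$ above, while $\tilde S \in [0,1]$). This yields $\int S^\star K_\mu \ge \int \tilde S K_\mu$, and combining with the Lagrangian reformulation gives $g^{\prime}_{S^\star}(-\alpha/2) \ge g^{\prime}_{\tilde S}(-\alpha/2)$. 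The monotonicity hypothesis on $S$ is not strictly needed once one uses the bathtub principle, but it ensures the optimizer is itself admissible as a score. Finally, combining with the reduction above establishes that $\Phi^{\prime}_{\smoothscorebs}(\tau)$ is minimized by $S^\star$, completing the proof.
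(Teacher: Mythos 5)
Your proposal is correct and follows essentially the same route as the paper's proof: both reduce the problem via \cref{thm:result} and the chain rule to maximizing $\frac{d}{dx}\,\mathbb{E}_{\delta}\,S(x+\delta,y=-1)$ at $x=-\alpha/2$ (the denominator $\Phi^{\prime}(\tau)$ being fixed by the common threshold), write this derivative as a linear functional of $S$ whose Gaussian kernel changes sign exactly once, and identify the extremizer as the indicator step at the crossing. The only difference is cosmetic: the paper runs your argument with $\mu=0$, splitting the integral at $t=-\alpha/2$ and saturating $S$ at $0$ and $1$ on the two sides, while your Lagrange-multiplier framing carries the threshold constraint explicitly before specializing the jump to $-\alpha/2$ --- a slightly more careful treatment of the ``same threshold $\tau$'' hypothesis, but the same key inequality.
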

\begin{proof}
    From \cref{thm:result}, we could see that $\Phi^{\prime}_{\smoothscorebs}(\tau)=\frac{2}{\funcHUpper{S}{-\frac{\alpha}{2}}{\prime}}$. Hence, minimizing $\Phi^{\prime}_{\smoothscorebs}(\tau)$ is equivalent to maximizing $\funcHUpper{S}{-\frac{\alpha}{2}}{\prime}$. From definition of $h$, we have
    \begin{align}
        \nonumber\frac{d}{dx} \funcH{S}{x} &= \frac{d}{dx} \Phi^{-1} [\mathbb{E}_{\delta \sim \mathcal{N}(0, \sigma^2)}\ S(x + \delta, y=-1)]\\\nonumber
        &= \frac{1}{\Phi^\prime \left[  \Phi^{-1} [\mathbb{E}_{\delta \sim \mathcal{N}(0, \sigma^2)}\ S(x + \delta, y=-1)] \right ]}\\
        &\ \cdot \frac{d}{dx} \mathbb{E}_{\delta \sim \mathcal{N}(0, \sigma^2)}\ S(x + \delta, y=-1). \quad \text{(Chain rule, derivative of inverse function)}
    \end{align}
    The second term 
    \begin{align}
        \nonumber &\frac{d}{dx} \mathbb{E}_{\delta \sim \mathcal{N}(0, \sigma^2)}\ S(x + \delta, y=-1)\\\nonumber
        =& \frac{d}{dx} \frac{1}{\sqrt{2\pi\sigma^2}}\int_{\mathbb R} e^{-\frac{\delta^2}{2\sigma^2}}S(x + \delta, y=-1) d\delta\\\nonumber
        =& \frac{d}{dx} \frac{1}{\sqrt{2\pi\sigma^2}}\int_{\mathbb R} e^{-\frac{(x-t)^2}{2\sigma^2}}S(t, y=-1) dt\quad \text{(Let } t = x + \delta \text{)}\\\nonumber
        =& \frac{1}{\sqrt{2\pi\sigma^2}}\int_{\mathbb R} \frac{d}{dx} e^{-\frac{(x-t)^2}{2\sigma^2}}S(t, y=-1) dt\\\nonumber
        =& \frac{1}{\sqrt{2\pi\sigma^2}}\int_{\mathbb R} \frac{t-x}{\sigma^2}e^{-\frac{(x-t)^2}{2\sigma^2}}S(t, y=-1) dt\\\nonumber
        =& \frac{1}{\sqrt{2\pi\sigma^2}} \left[\int_{-\infty}^x \frac{t-x}{\sigma^2}e^{-\frac{(x-t)^2}{2\sigma^2}}S(t, y=-1) dt + \int_{x}^\infty \frac{t-x}{\sigma^2}e^{-\frac{(x-t)^2}{2\sigma^2}}S(t, y=-1) dt \right]\\\nonumber
        \leq & \frac{1}{\sqrt{2\pi\sigma^2}} \left[ 0 + \int_{x}^\infty \frac{t-x}{\sigma^2}e^{-\frac{(x-t)^2}{2\sigma^2}} dt
        \right] \quad \text{($0 \leq S(t, y=-1) \leq 1$)}\\
    \end{align}
    Note that the equality holds when $S(t, y=-1) = sgn(t-x)$. Plug in $x = -\frac{\alpha}{2}$, we get 
    \begin{align}
        \funcHUpper{S}{-\frac{\alpha}{2}}{\prime} &= \frac{ \frac{d}{dx} \mathbb{E}_{\delta \sim \mathcal{N}(0, \sigma^2)}\ S( -\frac{\alpha}{2} + \delta, y=-1)}{\Phi^\prime \left[  \Phi^{-1} [\mathbb{E}_{\delta \sim \mathcal{N}(0, \sigma^2)}\ S( -\frac{\alpha}{2} + \delta, y=-1)] \right ]}.\nonumber \\
        &= \frac{ \frac{d}{dx} \mathbb{E}_{\delta \sim \mathcal{N}(0, \sigma^2)}\ S( -\frac{\alpha}{2} + \delta, y=-1)}
        {\Phi^\prime \left[ \funcH{S}{-\frac{\alpha}{2}} \right ]}.
    \end{align}
    Recall that $\tau = \funcH{S}{-\frac{\alpha}{2}}.$ (\cref{eq:1dthreshold}), the denominator is just $\Phi^\prime(\tau)$, which is a constant given $\tau$. The numerator is maximized by $S(t, y=-1) = sgn(t+\frac{\alpha}{2})$, giving the conclusion.
\end{proof}
\begin{remark}
    Note that the optimal base score is attained by applying our transformation on the HPS score, with $b=\frac{1-\alpha}{2}$ and $T \rightarrow 0$, as in \cref{eq:limitSigTrans}. This supports the asymptotic optimality of our transformation. 
\end{remark}
% \section{Distribution of scores after ranking transformation}
% \label{app:unidist}
%  \begin{theorem}
%  $(\posttrans_{\text{rank}} \circ S)(x, y)$ is a discrete random variable, which takes values $0, \frac{1}{|H_{\text{holdout}}|}, \frac{2}{|H_{\text{holdout}}|}, \cdots, 1$ with equal probability.
%  \label{thm:uniformdist}
%  \end{theorem}
%  As we point out in \cref{subsec:PTT}, this is not a new result. Corollary 1 in \cite{kuchibhotla2020exchangeability} presented similar results and we refer to Section 2.4 in \cite{kuchibhotla2020exchangeability} for the proof and detailed discussion
%  of this result.
\subsection{Case study: potential failure mode of PTT}
\label{app:failmode}
A natural question would be: when will the PTT fail to boost the efficiency? Here, we provide a special case where PTT will fail to boost efficiency. Suppose the data distribution is the same as we defined in the settings part of \cref{app:1dexample}. For the base score, let
\begin{equation}
    S_{b}(x, y=-1) =
    \begin{cases}
        0.5 + o(1),& x \leq -0.5;\\
        x + 0.5, & x \in (-0.5, 0.5);\\
        0.5 - o(1), & x \geq 0.5,
    \end{cases}
\end{equation}
and $S_{b}(x, y=1) = 1 - S_{b}(x, y=-1)$.
That is to say, the conformity score is slightly in favor of the wrong class when $x \notin (-0.5, 0.5)$. Here, we introduce an ignorable small deviation ($o(1)$) so that the score will be above(below) the PTT threshold $b$ for $x \leq -0.5$ and $x \geq 0.5$. Let $1-\alpha=b=0.5$, by \cref{eq:1dPTTform}, the PTT score will be
\begin{equation}
    S_{\text{PTT}} = 
    \begin{cases}
        1, & x \in (-\infty, -0.5] \cup [0, 0.5);\\
        0, & x \in (-0.5, 0) \cup [0.5, \infty).
    \end{cases}
\end{equation}
Similar to \cref{app:1dexampleDetails}, we could derive the smoothed scores:
\begin{equation}
    \begin{aligned}
        h_{S_{b}}(x) &= \tilde{S_b}(x, y=-1)\\
        &= \Phi^{-1}\biggl\{ \frac{\sigma}{\sqrt{2\pi}}\left[e^\frac{-(x + 0.5)^2}{2\sigma^2} - e^\frac{-(x - 0.5)^2}{2\sigma^2}\right]\\
        &\ +\: (x + 0.5)\left[\Phi(\frac{0.5-x}{\sigma}) - \Phi(\frac{-0.5-x}{\sigma})\right] + \frac{1}{2}\Phi(\frac{x - 0.5}{\sigma}) + \frac{1}{2}\Phi(\frac{-x - 0.5}{\sigma})\biggr\},\\
    \end{aligned}
\end{equation}
and
\begin{equation}
    \begin{aligned}
        h_{S_{\text{PTT}}}(x) &= \tilde{S}_\text{PTT}(x, y=-1)\\
        &= \Phi^{-1}[\mathbb{E}_{\delta \sim \mathcal{N}(0, \sigma^2)} \indicator{x + \delta \leq -0.5} + \indicator{0 \leq x + \delta \leq 0.5}]\nonumber\\
        &= \Phi^{-1}[\mathbb{E}_{\delta \sim \mathcal{N}(0, \sigma^2)} \indicator{\delta \leq -x -0.5} + \indicator{-x \leq  \delta \leq -x + 0.5}]\nonumber\\
         &= \Phi^{-1}\left[\Phi(\frac{-x-0.5}{\sigma}) + \Phi(\frac{-x+0.5}{\sigma}) - \Phi(\frac{-x}{\sigma})\right]
    \end{aligned}
\end{equation}
Since the scores are no longer monotonically increasing, we cannot directly apply \cref{thm:result} to get a closed-form solution. However, we could numerically calculate the CDF $\Phi_{\tilde{S}}$, threshold $\tau$ and the average size $C_{\epsilon}(x)$ as we know the smoothed score function. We choose the inflation level $\frac{\epsilon}{\sigma}=0.01$ across experiments, and report the results in \cref{tab:failcase}. The results show that for $\sigma=0.3$, the PTT method is worse than the original score. This indicates that theoretically, for a specific base score, the PTT method could amplify the error and produce a larger set. 
\newline
However, this kind of failure mode is not observed in our experiments in \cref{Sec: Exp}. We argue that despite being theoretically possible, the failure case shown in this section requires (1) a carefully designed base score and (2) a specific choice of $1-\alpha$ and $\sigma$. Hence, the chance that it happens in practice is rare. 
\begin{table}[htp]
\centering
\scalebox{0.9}{\begin{tabular}{p{1cm}|p{0.8cm}|p{1cm}|p{2.1cm}|c}
\toprule\hline
$\sigma$& Base Score & $\tau$    & Avg. Size & Conservativeness\\ \hline
\multirow{2}{*}{$0.1$} & $S_{b}$      & -0.620 & 0.512 & 0.012\\ 
& $\scoretrans$  & -1.245 & \textbf{0.504} & 0.004\\
\hline
\multirow{2}{*}{$0.2$} & $S_{b}$      & -0.464 & 0.529 & 0.029\\ 
& $\scoretrans$  & -0.536 & \textbf{0.511} & 0.011\\
\hline
\multirow{2}{*}{$0.3$} & $S_{b}$      & -0.302 & \textbf{0.525} & 0.025\\ 
& $\scoretrans$    & -0.169 & 0.529 & 0.029\\ 
\hline
\bottomrule
\end{tabular}
}
\caption{Results for the failure case study. Note that the average size increased after PTT when $\sigma=0.3$.}
\label{tab:failcase}
\end{table}

\begin{figure}[ht]
    \centering
    \begin{subfigure}[b]{0.45\textwidth}
        \centering
        \includegraphics[width=\textwidth]{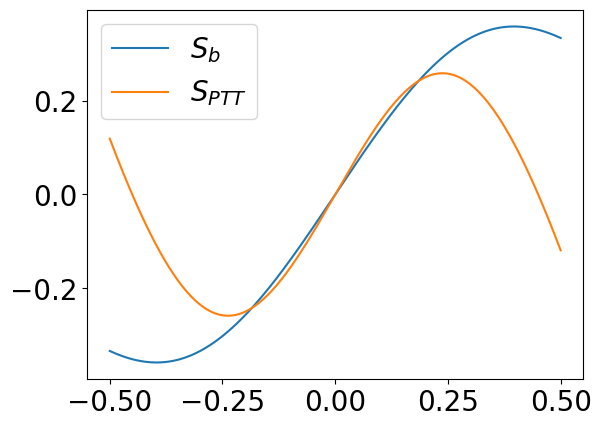}
        \caption{Smoothed score $\tilde{S}$.}
        \label{fig:sub1}
    \end{subfigure}
    \hfill
    \begin{subfigure}[b]{0.45\textwidth}
        \centering
        \includegraphics[width=\textwidth]{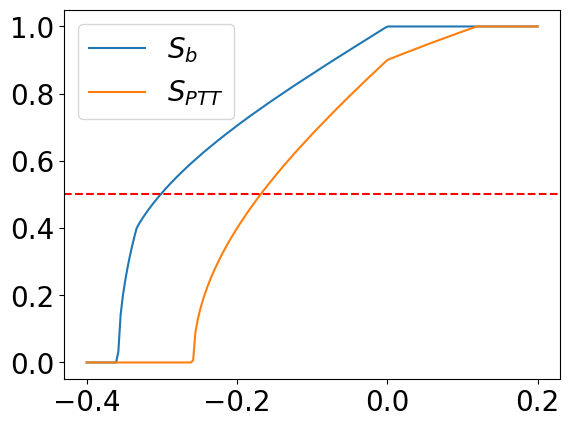}
        \caption{CDF of smoothed score $\Phi_{\tilde{S}}$.}
        \label{fig:sub2}
    \end{subfigure}
    \caption{Comparison between original score $S_b$ and PTT score $S_{\text{PTT}}$ when $\sigma=0.3$. (Left) Smoothed scores. (Right) CDF of smoothed scores. The red dotted line denotes the coverage level $1-\alpha=0.5$. The intersection between the dotted line and the CDF curve is the threshold $\tau$. From the figure, we could intuitively see that the CDF of the PTT score has a larger derivative at the threshold. }
    \label{fig:test}
\end{figure}
\newpage
\section{Further discussion on Robust Conformal Training in Sec 4.2}
 \subsection{Details of Conformal Training~\cite{stutz2021learning}}
\label{app:ConfTrDetails}
Conformal training \cite{stutz2021learning} simulates calibration and prediction process on mini-batches during training. In training, each batch $B$ is split into two subsets: calibration set $B_{\text{cal}}$ and prediction set $B_{\text{pred}}$. 
The threshold $\tau$ is calculated based on $B_{\text{cal}}$, and the prediction sets $C(x; \tau)$ are generated for samples in $B_{\text{pred}}$.
Under this setting, $S$ is a function of trainable parameters $\theta$. To emphasize this, we denote the base score as $S_\theta(x, y)$ in this section.
In order to perform back-propagation to update $\theta$, \citet{stutz2021learning} proposed to calculate soft threshold $\tau^{\text{soft}}$ and soft prediction set $c(x, y; \tau^{\text{soft}})$ as differentiable approxiamation of threshold $\tau$ and prediction set $C(x)$.
We discuss the details next. 
\paragraph{Soft threshold $\tau^{\text{soft}}$} 
The calculation of threshold $\tau$ involves calculating a specific quantile of a given set, which is not differentiable.
To address this problem, smooth sorting methods \cite{blondel2020fast, cuturi2019differentiable} are utilized to get a smoothed version of quantile:
\begin{equation}
    \tau^{\text{soft}} = Q^{\text{soft}}_{1-\alpha}(\{S_\theta(x, y)\}_{(x, y)\in B_{\text{cal}}}),
\end{equation}
where $Q^{\text{soft}}_{q}(H)$ denotes the $q(1 + \frac{1}{|H|})$-quantile of set $H$ derived by smooth sorting. %and $H_{\text{cal}} = \{S(x, y) \mid (x, y) \in B_{\text{cal}}\}$ denotes the scores of calibration set.
\paragraph{Soft prediction set $c(x, y; \tau^{\text{soft}})$}
In the prediction stage, \citet{stutz2021learning} introduced a class-wise prediction function $c(x, y) \in [0,1]$ which represents the probability that label $y$ is included in the prediction set of $x$. The prediction set could now be represented as the collection of class-wise predictions: $C(x) = \{y\in [K]\mid c(x, y)\}$.
Using this notation, the vanilla conformal prediction \cite{angelopoulos2020uncertainty} could be expressed as:
\begin{equation*}
    c_{\text{hard}}(x, y; \tau) = 
    \begin{cases}
    1, &\quad S_\theta(x, y) \leq \tau;\\
    0, &\quad S_\theta(x, y) > \tau.\\
    \end{cases}
\end{equation*}
which requires to perform hard-thresholding w.r.t. threshold $\tau$.
\citet{stutz2021learning} proposed to replace $c_{hard}$ by a smooth version:
\begin{equation}
    c(x, y; \tau^{\text{soft}}) = \phi\left[\frac{\tau^{\text{soft}} - S_\theta(x, y)}{T_{\text{train}}}\right],
\end{equation}
where $\phi(z) = 1 / (1 + e^{-z})$ is the sigmoid function and temperature $T_{\text{train}}$ is a hyper-parameter. 
% \cite{stutz2021learning} suggested to use the loss function below, which is based on the class-wise prediction function $c(x, y)$, as the training objective:
% \begin{equation}
%     L(\theta; x, y_{\text{gt}}) = L_{\text{class}}(c(x, y; \tau, \theta), y_{\text{gt}}) + \lambda L_{\text{size}}(c(x, y; \tau, \theta)).
% \end{equation}
% Here, $y_{\text{gt}}$ is the ground truth label of sample $x$, $L_{\text{class}}$ is a classification loss that could be customized according to the task, and $L_{\text{size}}(C(x; \tau, \theta), y)$ is a term penalizing the size of prediction set. For example, \cite{stutz2021learning} suggested to use
% \begin{align}
%     & L_{\text{class}}(c(x, y; \tau, \theta), y_{\text{gt}}) = 1 - c(x, y_{\text{gt}}; \tau, \theta), \label{eq:classloss} \\
%     & L_{\text{size}}(c(x, y; \tau, \theta))  = \max(0, \sum_{y=1}^K c(x, y;\tau, \theta) - \kappa),
%     \label{eq:sizeloss}
% \end{align}
% where $\kappa$ is a hyper-parameter chosen from $\{0, 1\}$.
\newpage
\subsection{Details of Robust Conformal Training (RCT)}
 In this section, we discuss some design details of Robust Conformal Training (RCT). We first introduce some problems suggested by \citet{stutz2021learning} and show how we avoid those problems.
 
Different from the conformal prediction method we introduce in \cref{subsec:RCP}, \citet{stutz2021learning} used a slightly different convention, where $C(X; \tau^{\prime}) = \{k\in [K]\mid S^{\prime}(x, y) \geq \tau^{\prime}\}$ and threshold $\tau^{\prime}$ is calculated as the $\alpha(1 + 1 / |D_{\text{cal}}|)$-quantile of calibration scores.
For example, $S^{\prime}(x, y)=\hat{\pi}_y(x)$ for THR \cite{sadinle2019least} and $S^{\prime}(x, y)=\log\hat{\pi}_y(x)$ for THRLP. 
We argue that these two conventions are actually equivalent. Let $S(x, y) = -S^{\prime}(x, y)$ and $\tau = -\tau^{\prime}$, we see now  $\tau$ is the $(1-\alpha)(1 + 1 / |D_{\text{cal}}|)$-quantile of calibration scores and $C(X; \tau) = \{k \in [K] \mid S(X, k) \leq \tau \}$, which is consistent with the notations we introduce. For the consistency of notations, we make some modifications to the notations in \citet{stutz2021learning} by swapping their signs in the rest of this section.

Regarding the choice of (non)-conformity score for training, \citet{stutz2021learning} mentioned a gradient diminishing problem when using THR: $S_{\text{THR}} = -\hat{\pi}_y(x)$. Thus, they proposed to add a logarithm function to mitigate this problem, i.e. using THRLP: $S_{\text{THRLP}} = -\log\hat{\pi}_y(x)$. However, this choice doesn't work when we try to incorporate RSCP into the scheme: RSCP requires the base score to take value in $[0, 1]$, which is not true for $S_{\text{THR}}$ and $S_{\text{THRLP}}$. 
To solve this problem, we choose HPS as the base score: since the difference between HPS and THR is only a constant, these two scores are actually equivalent in the context of conformal training~\cite{stutz2021learning}. 
HPS takes value in the interval $[0, 1]$, which satisfies the requirement of RSCP.
A potential problem with this choice is: as suggested by \citet{stutz2021learning}, using HPS as the base score would cause gradient diminishing. Fortunately, we observe that the Gaussian inverse cdf step $\Phi^{-1}$ plays a similar role to the logarithm function in \citet{stutz2021learning}, which alleviates the diminishing gradient problem and leads to successful training.
 \label{app:twoWay}
\newpage
\section{Experimental details and extensive studies in Sec 5}
\label{app:expDetails}
In this section, we discuss more details of our experiments in \cref{Sec: Exp}, and provide extensive empirical studies on the \algoname~ and PTT / RCT methods we proposed. 
\subsection{Experimental details}
\paragraph{Robust conformal training.} For robust conformal training, as \citet{stutz2021learning} suggested, we freeze the backbone of the base model and train the last linear layer after randomly reinitializing it. 
Regarding the base score for training, see \cref{app:twoWay} which discusses our choice. For CIFAR10 and CIFAR100, standard data augmentation is applied which includes random flips and crops. 
In training, we use SGD with momentum $0.9$ and Nesterov gradient. Weight decay is set to $0.0005$. We finetune the model for $N_{\text{epoch}}=150$ epochs and scale the learning rate down by 0.1 at epoch 60, 90 and 120. In \cref{gradient_est}, we choose $N_{\text{train}}=8$. For other hyper-parameters, see \cref{tab:hyper}.
\begin{table*}[!htp]
\centering
\begin{tabular}{|l|l|l|l|l|l|}
\hline
Dataset  & Batch size & Learning rate & $T_{\text{train}}$ in \cref{eq:soft_threshold} & Size weight & $\kappa$ in \cref{eq:ConfTrObjective} \\ \hline
CIFAR10  &  500      &    0.05        &  0.1   &    1        &   1   \\ \hline
CIFAR100 &  100       &   0.005        & 1   &      0.01      &   1  \\ \hline
\end{tabular}
\caption{Hyperparameters setting for robust conformal training on CIFAR10 and CIFAR100.}
\label{tab:hyper}
\end{table*}
\paragraph{Dataset split}
For RCT training, the training set of CIAFR10 and CIFAR100 is utilized. For evaluation, we split the validation set into three subsets: $D_{\text{holdout}}$ for ranking transformation in \cref{subsec:PTT}, $D_{\text{cal}}$ for calibration, and $D_{\text{test}}$ for evaluation of results.
For the size of each subset, please refer to \cref{tab:datasetSplit}. 
We want to point out one thing that in all experiments we employ the random calibration-test split suggested by \citet{gendler2021adversarially}. That means we fix training set $D_{\text{train}}$ and holdout set $D_{\text{holdout}}$, while randomly split the remaining data into $D_{\text{cal}}$ and $D_{\text{test}}$ for $N_{\text{split}}=50$ times. 
For each random split, calibration is performed on $D_{\text{cal}}$, and prediction results are evaluated on $D_{\text{test}}$.
For a fair comparison, we add the holdout set to the calibration set for the baseline method (i.e. calibration is performed on $D_{\text{holdout}} \cup D_{\text{cal}}$), as they do not need this holdout set.
All the metrics we present are the average of $N_{\text{split}}=50$ experiments.
\begin{table}[htpb]
\centering
\begin{tabular}{|l|c|c|c|c|}
\hline
Dataset  & training & holdout & calibration & test  \\ \hline
CIFAR10  & 50000    & 500     & 4750        & 4750  \\ \hline
CIFAR100 & 50000    & 500     & 4750        & 4750  \\ \hline
ImageNet & -        & 500     & 24750       & 24750 \\ \hline
\end{tabular}
\caption{Spilt of each dataset. Note that on ImageNet we only experiment PTT method which is training-free, therefore, the training set for ImageNet is omitted.}
\label{tab:datasetSplit}
\end{table}
\subsection{Empirical results on RSCP benchmark}
\label{sec:empResultsRSCP}
\begin{figure*}[!hbtp]
    \centering
    \includegraphics[scale=0.4]{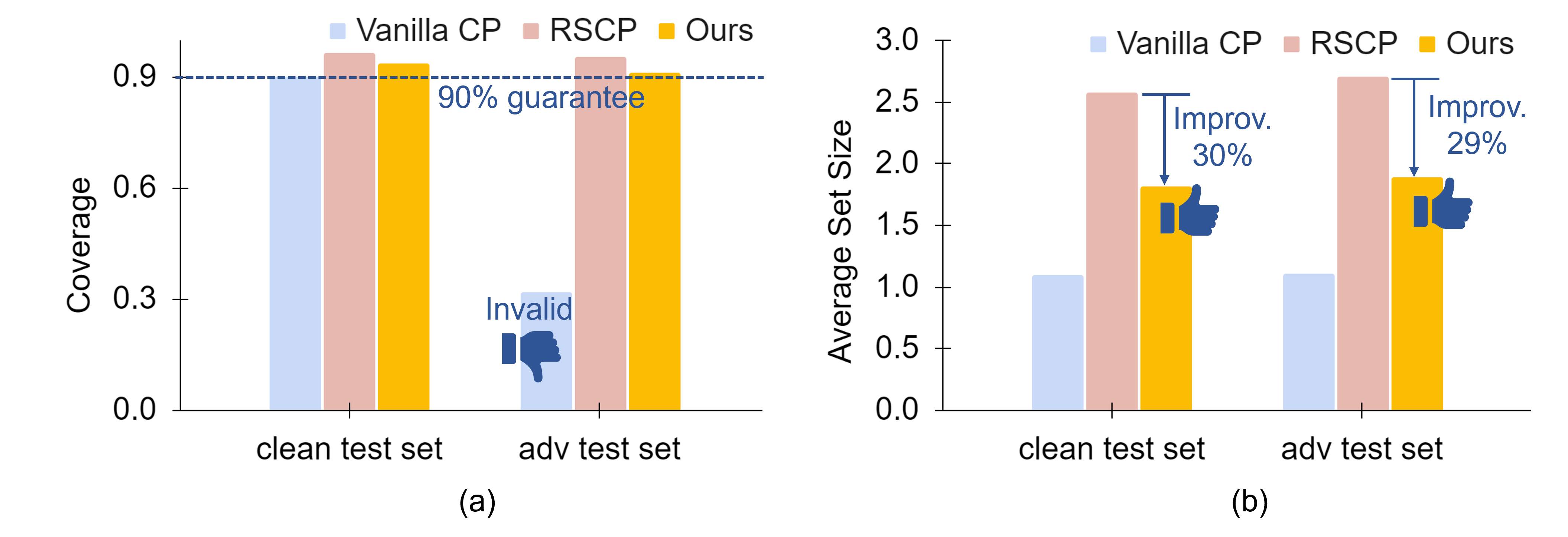}
    \caption{Comparison of vanilla conformal prediction (Vanilla CP) \cite{romano2020classification}, vanilla RSCP~\cite{gendler2021adversarially} and our proposed method PTT + RCT with RSCP on (a) coverage guarantees and (b) average prediction set size on CIFAR10. Our methods achieve better (smaller) average set size than RSCP while satisfying coverage guarantees on the adversarial test set. Note that Vanilla CP does not achieve the desired 90\% coverage guarantees on the adversarial test set. The non-conformity score applied here is APS \cite{romano2020classification}.}
    \label{fig:sec1_motivation}
\end{figure*}
In this section, we present experiment results under the original RSCP benchmark \cite{gendler2021adversarially}. As we point out in the main text, their approach is flawed in robustness guarantee, but it still serves as a simple empirical benchmark. For empirical robustness with RSCP, we follow the evaluation of \citet{gendler2021adversarially} which uses SmoothAdv \cite{salman2019provably} to generate adversarial examples and examines the coverage and average size on these examples.

\paragraph{Results.} The results are shown in \cref{tab:cifar,tab:imagenet}. First, we could see that all methods achieve coverage above $1-\alpha=90\%$ under SmoothAdv attack, suggesting that our methods preserve the empirical robustness of \citet{gendler2021adversarially}. Compared with the baseline, our methods are closer to vanilla CP and have a smaller coverage gap $\covgap$. Next, focusing on average prediction set size, which is the key metric evaluating the efficiency of conformal prediction, our methods provide reduction up to 31.12\%, 25.80\%, and 48.80\% on CIFAR10, CIFAR100, and ImageNet respectively. Note that our PTT method provides the first result on the ImageNet dataset to improve the size of robust conformal prediction \textit{without} training. The conformal training methods in prior work~\cite{einbinder2022training,stutz2021learning} require training, which might be costly when the base model is large and does not scale to larger datasets like ImageNet.
\begin{table*}[t!]
\centering
\scalebox{0.67}{
\begin{tabular}{l|p{1cm}c|p{1cm}c|p{1cm}c|p{1cm}c}
\hline
 &   \multicolumn{4}{c|}{CIFAR10}                  &          \multicolumn{4}{c}{CIFAR100}        \\ \hline
                &   \multicolumn{2}{c}{APS}                  &          \multicolumn{2}{c|}{HPS}         &   \multicolumn{2}{c}{APS}                  &          \multicolumn{2}{c}{HPS}                  \\ \hline
Method          & Coverage & Average size $\downarrow$ & Coverage & Average size $\downarrow$ & Coverage & Average size $\downarrow$ & Coverage & Average size $\downarrow$  \\ \midrule
Baseline        & 95.68\%  &   2.751         & 93.54\%  &            2.108 &  93.53\% & 16.19       &  93.43\%  & 14.30      \\ \cite{gendler2021adversarially} &&&&&&&& \\ 
 \midrule
PTT \textbf{(Ours)}        & 92.06\%  & 2.202 (-19.96\%) & 90.90\%  &   \textbf{1.779 (-15.61\%)}      &  91.26\% & 12.78 (-21.06\%)        & 90.87\%   & 10.78 (-24.62\%)           \\
% RCT (Ours)        & 93.15\%  & 2.269 (-17.52\%)  &   91.25\%       & 1.929 (-8.49\%)  &  93.28\% & 15.61 (-3.60\%)       & 93.12\% & 13.86 (-3.08\%)     \\
PTT + RCT \textbf{(Ours)} & 91.15\%   & \textbf{1.895 (-31.12\%)} &   91.19\%       & 1.864 (-11.57\%)    &  91.07\% & \textbf{12.32 (-23.88\%)}       & 90.83\%   & \textbf{10.61 (-25.80\%)}\\             
\hline
\end{tabular}
}
\caption{\textbf{Coverage and average size on CIFAR 10 and CIFAR 100 with RSCP.} Results are presented on adversarial examples generated by SmoothAdv\cite{salman2019provably} with magnitude $\epsilon=0.125$. The target coverage is $1-\alpha=0.9$ and the smoothing strength  $\sigma=0.25$. The improvement relative to the baseline is shown in parentheses and the best results are bold. Our method consistently gives prediction sets with smaller sizes, and it can be seen that the robustness of RSCP holds empirically. }
\label{tab:cifar}
\end{table*}
\begin{table*}[t!]
\centering
\begin{tabular}{l|p{1cm}l|p{1cm}l}
\hline
                &           \multicolumn{2}{c}{APS}                &         \multicolumn{2}{c}{HPS}              \\ \hline
Method          & Coverage & Average size $\downarrow$ & Coverage & Average size $\downarrow$ \\ \midrule
Baseline\cite{gendler2021adversarially}      &  95.36\%        & 51.72       &   94.12\%       & 16.64       \\
PTT \textbf{(Ours)}         &   91.17\%       & \textbf{30.20 (-41.61\%)}        &   90.53\%     & \textbf{8.52 (-48.80\%)}   \\
\hline
\end{tabular}
\caption{\textbf{Coverage and average set size results for ImageNet with RSCP.} Results are presented on adversarial example with magnitude $\epsilon=0.25$, target coverage $1-\alpha=0.9$ and $\sigma=0.5$. The improvement relative to the baseline is shown in parentheses.}
\label{tab:imagenet}
\end{table*}

\begin{table}[!htpb]
\centering
\begin{tabular}{|l|rrr|rrr|}
\hline
         & \multicolumn{3}{l|}{RSCP}                                       & \multicolumn{3}{l|}{RSCP+}                                      \\ \hline
$\epsilon$  & \multicolumn{1}{r|}{0.125} & \multicolumn{1}{r|}{0.25}  & 0.5   & \multicolumn{1}{r|}{0.125} & \multicolumn{1}{r|}{0.25}  & 0.5   \\ \hline
Baseline & \multicolumn{1}{r|}{2.108} & \multicolumn{1}{r|}{3.303} & 5.759 & \multicolumn{1}{r|}{10}    & \multicolumn{1}{r|}{10}    & 10    \\ \hline
PTT      & \multicolumn{1}{r|}{\textbf{1.779}} & \multicolumn{1}{r|}{\textbf{2.578}} & \textbf{4.222} & \multicolumn{1}{r|}{\textbf{2.152}} & \multicolumn{1}{r|}{\textbf{3.269}} & \textbf{5.559} \\ \hline
PTT+RCT  & \multicolumn{1}{r|}{1.864} & \multicolumn{1}{r|}{2.642} & 4.245 & \multicolumn{1}{r|}{\textbf{2.152}} & \multicolumn{1}{r|}{3.413} & 5.603 \\ \hline
\end{tabular}
\caption{\textbf{Average size for different $\epsilon$ on CIFAR10 dataset, with RSCP and RSCP+.} Our methods consistently improve over the baseline.}
\label{tab:resultsWithDifferentEps}
\end{table}

\begin{table}[!htpb]
\centering
\begin{tabular}{|l|rrrr|rrrr|}
\hline
         & \multicolumn{4}{l|}{RSCP}                                                                    & \multicolumn{4}{l|}{RSCP+}                                                                   \\ \hline
$\sigma$    & \multicolumn{1}{r|}{0.125} & \multicolumn{1}{r|}{0.25}  & \multicolumn{1}{r|}{0.5}   & 1     & \multicolumn{1}{r|}{0.125} & \multicolumn{1}{r|}{0.25}  & \multicolumn{1}{r|}{0.5}   & 1     \\ \hline
Baseline & \multicolumn{1}{r|}{1.999} & \multicolumn{1}{r|}{2.108} & \multicolumn{1}{r|}{2.573} & 3.78  & \multicolumn{1}{r|}{10}    & \multicolumn{1}{r|}{10}    & \multicolumn{1}{r|}{10}    & 10    \\ \hline
PTT      & \multicolumn{1}{r|}{\textbf{1.622}} & \multicolumn{1}{r|}{\textbf{1.779}} & \multicolumn{1}{r|}{\textbf{2.244}} & \textbf{3.434} & \multicolumn{1}{r|}{3.341} & \multicolumn{1}{r|}{\textbf{2.152}} & \multicolumn{1}{r|}{\textbf{2.916}} & \textbf{4.663} \\ \hline
PTT+RCT  & \multicolumn{1}{r|}{1.705} & \multicolumn{1}{r|}{1.864} & \multicolumn{1}{r|}{2.296} & 3.473 & \multicolumn{1}{r|}{\textbf{3.282}} & \multicolumn{1}{r|}{\textbf{2.152}} & \multicolumn{1}{r|}{3.059} & 4.735 \\ \hline
\end{tabular}
\caption{\textbf{Average size for different $\sigma$ on CIFAR10 dataset, with RSCP and RSCP+.} Our methods consistently improve over the baseline.}
\label{tab:resultsWithDifferentSigma}
\end{table}
\subsection{Experiments with different $\epsilon$ and $\sigma$.}
In this section, we extend our experiment on the CIFAR10 dataset to different $\epsilon$ and $\sigma$. We conduct experiments under two settings: (1) We fixed $\frac{\epsilon}{\sigma}=\frac{1}{2}$, and used $\epsilon = 0.125, 0.25, 0.5$. The results are in \cref{tab:resultsWithDifferentEps} (2) We fixed $\epsilon=0.125$ and tried $\sigma=0.125, 0.25, 0.5, 1$. The results are in \cref{tab:resultsWithDifferentSigma}. Both experiments are carried out on the CIFAR10 dataset with HPS score. For the \algoname~benchmark, we apply the Empirical Bernstein's improvement introduced in \cref{sec:ImprovementWithBern}. Other hyperparameters are the same as the experiments we present in \cref{Sec: Exp}. From \cref{tab:resultsWithDifferentEps,tab:resultsWithDifferentSigma}, we could see that our methods consistently give a smaller average size, compared with the baseline \cite{gendler2021adversarially}. This supports our claim that our methods could improve the efficiency, for both RSCP and RSCP+.
\subsection{Abalation study}
In this section, we study the necessity of robust conformal training introduced in \cref{subsec:robusttr} by comparing it with Conformal Training (ConfTr)\cite{stutz2021learning}. The experiments are carried out using the original RSCP benchmark, so we introduce RSCP \cite{gendler2021adversarially} as a baseline. As illustrated in \cref{tab:ablation}, ConfTr generates up to 66.08\% larger prediction set than no conformal training, which is undesired, while with our RCT, we can reduce the size up to 17.52\%, which supports the necessity of our RCT method. The reason may be that conformal training is not specifically crafted for randomized smoothing.
\begin{table*}[htpb]
\centering
\begin{tabular}{l|ll|ll}
\hline

                &       \multicolumn{2}{c}{APS}               &         \multicolumn{2}{c}{HPS}             \\ \hline
Training Method          & Coverage & Average size $\downarrow$ & Coverage & Average size $\downarrow$ \\ \midrule
Baseline\cite{cohen2019certified}        & 95.68\%  &   2.751         & 93.54\%  &            2.108   \\
ConfTr\cite{stutz2021learning}      &    90.21\% & 3.509 (+27.55\%)        & 90.39\%   & 3.501 (+66.08\%)      \\
RCT (Ours) & 93.15\%         & \textbf{2.269 (-17.52\%)}        &   91.25\%       & \textbf{1.929 (-8.49\%)} \\
\hline
\end{tabular}
\caption{\textbf{Ablation study:} We compare our robust conformal prediction (RCT) with conformal training (ConfTr) \cite{stutz2021learning} on CIFAR10 dataset. It could be seen that applying ConfTr to train a base classifier for RSCP results in performance even worse than the baseline which uses the original weights by \citet{cohen2019certified}, supporting the necessity of our method.}
\label{tab:ablation}
\end{table*}
\subsection{Standard deviation of experiment results}
\label{app:cleanResult}
In this section, we present the standard deviation of our experiment results.
% \begin{table}[htpb!]
% \centering
% \scalebox{0.75}{
% \begin{tabular}{l|l|l|l|l|l|l}
% \hline
% Base score      & \multicolumn{3}{c|}{HPS} &  \multicolumn{3}{c}{APS}                                                    \\ \hline
% Dataset         & CIFAR10     & CIFAR100    & ImageNet  & CIFAR10     & CIFAR100    & ImageNet\\ \hline
% Baseline  & 10       & 100    & 1000    &10&100& 1000 \\ 
% \cite{gendler2021adversarially} &&&&&&\\
% PTT(Ours)        &\textbf{2.152} $\pm$ 0.043 &23.47 $\pm$ 2.798& \textbf{58.09} $\pm$ 136.0   & 2.675 $\pm$ 0.064 & 21.77 $\pm$ 3.212& \textbf{84.72} $\pm$ 4.678\\ 
% PTT+RCT(Ours)    &\textbf{2.152} $\pm$ 0.043 &\textbf{16.66} $\pm$ 1.947& -       & \textbf{2.576} $\pm$ 0.040 & \textbf{19.25} $\pm$ 1.896
% & -\\ \hline
% \end{tabular}}
% \caption{Average size of \algoname~on three vision datasets with standard deviation. The PTT method with HPS on the ImageNet dataset has a much larger standard deviation because one of the 50 experiments gives trivial results of size 1000.
% We argue that this is rare and could be alleviated by using more Monte Carlo samples. }
% \label{tab:RSCPplusMainStd}
% \end{table}

\begin{table}[!htpb]
\centering
\begin{tabular}{l|l|l|l|l}
\hline
Base score      & \multicolumn{2}{c|}{HPS} &  \multicolumn{2}{c}{APS}     \\ 
\hline
Dataset         & CIFAR10     & CIFAR100    &  CIFAR10     & CIFAR100     \\ 
\hline
Baseline \cite{gendler2021adversarially}  & 10       & 100        &10&100 \\ 
\hdashline
PTT \textbf{(Ours)}        &\textbf{2.294 $\pm$ 0.051}&26.06 $\pm$ 15.48& \textbf{2.685 $\pm$ 0.037} & 21.96 $\pm$ 1.055 \\ 
PTT+RCT \textbf{(Ours)} &\textbf{2.294 $\pm$ 0.051}&\textbf{18.30 $\pm$ 1.08}      & 2.824 $\pm$ 0.047 & \textbf{20.01 $\pm$ 0.94 } \\ 
% PTT + Bern \textbf{(Ours)}& 2.174 & 24.59 & 2.584 & 21.10 \\
% PTT+RCT+Bern \textbf{(Ours)} & 2.174 & 17.30 & 2.695 & 18.21 \\
\hline
\end{tabular}
\caption{\textbf{Average size of \algoname~on CIFAR10 and CIFAR100. with standard deviation.} For CIFAR10 and CIFAR100, $\epsilon=0.125$ and $\sigma=0.25$. For the large standard deviation with PTT on CIFAR10 using HPS as a base score, the reason is the predictor gives a trivial prediction during 50 tests. We argue that the probability is small and could be alleviated by applying RCT or increasing the number of Monte Carlo samples.}
\label{tab:RSCPplusSplitCifarStd}
\end{table}

\begin{table}[!htpb]
\centering
\scalebox{1}{
\begin{tabular}{l|l|l}
\hline
Base score      & HPS &  APS \\ \hline
Baseline \cite{gendler2021adversarially} & 1000   & 1000 \\  
\hdashline
PTT \textbf{(Ours)}    & 1000 $\pm$ 0 & 94.66 $\pm$ 130.97 \\
PTT + Bernstein \textbf{(Ours)} & \textbf{59.12 $\pm$ 135.87} & \textbf{70.87 $\pm$ 3.69}\\
\hline
\end{tabular}
}
\caption{\textbf{Average size of \algoname~on ImageNet. with standard deviation.} For ImageNet, $\epsilon=0.25$ and $\sigma=0.5$. Some entries in the table have large standard deviations. The reason is the predictor gives a trivial prediction during 50 tests. We argue that the probability is small and could be alleviated by increasing the number of Monte Carlo samples.}
\label{tab:RSCPplusSplitInetStd}
\end{table}
\begin{table*}[hbt!]
\centering
\begin{tabular}{l|cc|cc}
\hline
     &   \multicolumn{2}{c}{APS}                  &          \multicolumn{2}{c}{HPS}     \\ \hline
Method          & Coverage & Average size $\downarrow$ & Coverage & Average size $\downarrow$  \\ \midrule
Baseline       & 95.68\% $\pm$ 0.30\% &   2.751 $\pm$ 0.031        & 93.54\% $\pm$ 0.48\% &            2.108 $\pm$ 0.036\\
\cite{gendler2021adversarially} &&&&\\
 \midrule
PTT (Ours)        & 92.06\% $\pm$ 0.40\% & 2.202 $\pm$ 0.029 & 90.90\% $\pm$ 0.61\% &   \textbf{1.779} $\pm$ 0.029         \\
PTT + RCT (Ours) & 91.15\% $\pm$ 0.48\%  & \textbf{1.895} $\pm$ 0.033 &   91.19\% $\pm$ 0.51\%      & 1.864 $\pm$ 0.027\\             
\hline
\end{tabular}
\caption{Coverage and average size on CIFAR 10 with RSCP with standard deviation.} 
\label{tab:cifar10_std}
\end{table*}
\begin{table*}[hpbt!]
\centering
\begin{tabular}{l|cc|cc}
\hline
     &   \multicolumn{2}{c}{APS}                  &          \multicolumn{2}{c}{HPS}     \\ \hline
Method          & Coverage & Average size $\downarrow$ & Coverage & Average size $\downarrow$  \\ \midrule
Baseline        &  93.53\% $\pm$ 0.44\% & 16.19   $\pm$ 0.47    &  93.43\% $\pm$ 0.47\% & 14.30    $\pm$ 0.43   \\ 
\cite{gendler2021adversarially} &&&&\\
 \midrule
PTT (Ours)          &  91.26\% $\pm$ 0.55\% & 12.78 $\pm$ 0.34        & 90.87\% $\pm$ 0.61\%  & 10.78 $\pm$ 0.36           \\
PTT + RCT (Ours)  &  91.07\%  $\pm$ 0.59\% & \textbf{12.32} $\pm$ 0.38      & 90.83\%  $\pm$ 0.58\% & \textbf{10.61} $\pm$ 0.32\\             
\hline
\end{tabular}
\caption{Coverage and average size on CIFAR 100 with RSCP with standard deviation.}
\label{tab:cifar100_std}
\end{table*}
\begin{table*}[hpbt!]
\centering
\begin{tabular}{l|cl|cl}
\hline
                &           \multicolumn{2}{c}{APS}                &         \multicolumn{2}{c}{HPS}              \\ \hline
Method          & Coverage & Average size $\downarrow$ & Coverage & Average size $\downarrow$ \\ \midrule
Baseline     &  95.36\%   $\pm$ 0.14\%      & 51.72 $\pm$ 0.819       &   94.12\%  $\pm$ 0.21\%      & 16.64  $\pm$ 0.387 \\
\cite{gendler2021adversarially} &&&&\\
PTT (Ours)         &   91.17\%   $\pm$ 0.19\%     & \textbf{30.20}  $\pm$ 0.609        &   90.53\%  $\pm$ 0.23\%    & \textbf{8.52} $\pm$ 0.175   \\
\hline
\end{tabular}
\caption{Coverage and average set size results for ImageNet with RSCP with standard deviation.} 
\label{tab:imagenet_std}
\end{table*}
% Although using vanilla conformal prediction on clean examples seems to be attractive since it generates smaller prediction sets, in practice this is difficult because the model could not know the input is clean or adversarial. Therefore, to ensure robustness against adversarial examples, robust conformal prediction needs to be applied on all examples, no matter they are perturbed or clean. Hence, the performance of robust conformal prediction on clean data is also an important metric. In this section, we present the coverage and average prediction set size on clean data in \cref{tab:cifar_clean,tab:imagenetClean}.
% We could see that similar to the adversarial results, our methods generate prediction sets with a smaller size on average, showing that our methods could beat baseline on clean inputs as well.
% \input{Tab_cifar_clean}
% \input{Tab_imagenet_clean}
\subsection{Extended study on the impact of $N_{\text{MC}}$}
\begin{table}[htp!]
\centering
\begin{tabular}{l|cccc}
\hline
$N_{MC}$  & 128 & 256 & 512 & 1024 \\
\hline
CIFAR100 & 39.95 & 26.06 & 13.60 & \textbf{12.43} \\ \hline
ImageNet & 1000 & 59.12 & 19.68 & \textbf{10.34} \\ \hline
\end{tabular}
\caption{\textbf{Average size vs. Number of Monte Carlo samples $N_{MC}$.} The experiment is conducted with PTT method. The base score is HPS. It could be seen that by increasing the number of Monte Carlo examples,
we could further improve the efficiency of \algoname, at the cost of higher computational expense.}
\label{tab:SizeVsNmcExtended}
\end{table}
In \cref{Sec: Exp}, we study the impact of the number of Monte Carlo samples $N_{\text{MC}}$ on \algoname~on the CIFAR10 dataset. In this section, we extend the study to CIFAR100 and ImageNet. As shown in \cref{tab:SizeVsNmcExtended}, the average size could be reduced by increasing the number of Monte Carlo examples.
\subsection{Impact of $T$}
In this section, we conduct an empirical study on the impact of hyperparameter $T$ in PTT. We choose the CIFAR10 dataset with HPS as the base score. The results are shown in \cref{fig:sizeVSt}. It could be seen that when choosing $T$ large enough, our method is not sensitive to the choice of $T$.
\begin{figure}[!htp]
    \centering
    \includegraphics[scale=0.6]{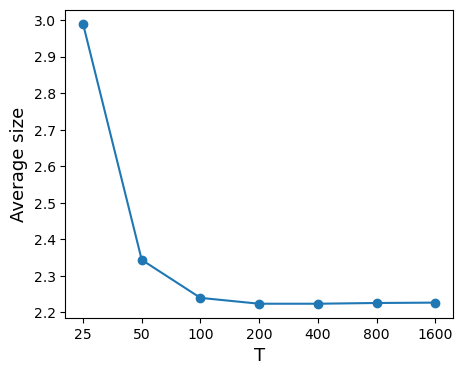}
    \caption{Average size of prediction set vs. different $T$.}
    \label{fig:sizeVSt}
\end{figure}

\newpage
\newpage
\subsection{Computational overhead of our methods}
 In this section, we discuss the computational overhead of our methods, PTT and RCT. The overhead could be divided into two parts: training cost and test-time cost.
 \paragraph{Training cost.} Our PTT method is training-free, hence it doesn't introduce any cost for training. For RCT, we compare the training time of RCT and ConfTr~\cite{stutz2021learning} on the CIFAR10 dataset. For both of the methods, the last linear layer is finetuned for $N_{\text{epoch}}=150$ epochs on 2 NVIDIA V100 GPUs. RCT takes around 95 minutes while ConfTr takes around 50 minutes. We argue that this computational overhead stems from randomized smoothing: training base model for randomized smoothing requires sampling a batch of examples for each training example \cite{cohen2019certified,salman2019provably,zhai2020macer}, which significantly increases the computational expanse.
 
 \paragraph{Test-time cost.}
 During test time, our RCT method is equivalent to vanilla RSCP~\cite{gendler2021adversarially}. No additional expense is needed. For PTT, since a transformation is applied on base score $S$ at test time, there would be a computational overhead: see \cref{fig:computational_overhead}.
 \begin{figure}[!h]
    \centering
    \includegraphics[scale=0.5]{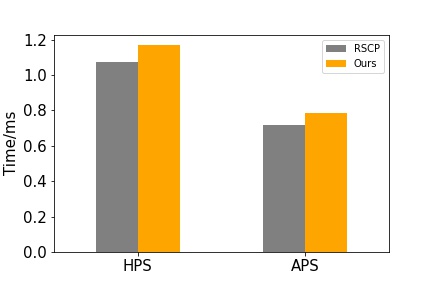}
    \caption{Score computation time for each test example on CIFAR10 dataset. Comparing vanilla RSCP (RSCP) and RSCP with our training-free transformation (ours), it could be seen that the computational overhead is about 10\%. Note that our robust conformal training method does not introduce additional computational cost at the test stage. The experiment is performed on 2 NVIDIA V100 GPU.}
    \label{fig:computational_overhead}
\end{figure}
 Diving deeper into each step, we could see that the computational cost for each example is $\Theta(N_{\text{MC}}K\log |D_{holdout}|)$ for ranking transformation and $\Theta(N_{\text{MC}}K)$ for sigmoid transformation. Therefore, the overall cost would be $\Theta(N_{\text{MC}}K\log |D_{holdout}|)$. We emphasize that this cost does not scale with the cost of the base model, which is desirable because it enables users to employ arbitrarily large base models without increasing the computational overhead of PTT.

\newpage

\section{Other related works in robust conformal prediction} 
Besides \citet{gendler2021adversarially}, there are other related works that studied robust conformal prediction
\cite{einbinder2022conformal,ghosh2023probabilistically,bastani2022practical}. \citet{bastani2022practical} focused on an online setting where the defender could interact with the attacker and collect information on adversarial distribution from history inputs, which is different from our setting which assumes the defender does not have information on the distribution of adversarial noises, except the assumption that they have limited norm. 
\citet{ghosh2023probabilistically} studied probabilistically robust conformal prediction. Formally, they studied the following guarantee:
\begin{equation}
    \mathbb{P}_{X, Y, \delta}\{\testy \in C(\testadv=\testx + \delta)\} \geq 1-\alpha.
\end{equation}
As \citet{ghosh2023probabilistically} pointed out, this could be regarded as a relaxed version of the adversarial robust coverage guarantee defined in \cref{eq:rscp_coverage}: probabilistically robust conformal prediction provides a guarantee for perturbation $\delta$ following a specific distribution, while our \algoname~provide adversarial robustness guarantee for any perturbation that satisfies $\|\delta\|_2 \leq \epsilon$.

\stopcontents[appendices]

\newpage
\section{Table of symbols}
\begin{table}[!hp]
\centering
\begin{tabular}{ll}
Symbol & Description \\ \hline
\( C(x) \) & Prediction set of input $x$ \\
\( (\testx, \testy) \) & Test input and corresponding label \\
\( P_{xy} \) & Joint distribution of input $x$ and label $y$ \\
\( \testadv \) & Perturbed test input \\
\( 1 - \alpha \) & Target level of coverage \\
\( \epsilon \) & Magnitude of adversarial noise \\
\(  C_{\epsilon}(x) \) & RSCP prediction set with noise level $\epsilon$ \\
\( D_{\text{train}} \) & Training dataset \\
\(  D_{\text{cal}} \) & Calibration dataset \\
\( \hat{\pi} \) & Probability output of classifier \\
\( S \) & (Non-)conformity score function\\
\( \tilde{S} \)  & Randomized smoothed score defined by RSCP \cite{gendler2021adversarially} \\
\( \tau \)  & Threshold of vanilla conformal prediction \\
\( \tau_{\text{adj}} \)  & Adjusted threshold introduced by RSCP \cite{gendler2021adversarially} \\
\( \Phi \) & Cumulative density function of standard Gaussian distribution \\
\( S_{\text{RS}} \)  & Randomized smoothed score defined in \cref{eq:scorers}\\
\( \hat{S}_{\text{RS}} \)  & Monte Carlo estimator for $S_{\text{RS}}$\\
\( \sigma \)  & Smoothing strength of randomized smoothing \\
\( Q_p(H) \)  &  The $p(1 + 1 / |H|)$-empirical quantile of a set $H$ \\
\( C_{\epsilon}^{+}(x) \)  & Prediction set of RSCP+ with noise level $\epsilon$  \\
\( \alpha_{\text{gap}} \)  & Coverage gap introduced by RSCP \cite{gendler2021adversarially} \\
\( \Phi_{\tilde{S}} \)  & Cumulative density function of $\tilde{S}$\\
\( \posttrans \)  & Transformation on base conformity score  \\
\( N_{\text{train}} \)  & Number of Monte-Carlo examples used in RCT \\
\( N_{\text{MC}} \)  & Number of Monte-Carlo examples in calibration and test stages \\
\( N_{\text{split}} \)  & Number of random calibration-test split in evaluation \\
\( b, T \)  & Hyper-parameters of Sigmoid transformation \\
\end{tabular}
\caption{Symbol table}
\label{tab:symbols}
\end{table}

\end{document}